\definecolor{light-gray}{gray}{0.85}
\newcommand{\paren}[1]{{\left( #1 \right)}}
\newcommand{\brac}[1]{{\left[ #1 \right]}}
\newcommand{\set}[1]{{\left\{ #1 \right\}}}
\newcommand{\defeq}{\mathrel{\mathop:}=}
\newcommand{\mat}[1]{\ensuremath{\mathbf{#1}}}
\newcommand{\argmin}{\mathop{\rm argmin}}
\newcommand{\argmax}{\mathop{\rm argmax}}
\newcommand{\trans}{^{\top}}
\newcommand{\E}{\mathbb{E}}
\newcommand{\D}{\mathbb{D}}
\renewcommand{\P}{\mathbb{P}}
\newcommand{\Var}{\text{Var}}
\renewcommand{\Pr}{\mathbb{P}}
\newcommand{\bigO}{\mathcal{O}}
\newcommand{\Phat}{\widehat{\P}}
\newcommand{\rhat}{\widehat{r}}
\newcommand{\Qhat}{\widehat{Q}}
\newcommand{\Vhat}{\widehat{V}}
\newcommand{\pihat}{{\hat{\pi}}}
\newcommand{\Qt}{{\widetilde{Q}}}
\newcommand{\Vt}{{\widetilde{V}}}
\newcommand{\eps}{\epsilon}
\newcommand{\cO}{\mathcal{O}}
\newcommand{\tlO}{\mathcal{\tilde{O}}}
\newcommand{\N}{\mathbb{N}}
\newcommand{\X}{\mat{X}}
\newcommand{\Y}{\mat{Y}}
\newcommand{\cF}{\mathcal{F}}
\newcommand{\cM}{\mathcal{M}}
\newcommand{\cD}{\mathcal{D}}
\newcommand{\cS}{\mathcal{S}}
\newcommand{\cA}{\mathcal{A}}
\newcommand{\cB}{\mathcal{B}}
\newcommand{\minone}[1]{\max\{#1,1\}}
\newcommand{\astar}{\bm{a}^\star}
\newcommand{\bstar}{\bm{b}^\star}
\newcommand{\nw}{n_{\rm wrong}}
\newcommand{\er}[1]{{\rm error}_{#1}}
\renewcommand{\circ}{\diamond}
\newtheorem{theorem}{Theorem}
\newtheorem{lemma}[theorem]{Lemma}
\newtheorem{remark}[theorem]{Remark}
\newtheorem{claim}[theorem]{Claim}
\newtheorem{proposition}[theorem]{Proposition}
\theoremstyle{definition}
\newtheorem{definition}[theorem]{Definition}
\newcommand{\setto}{\leftarrow}
\newcommand{\up}[1]{\overline{#1}}
\newcommand{\low}[1]{\underline{#1}}
\newcommand{\Reg}{{\rm Regret}}
\newcommand{\MG}{{\rm MG}}
\newcommand{\nash}{{\star}}
\newcommand{\CE}{\textsc{CE}}
\newcommand{\CCE}{\textsc{CCE}}
\newcommand{\NASH}{\textsc{Nash}}
\newcommand{\Bonus}{\textsc{Bonus}}
\newcommand{\Eq}{\textsc{Equilibrium}}
\newcommand{\ONASHVI}{Nash-VI}
\newcommand{\OVIZERO}{VI-Zero}
\DeclarePairedDelimiter\floor{\lfloor}{\rfloor}
\colorlet{linkequation}{blue}
\begin{document}

\title{\textbf{A Sharp Analysis of Model-based Reinforcement Learning\\ with Self-Play}}

\author{
  Qinghua Liu\thanks{Princeton University. Email: \texttt{qinghual@princeton.edu
}}
  \and
  Tiancheng Yu\thanks{MIT. Email: \texttt{yutc@mit.edu}}
  \and
  Yu Bai\thanks{Salesforce Research. Email: \texttt{yu.bai@salesforce.com}}
  \and
  Chi Jin\thanks{Princeton University. Email: \texttt{chij@princeton.edu}}
}
\date{\today}

\maketitle

\newcommand{\chijin}[1]{\noindent{\textcolor{magenta}{\{{\bf CJ:} \em #1\}}}}
\newcommand{\yubai}[1]{\noindent{\textcolor{blue}{\{{\bf YB:} \em #1\}}}}
\newcommand{\tiancheng}[1]{\noindent{\textcolor{red}{\{{\bf TY:} \em #1\}}}}
\newcommand{\qinghua}[1]{\noindent{\textcolor{cyan}{\{{\bf QL:} \em #1\}}}}

\allowdisplaybreaks

\begin{abstract}
  Model-based algorithms---algorithms that explore the environment through building and utilizing an estimated model---are widely used in reinforcement learning practice and theoretically shown to achieve optimal sample efficiency for single-agent reinforcement learning in Markov Decision Processes (MDPs). However, for multi-agent reinforcement learning in Markov games, the current best known sample complexity for model-based algorithms is rather suboptimal and compares unfavorably against recent model-free approaches.
  
  In this paper, we present a sharp analysis of model-based self-play algorithms for multi-agent Markov games. We design an algorithm \emph{Optimistic Nash Value Iteration} (Nash-VI) for two-player zero-sum Markov games that is able to output an $\epsilon$-approximate Nash policy in $\tlO(H^3SAB/\epsilon^2)$ episodes of game playing, where $S$ is the number of states, $A,B$ are the number of actions for the two players respectively, and $H$ is the horizon length. This significantly improves over the best known model-based guarantee of $\tlO(H^4S^2AB/\epsilon^2)$, and is the first that matches the information-theoretic lower bound $\Omega(H^3S(A+B)/\epsilon^2)$ except for a $\min\set{A,B}$ factor. In addition, our guarantee compares favorably against the best known model-free algorithm if $\min\set{A,B}=o(H^3)$, and outputs a single Markov policy while existing sample-efficient model-free algorithms output a nested mixture of Markov policies that is in general non-Markov and rather inconvenient to store and execute. We further adapt our analysis to designing a provably efficient task-agnostic algorithm for zero-sum Markov games, and designing the first line of provably sample-efficient algorithms for multi-player general-sum Markov games.

\end{abstract}
\section{Introduction} \label{sec:intro}
This paper is concerned with the problem of multi-agent reinforcement learning (multi-agent RL), in which multiple agents learn to make decisions in an unknown environment in order to maximize their (own) cumulative rewards. Multi-agent RL has achieved significant recent success in traditionally hard AI challenges including large-scale strategy games (such as GO)~\citep{silver2016mastering, silver2017mastering}, real-time video games involving team play such as Starcraft and Dota2~\citep{openaidota, vinyals2019grandmaster}, as well as behavior learning in complex social scenarios~\citep{baker2020emergent}. Achieving human-like (or super-human) performance in these games using multi-agent RL typically requires a large number of samples (steps of game playing) due to the necessity of exploration, and how to improve the sample complexity of multi-agent RL has been an important research question.


One prevalent approach towards solving multi-agent RL is \emph{model-based} methods, that is, to use the existing visitation data to build an estimate of the model (i.e. transition dynamics and rewards), run an offline planning algorithm on the estimated model to obtain the policy, and play the policy in the environment. Such a principle underlies some of the earliest single-agent online RL algorithms such as E3~\citep{kearns2002near} and RMax~\citep{brafman2002r}, and is conceptually appealing for multi-agent RL too since the multi-agent structure does not add complexity onto the model estimation part and only requires an appropriate multi-agent planning algorithm (such as value iteration for games~\citep{shapley1953stochastic}) in a black-box fashion. On the other hand, \emph{model-free} methods do not directly build estimates of the model, but instead directly estimate the value functions or action-value (Q) functions of the problem at the optimal/equilibrium policies, and play the greedy policies with respect to the estimated value functions. Model-free algorithms have also been well developed for multi-agent RL such as friend-or-foe Q-Learning~\citep{littman2001friend} and Nash Q-Learning~\citep{hu2003nash}. 


While both model-based and model-free algorithms have been shown to be provably efficient in multi-agent RL in a recent line of work~\citep{bai2020provable,xie2020learning,bai2020near}, a more precise understanding of the optimal sample complexities within these two types of algorithms (respectively) is still lacking. In the specific setting of two-player zero-sum Markov games, the current best sample complexity for model-based algorithms is achieved by the VI-ULCB (Value Iteration with Upper/Lower Confidence Bounds) algorithm~\citep{bai2020provable,xie2020learning}: 
In a tabular Markov game with $S$ states, $\set{A,B}$ actions for the two players, and horizon length $H$, 
VI-ULCB is able to find an $\epsilon$-approximate Nash equilibrium policy in $\tlO(H^4S^2AB/\epsilon^2)$ episodes of game playing.
However, compared with the information-theoretic lower bound $\Omega(H^3S(A+B)/\epsilon^2)$, this rate has suboptimal dependencies on all of $H$, $S$, and $A,B$. In contrast, the current best sample complexity for \emph{model-free} algorithms is achieved by Nash V-Learning~\citep{bai2020near}, which finds an $\epsilon$-approximate Nash policy in $\tlO(H^6S(A+B)/\epsilon^2)$ episodes. Compared with the lower bound, this is tight except for a ${\rm poly}(H)$ factor, which may seemingly suggest that model-free algorithms could be superior to model-based ones in multi-agent RL. However, such a conclusion would be in stark contrast to the single-agent MDP setting, where it is known that model-based algorithms are able to achieve minimax optimal sample complexities~\citep{jaksch2010near,azar2017minimax}. It naturally arises whether model-free algorithms are indeed superior in multi-agent settings, or whether the existing analyses of model-based algorithms are not tight. This motivates us to ask the following question: 


\begin{quote}
  \centering
  \emph{How sample-efficient are model-based algorithms
  in multi-agent RL?}
\end{quote}

\begin{table*}[t]
    \renewcommand{\arraystretch}{1.6} 
    \centering
   \caption{\label{table:rate} Sample complexity (the required number of episodes) for algorithms to find $\epsilon$-approximate Nash equlibrium policies in zero-sum Markov games: VI-explore and VI-UCLB \citep{bai2020provable}, OMVI-SM \citep{xie2020learning}, and Nash Q/V-learning \citep{bai2020near}. The lower bound is proved by \citet{jin2018q,domingues2020episodic}.   }
    \scalebox{0.9}{
    \begin{tabular}{|c|c|>{\centering\arraybackslash}m{0.7in}|>{\centering\arraybackslash}m{0.5in}|c|>{\centering\arraybackslash}m{0.9in}|}
      \hline
  & \textbf{Algorithm} & \textbf{Task-Agnostic} & \textbf{$\sqrt{T}$- Regret} & \textbf{Sample Complexity} & \textbf{Output Policy} \\ \hline
     \multirow{5}{*}{\shortstack{Model-based}} & VI-explore  & Yes &  & $\tlO(H^5S^2AB/\epsilon^2)$ & \multirow{5}{*}{\shortstack{a single\\Markov policy}} \\ \hhline{|~----~|}
     & VI-ULCB  & & Yes & $\tlO(H^4 S^2 AB/\epsilon^2)$  &  \\\hhline{|~----~|}
      & OMVI-SM   & & Yes & $\tlO(H^4S^3A^3B^3/\epsilon^2)$ & \\ \hhline{|~----~|}
      & \cellcolor{light-gray} Algorithm 2  & Yes &  & $\quad\tlO(H^4SAB/\epsilon^2)\quad$& \\ \hhline{|~----~|}
      & \cellcolor{light-gray} Algorithm 1  & & Yes &  $\tlO(H^3SAB/\epsilon^2)$ &\\ \hhline{|------|}
     \multirow{2}{*}{\shortstack{Model-free}} & Nash Q-learning   & & &  $\tlO(H^5SAB/\epsilon^2)$ & \multirow{2}{*}{\shortstack{a nested\\mixture of\\Markov policies}} \\ \hhline{|~----~|}
      & Nash V-learning  & & & $\quad\tlO(H^6S(A+B)/\epsilon^2)\quad$& \\ \hline
      & Lower Bound   & -  &  - & $\Omega(H^3S(A+B)/\epsilon^2)$ & - \\ \hline
		    \end{tabular}}
\end{table*}

In this paper, we advance the theoretical understandings of multi-agent RL by presenting a sharp analysis of model-based algorithms on Markov games. Our core contribution is the design of a new model-based algorithm \emph{Optimistic Nash Value Iteration} (Nash-VI) that achieves an almost optimal sample complexity for zero-sum Markov games and improves significantly over existing model-based approaches.
We summarize our main contributions as follows. A comparison between our and prior results can be found in Table~\ref{table:rate}.

\begin{itemize}
\item We design a new model-based algorithm \emph{Optimistic Nash Value Iteration} (Nash-VI) that provably finds $\epsilon$-approximate Nash equilibria for Markov games in $\tlO(H^3SAB/\epsilon^2)$ episodes of game playing (Section~\ref{sec:VI}). This improves over the best existing model-based algorithm by $O(HS)$ and is the first algorithm that matches the sample complexity lower bound except for a $\tlO(\min\set{A,B})$ factor, showing that model-based algorithms can indeed achieve an almost optimal sample complexity. Further, unlike state-of-the-art model-free algorithms such as Nash V-Learning~\citep{bai2020near}, this algorithm achieves in addition a $\tlO(\sqrt{T})$ regret bound, and outputs a simple Markov policy (instead of a nested mixture of Markov policies as returned by Nash V-Learning).
  
\item We design an alternative algorithm \emph{Optimistic Value Iteration with Zero Reward} (VI-Zero) that is able to perform task-agnostic (reward-free) learning for multiple Markov games sharing the same transition (Section~\ref{sec:reward-free}). For $N>1$ games with the same transition and different (known) rewards, VI-Zero can find $\epsilon$-approximate Nash policy for all games simultaneously in  $\tlO(H^4SAB\log N/\epsilon^2)$ episodes of game playing, which scales logarithmically in the number of games.
  
\item We design the first line of sample-efficient algorithms for \emph{multi-player} general-sum Markov games. In a multi-player game with $M$ players and $A_i$ actions per player, we show that an $\epsilon$ near-optimal policy can be found in $\tlO(H^4S^2\prod_{i\in[M]}A_i/\epsilon^2)$ episodes, where the desired optimality can be either one of Nash equilibrium, correlated equilibrium (CE), or coarse correlated equilibrium (CCE). We achieve this guarantee by either a multi-player version of Nash-VI or a multi-player version of reward-free value iteration (Section~\ref{section:multi-player-short}).
\end{itemize}





\subsection{Related work} \label{sec:related}

\paragraph{Markov games.}
Markov games (or stochastic games) are proposed in the early 1950s~\citep{shapley1953stochastic}. They are widely used to model multi-agent RL. Learning the Nash equilibria of Markov games has been studied in~\citet{littman1994markov, littman2001friend, hu2003nash,hansen2013strategy, lee2020linear}, where the transition matrix and reward are assumed to be known, or in the asymptotic setting where the number of data goes to infinity.  These results do not directly apply to the non-asymptotic setting where the transition and reward are unknown and only a limited amount of data are available for estimating them.

Another line of works make certain strong reachability assumptions under which sophisticated exploration strategies are not required. A prevalent approach is to assume access to simulators (generative models) that enable the agent to directly sample transition and reward information for any state-action pair. In this setting,~\citet{jia2019feature, sidford2019solving, zhang2020model}~provide non-asymptotic bounds on the number of calls to the simulator for finding an $\epsilon$-approximate Nash equilibrium. \citet{wei2017online} study Markov games under an alternative assumption that no matter what strategy one agent sticks to, the other agent can always reach all states by playing a certain policy.

\paragraph{Non-asymptotic guarantees without reachability assumptions.}
Recent works of \citet{bai2020provable,xie2020learning} provide the first line of non-asymptotic sample complexity guarantees for learning Markov games without reachability assumptions.
More recently, \citet{bai2020near} propose two model-free algorithms---Nash Q-Learning and Nash V-Learning with better sample complexity guarantees. In particular, the Nash V-learning algorithm achieves near-optimal dependence on $S$, $A$ and $B$. However, the dependence on $H$ is worse than our results and the output policy is a nested mixture, which is hard to implement. We compare our results with existing non-asymptotic guarantees in Table~\ref{table:rate}.

We remark that the classic R-max algorithm~\citep{brafman2002r} also provides provable guarantees for learning Markov games. However,~\citet{brafman2002r} use a weaker definition of regret \citep[similar to the online setting in  ][]{xie2020learning}, and consequently their result does not imply any sample complexity guarantee for finding Nash equilibrium policies.


\paragraph{Adversarial MDPs.}
    Another way to model the multi-player bahavior is to use \emph{adversarial MDPs}. Most works in this line consider the setting with adversarial reward \citep{zimin2013online, rosenberg2019online, jin2019learning}, where the reward can be manipulated by an adversary arbitrarily and the goal is to compete with the optimal (stationary) policy in hindsight. Learning adversarial MDPs with changing dynamics is computationally hard even under full-information feedback~\citep{yadkori2013online}. Notice these results also do not imply provable algorithms in our setting, because the opponent in Markov games can affect both the reward and the transition.

  \paragraph{Single-agent RL.}
There is a rich literature on reinforcement learning in MDPs \citep[see e.g.,][]{jaksch2010near, osband2014generalization, azar2017minimax, dann2017unifying, strehl2006pac, jin2018q}. MDPs are special cases of Markov games, where only a single agent interacts with a stochastic environment. For the tabular episodic setting with nonstationary dynamics and no simulators, the best sample complexity is $\tilde{\mathcal{O}}(H^3SA/\epsilon^2)$, achieved by the model-based algorithm in \citet{azar2017minimax} and the model-free algorithm in \citet{zhang2020almost}.
Both of them match the lower bound $\Omega(H^3SA/\epsilon^2)$ \citep{jin2018q}.

\paragraph{Reward-free  learning.}
\citet{jin2019reward} study a new paradigm of learning MDPs called reward-free learning, which is also known as the task-agnostic~\citep{zhang2020task} or reward-agnostic setting.  In this setting, the agent goes through a two-stage process. In the exploration phase the agent interacts with the environment without the guidance of any reward information,  and in the planning phase the reward information is revealed and the agent computes a policy based on the transition information collected in the exploration phase and the reward information revealed in the planning phase. 



\citet{jin2019reward} also propose an algorithm, which first finds a covering policy by maximizing the probability to reach each state separately and then collects data following this policy. \cite{zhang2020task} take a different approach by first running the optimistic Q-learning algorithm \citep{jin2018q} with zero reward to explore the environment, and then utilizing the trajectories collected to compute a policy in an incremental manner. \cite{wang2020reward} follow a similar scheme, but study reward-free exploration in linear-parametrized MDPs.

\section{Preliminaries} \label{sec:prelim}
In this paper, we consider Markov Games \citep[MGs,][]{shapley1953stochastic, littman1994markov}, which are also known as stochastic games in the literature. Markov games are the generalization of standard Markov Decision Processes (MDPs) into the multi-player setting, where each player seeks to maximize her own utility. For simplicity, in this section we describe the important special case of \emph{two-player zero-sum games}, and return to the general formulation in Section \ref{section:multi-player-short}. 

Formally, we consider the tabular episodic version of two-player zero-sum Markov game, which we denote as $\MG(H, \cS, \cA, \cB, \P, r)$. Here $H$ is the number of steps in each episode, $\cS$ is the set of states with $|\cS| \le S$, $(\cA, \cB)$ are the sets of actions of the max-player and the min-player respectively with $|\cA|\le A$ and $|\cB|\le B$, $\P = \{\P_h\}_{h\in[H]}$ is a collection of transition matrices, so that $\P_h ( \cdot | s, a, b) $ gives the distribution of the next state if action pair $(a, b)$ is taken at state $s$ at step $h$, and $r = \{r_h\}_{h\in[H]}$ is a collection of reward functions, where $r_h \colon \cS \times \cA \times \cB \to [0,1]$ is the deterministic reward function at step $h$.\footnote{We assume the rewards in $[0,1]$ for normalization. Our results directly generalize to randomized reward functions, since learning the transition is more difficult than learning the reward.} This reward represents both the gain of the max-player and the loss of the min-player, making the problem a zero-sum Markov game.

In each episode of this MG, we start with a \emph{fixed initial state} $s_1$. At each step $h \in [H]$, both
players observe state $s_h \in \cS$, and pick their own actions $a_h \in \cA$ and $b_h \in \cB$ simultaneously. Then, both players observe the actions of their opponent, receive reward
$r_h(s_h, a_h, b_h)$, and then the environment transitions to the next state
$s_{h+1}\sim\P_h(\cdot | s_h, a_h, b_h)$. The episode ends when
$s_{H+1}$ is reached.

\paragraph{Policy, value function.}
A (Markov) policy $\mu$ of the max-player is a collection of $H$ functions $\{ \mu_h: \cS \rightarrow \Delta_{\cA} \}_{h\in [H]}$, each mapping from a state to a distribution over actions. (Here $\Delta_{\cA}$ is the probability simplex over action set $\cA$.) Similarly, a policy $\nu$ of the min-player is a collection of $H$ functions $\{ \nu_h: \cS \rightarrow \Delta_{\cB} \}_{h\in [H]}$. We use the notation $\mu_h(a|s)$ and $\nu_h(b|s)$ to represent the probability of taking action $a$ or $b$ for state $s$ at step $h$ under Markov policy $\mu$ or $\nu$ respectively.

We use $V^{\mu, \nu}_{h} \colon \cS \to \mathbb{R}$ to denote the value
function at step $h$ under policy $\mu$ and $\nu$, so that
$V^{\mu, \nu}_{h}(s)$ gives the expected cumulative rewards
received under policy $\mu$ and $\nu$, starting from $s$ at step $h$:
\begin{equation} \label{eq:V_value}
\textstyle V^{\mu, \nu}_{h}(s) \defeq \E_{\mu, \nu}\left[\left.\sum_{h' =
        h}^H r_{h'}(s_{h'}, a_{h'}, b_{h'}) \right| s_h = s\right].
\end{equation}
We also define $Q^{\mu, \nu}_h:\cS \times \cA \times \cB \to \mathbb{R}$ to be the $Q$-value function at step $h$ so that
$Q^{\mu, \nu}_{h}(s, a, b)$ gives the cumulative rewards received under policy $\mu$ and $\nu$, starting from
$(s, a, b)$ at step $h$:
\begin{equation} \label{eq:Q_value}
\textstyle Q^{\mu, \nu}_{h}(s, a, b) \defeq  \E_{\mu,
    \nu}\left[\left.\sum_{h' = h}^H r_{h'}(s_{h'},  a_{h'}, b_{h'})
    \right| s_h = s, a_h = a, b_h = b\right].
\end{equation}
For simplicity, we define operator $\P_h$ as
$[\P_h V](s, a, b) \defeq \E_{s' \sim \P_h(\cdot|s, a,
  b)}V(s')$ for any value function $V$. We also use notation $[\D_\pi Q](s) \defeq \E_{(a, b) \sim \pi(\cdot, \cdot|s)} Q(s, a, b)$ for any action-value function $Q$. By definition of value functions, we have the Bellman
equation
\begin{align*}
  Q^{\mu, \nu}_{h}(s, a, b) =
  (r_h + \P_h V^{\mu, \nu}_{h+1})(s, a, b), \qquad   V^{\mu, \nu}_{h}(s)
  =  (\D_{\mu_h\times\nu_h} Q^{\mu, \nu}_h)(s) 
\end{align*}
for all $(s, a, b, h) \in \cS \times \cA \times \cB \times [H]$, and at the $(H+1)^{\text{th}}$ step we have $V^{\mu, \nu}_{H+1}(s) = 0$ for all $s \in \cS$.



\paragraph{Best response and Nash equilibrium.}
For any policy of the max-player $\mu$, there exists a \emph{best response} of the min-player, which is a policy
$\nu^\dagger(\mu)$ satisfying $V_h^{\mu, \nu^\dagger(\mu)}(s) = \inf_{\nu} V_h^{\mu, \nu}(s)$ for
any $(s, h) \in \cS \times [H]$. 
We denote $V_h^{\mu, \dagger} \defeq V_h^{\mu, \nu^\dagger(\mu)}$. By symmetry, we
can also define $\mu^\dagger(\nu)$ and $V_h^{\dagger, \nu}$.  
It is further known \citep[cf.~][]{filar2012competitive} that there exist policies $\mu^\star$, $\nu^\star$
that are optimal against the best responses of the opponents, in the sense that
\begin{equation*}
 \textstyle V^{\mu^\star, \dagger}_h(s) = \sup_{\mu}
      V^{\mu, \dagger}_h(s), 
      \qquad  V^{\dagger, \nu^\star}_h(s) = \inf_{\nu}
      V^{\dagger, \nu}_h(s),
  \qquad \textrm{for all}~(s, h).
\end{equation*}
We call these optimal strategies $(\mu^\star,\nu^\star)$ the Nash equilibrium of the Markov game, which satisfies the following minimax equation 
\footnote{The minimax theorem here is different from the one for matrix games, i.e. $\max_\phi\min_\psi \phi\trans A\psi = \min_\psi\max_\phi \phi\trans A\psi$ for any matrix $A$, since here $V^{\mu, \nu}_h(s)$ is in general not bilinear in $\mu, \nu$.}:
\begin{equation*}
\textstyle \sup_{\mu} \inf_{\nu} V^{\mu, \nu}_h(s) = V^{\mu^\star, \nu^\star}_h(s) = \inf_{\nu} \sup_{\mu} V^{\mu, \nu}_h(s).
\end{equation*}
Intuitively, a Nash equilibrium gives a solution in which no player has anything to gain by changing only her own policy. 
We further abbreviate the values of Nash equilibrium $V_h^{\mu^\star, \nu^\star}$ and $Q_h^{\mu^\star, \nu^\star}$ as $V_h^{\nash}$ and $Q_h^{\nash}$.
We refer readers to Appendix \ref{app:bellman} for Bellman optimality equations for (the value functions of) the best responses and the Nash equilibrium.



\paragraph{Learning Objective.}

We measure the suboptimality of any pair of general policies $(\hat{\mu}, \hat{\nu})$ using the gap between their performance and the performance of the optimal strategy (i.e., Nash equilibrium) when playing against the best responses respectively:
\begin{equation*}
\begin{aligned}
	  V^{\dagger, \hat{\nu}}_1(s_1) - V^{\hat{\mu}, \dagger}_1(s_1)  
	  =\brac{V^{\dagger, \hat{\nu}}_1(s_1) - V^{\nash}_1(s_1)} 
	  +  \brac{V^{\nash}_1(s_1) -   V^{\hat{\mu},\dagger}_1(s_1)}.
\end{aligned}
\end{equation*}

\begin{definition}[$\epsilon$-approximate Nash equilibrium] \label{def:epsilon_Nash} A pair of general policies $(\hat{\mu},\hat{\nu})$ is an \textbf{$\epsilon$-approximate Nash equilibrium}, if $V^{\dagger, \hat{\nu}}_1(s_1) - V^{\hat{\mu}, \dagger}_1(s_1) \le \epsilon$.
\end{definition}
\begin{definition}[Regret]
  Let $(\mu^k$, $\nu^k)$ denote the policies deployed by the algorithm
  in the $k^{\text{th}}$ episode. After a total of $K$ episodes, the regret is defined as
  \begin{equation*}
    \Reg(K) = \sum_{k=1}^K (V^{\dagger, \nu^k}_{1} - V^{\mu^k, \dagger}_{1}) (s_1).
  \end{equation*}
\end{definition}
One goal of reinforcement learning is to design algorithms for Markov games that can find an $\epsilon$-approximate Nash equilibrium using a number of episodes that is small in its dependency on $S,A,B,H$ as well as $1/\epsilon$ (PAC sample complexity bound). An alternative goal is to design algorithms for Markov games that achieves regret that is sublinear in $K$, and polynomial in $S, A, B, H$ (regret bound). We remark that any sublinear regret algorithm can be directly converted to a polynomial-sample PAC algorithm via the standard online-to-batch conversion \citep[see e.g.,][]{jin2018q}.

\newcommand{\Va}{\mathbb{V}}
\newcommand{\Vahat}{\widehat{\mathbb{V}}}

\newcommand{\Vdstar}{V^{\star}}
\newcommand{\Qdstar}{Q^{\star}}

\newcommand{\VpikA}{V^{\dagger,\nu^k}}
\newcommand{\VpikB}{V^{\mu^k,\dagger}}
\newcommand{\QpikA}{Q^{\dagger,\nu^k}}
\newcommand{\QpikB}{Q^{\mu^k,\dagger}}

\newcommand{\Vpik}{V^{\pi^k}}
\newcommand{\Qpik}{Q^{\pi^k}}

\section{Optimistic Nash Value Iteration} \label{sec:VI}

In this section, we present our main algorithm---Optimistic Nash Value Iteration (\ONASHVI), and provide its theoretical guarantee. 

\subsection{Algorithm description}
\label{sec:alg_des}
We describe our Nash-VI Algorithm \ref{algorithm:Nash-VI}. In each episode, the algorithm can be decomposed into two parts. 
\begin{itemize}
\item Line \ref{line:VI_start}-\ref{line:VI_end} (Optimistic planning from the estimated model): Performs value iteration with bonus using the empirical estimate of the transition $\hat{\P}$, and computes a new (joint) policy $\pi$ which is ``greedy'' with respect to the estimated value functions;
\item Line \ref{line:policy_start}-\ref{line:policy_end} (Play the policy and update the model estimate): Executes the policy $\pi$, collects samples, and updates the estimate of the transition $\hat{\P}$.
\end{itemize}
At a high-level, this two-phase strategy is standard in the majority of model-based RL algorithms, and also underlies provably efficient model-based algorithms such as UCBVI for single-agent (MDP) setting~\citep{azar2017minimax} and VI-ULCB for the two-player Markov game setting~\citep{bai2020provable}. However, VI-ULCB has two undesirable drawbacks: the sample complexity is not tight in any of $H$, $S$, and $A,B$ dependency, and its computational complexity is PPAD-complete (a complexity class conjectured to be computationally hard~\citep{daskalakis2013complexity}).


As we elaborate in the following, our Nash-VI algorithm differs from VI-ULCB in a few important technical aspects, which allows it to significantly improve the sample complexity over VI-ULCB, and ensures that our algorithm terminates in polynomial time.

Before digging into explanations of techniques, we remark that line \ref{line:term_start}-\ref{line:term_end} is only used for computing the output policies. It chooses policy $\pi^{\text{out}}$ to be the policy in the episode with minimum gap $(\up{V}_1 - \low{V}_1)(s_1)$. Our final output policies $(\mu^{\text{out}}, \nu^{\text{out}})$ are simply the \emph{marginal policies} of $\pi^{\text{out}}$. That is, for all $(s, h) \in \cS \times [H]$, $\mu_h^{\text{out}}(\cdot|s) := \sum_{b\in \cB}\pi_h^{\text{out}}(\cdot, b|s)$, and $\nu_h^{\text{out}}(\cdot|s) := \sum_{a\in \cA}\pi_h^{\text{out}}(a, \cdot|s)$.

\subsubsection{Overview of techniques}

\paragraph{Auxiliary bonus $\gamma$.} The major improvement over VI-ULCB~ \citep{bai2020provable} comes from the use of a different style of bonus term $\gamma$ (line \ref{line:gamma}), in addition to the standard bonus $\beta$ (line \ref{line:beta}), in value iteration steps (line \ref{line:Qup}-\ref{line:Qlow}). 
This is also the main technical contribution of our \ONASHVI~algorithm. This auxiliary bonus $\gamma$ is computed by applying the empirical transition matrix $\hat{\P}_h$ to the gap at the next step $\up{V}_{h+1} - \low{V}_{h+1}$, This is very different from standard bonus $\beta$, which is typically designed according to the concentration inequalities. 

The main purpose of these value iteration steps (line \ref{line:Qup}-\ref{line:Qlow}) is to ensure that the estimated values $\up{Q}_h$ and $\low{Q}_h$ are with high probability the upper bound and the lower bound of the $Q$-value of the current policy when facing best responses 
(see Lemma \ref{lem:sandwichVpik_Hoeffding} and \ref{lem:sandwichVpik_Bernstein} for more details) \footnote{We remark that the current policy is stochastic. This is different from the single-agent setting, where the algorithm only seeks to provide an upper bound of the value of the optimal policy where the optimal policy is not random. Due to this difference, the techniques of  \citet{azar2017minimax} cannot be directly applied here.}. To do so, prior work \citep{bai2020provable} only adds bonus $\beta$, which needs to be as large as $\tilde{\Theta}(\sqrt{S/t})$. In contrast, the inclusion of auxiliary bonus $\gamma$ in our algorithm allows a much smaller choice for bonus $\beta$---which scales only as $\tlO(\sqrt{1/t})$---while still maintaining valid confidence bounds. This technique alone brings down the sample complexity to $\tlO(H^4 S AB/\epsilon^2)$, removing an entire $S$ factor compared to VI-ULCB. Furthermore, the coefficient in $\gamma$ is only $c/H$ for some absolute constant $c$, which ensures that the introduction of error term $\gamma$ would hurt the overall sample complexity only up to a constant factor.

\begin{algorithm}[t]
   \caption{Optimistic Nash Value Iteration (\ONASHVI)}
   \label{algorithm:Nash-VI}
\begin{algorithmic}[1]
   \STATE {\bfseries Initialize:} for any $(s, a, b, h)$,
   $\up{Q}_{h}(s,a, b)\setto H$, \\
   $\low{Q}_{h}(s,a, b)\setto 0$, $\Delta \setto H$, 
   $N_{h}(s,a, b)\setto 0$.
   \FOR{episode $k=1,\dots,K$}
   \FOR{step $h=H,H-1,\dots,1$} \label{line:VI_start}
   \FOR{$(s, a, b)\in\cS\times\cA\times \cB$}
   \STATE $t \setto N_{h}(s, a, b)$.
   \IF{$t>0$}
   \STATE $\beta \setto \Bonus(t, \widehat{\Va}_h [(\up{V}_{h+1} + \low{V}_{h+1})/2](s, a, b))$. \label{line:beta}
   \STATE $\gamma \setto (c/H)\widehat{\P}_h (\up{V}_{h+1} - \low{V}_{h+1})(s, a, b)$. \label{line:gamma}
   \STATE $\up{Q}_{h}(s, a, b)\setto \min\{(r_h +
   \widehat{\P}_{h} \up{V}_{h+1})(s, a, b)+  \gamma + \beta, H\}$. \label{line:Qup}
   \STATE $\low{Q}_{h}(s, a, b)\setto \max\{(r_h +
   \widehat{\P}_{h} \low{V}_{h+1})(s, a, b) - \gamma - \beta, 0\}$. \label{line:Qlow}
   \ENDIF
   \ENDFOR
   \FOR{$s \in \cS$}
   \STATE $\pi_h(\cdot, \cdot|s) \setto \CCE (\up{Q}_h(s, \cdot, \cdot), \low{Q}_h(s, \cdot, \cdot))$. \label{line:greedy}
   \STATE $\up{V}_h(s) \leftarrow (\D_{\pi_h}\up{Q}_h)(s)$; $\quad \low{V}_h(s) \leftarrow (\D_{\pi_h} \low{Q}_h)(s)$. \label{line:VI_end}
   \ENDFOR
   \ENDFOR
   \IF{$(\up{V}_1 - \low{V}_1)(s_1) < \Delta$} \label{line:term_start}
   \STATE $\Delta \leftarrow (\up{V}_1 - \low{V}_1)(s_1)$ and $\pi^{\text{out}} \leftarrow \pi$. \label{line:term_end}
   \ENDIF
   \FOR{step $h=1,\dots, H$} \label{line:policy_start}
   \STATE take action $(a_h, b_h) \sim  \pi_h(\cdot, \cdot| s_h)$, observe reward $r_h$ and next state $s_{h+1}$. 
   \label{line:execute-cce}
   \STATE add $1$ to $N_{h}(s_h, a_h, b_h)$ and $N_h(s_h, a_h, b_h, s_{h+1})$.
   \STATE $\widehat{\P}_h(\cdot|s_h, a_h, b_h)\setto {{N_h(s_h, a_h, b_h, \cdot)}/{N_h(s_h, a_h, b_h)}}$. \label{line:policy_end}
   \ENDFOR
   \ENDFOR
   \STATE {\bfseries Output}  the marginal policies of $\pi^{\text{out}}$: $(\mu^{\text{out}}, \nu^{\text{out}})$.
 \end{algorithmic}
\end{algorithm}



\paragraph{Bernstein concentration.} Our Nash-VI allows two choices of the bonus function $\beta=\Bonus(t, \hat{\sigma}^2)$:
\begin{equation} \label{eq:bonus_choice}
\text{Hoeffding type:}~~~c(\sqrt{H^2\iota/t}+ H^2S\iota/t), \qquad\qquad \text{Bernstein type:}~~~c(\sqrt{\hat{\sigma}^2 \iota/t} + H^2S\iota/t ).
\end{equation}
where $\hat{\sigma}^2$ is the estimated variance, $\iota$ is the logarithmic factors and $c$ is absolute constant. 
The $\hat{\Va}$ in line \ref{line:beta} is the empirical variance operator defined as $\widehat{\Va}_h V = \widehat{\P}_h V^2 - (\widehat{\P}_h V)^2$ for any $V \in [0, H]^S$.
The design of both bonuses stem from the Hoeffding and Bernstein concentration inequalities. 
Further, the Bernstein bonus uses a sharper concentration, which saves an $H$ factor in sample complexity compared to the Hoeffding bonus
(similar to the single-agent setting \citep{azar2017minimax}). 
This further reduces the sample complexity to $\tlO(H^3 S AB/\epsilon^2)$ which matches the lower bound in all $H, S, \epsilon$ factors.


\paragraph{Coarse Correlated Equalibirum (CCE).} The prior algorithm VI-ULCB~\citep{bai2020provable} computes the ``greedy'' policy with respect to the estimated value functions by directly computing the Nash equilibrium for the $Q$-value at each step $h$. However, since the algorithm maintains both the upper confidence bound and lower confidence bound of the $Q$-value, this leads to the requirement to compute the Nash equilibrium for a two-player general-sum matrix game, which is in general PPAD-complete~\citep{daskalakis2013complexity}.

To overcome this computational challenge, we compute a relaxation of the Nash equilibrium---\emph{Coarse Correlated Equalibirum (CCE)}---instead, a technique first introduced by~\citet{xie2020learning} to address reinforcement learning problems in Markov Games. Formally, for any pair of matrices $\up{Q}, \low{Q} \in [0, H]^{A\times B}$, $\textsc{CCE}(\up{Q}, \low{Q})$ returns a distribution $\pi \in \Delta_{\cA \times \cB}$ such that
\begin{equation}\label{problem:CCE}
\E_{(a,b)\sim \pi}\up{Q}(a,b) \ge \max_{a^{\star}}\E_{(a,b)\sim \pi}\up{Q}(a^{\star},b), \quad\qquad \E_{(a,b)\sim \pi}\low{Q}(a,b) \le \min_{b^{\star}}\E_{(a,b)\sim \pi}\low{Q}(a,b^{\star}).
\end{equation}
Intuitively, in a CCE the players
choose their actions in a potentially correlated way such that no one can benefit from  unilateral unconditional deviation.
A CCE always exists, since Nash equilibrium is also a CCE and a Nash equilibrium always exists. Furthermore, a CCE can be computed by linear programming in polynomial time. We remark that different from Nash equilibrium where the policies of each player are independent, the policies given by CCE are in general correlated for each player. 
Therefore, executing such a policy (line \ref{line:execute-cce}) requires the cooperation of two players.



\subsection{Theoretical guarantees}

Now we are ready to present the theoretical guarantees for Algorithm \ref{algorithm:Nash-VI}. We let $\pi^k$ denote the policy computed in line \ref{line:greedy} 
in the $k^{\text{th}}$ episode, and $\mu^k, \nu^k$ denote the \emph{marginal policy} of $\pi^k$ for each player.

\begin{theorem}[\ONASHVI~with Hoeffding bonus]
   \label{thm:Nash-VI}
For any $p \in (0, 1]$, letting $\iota = \log(SABT /p)$, then with probability at least $1-p$, Algorithm~\ref{algorithm:Nash-VI} with Hoeffding type bonus~\eqref{eq:bonus_choice} (with some absolute $c>0$) achieves:
\begin{itemize}[leftmargin=11mm]
\item $(V_1^{\dagger, \nu^{\text{out}}} - V_1^{\mu^{\text{out}}, \dagger})(s_1) \le \epsilon$, if the number of episodes $K \ge \Omega(H^4SAB\iota/\epsilon^2+H^3S^2AB\iota^2/\epsilon)$.
\item $\Reg(K) = \sum_{k=1}^K (\VpikA_1- \VpikB_1)(s_1) \le  \cO ( \sqrt{H^3SABT\iota} + H^3S^2AB\iota^2 )$.
\end{itemize}
\end{theorem}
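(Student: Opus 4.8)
The plan is to follow the optimism-based template for model-based RL, where the novelty is that the auxiliary bonus $\gamma$ is exactly what lets the main bonus $\beta$ avoid a $\sqrt{S}$ factor. First I would establish the \textbf{optimism (sandwich) property} of Lemma~\ref{lem:sandwichVpik_Hoeffding}: on a high-probability event, for every episode $k$, step $h$, and state $s$,
\begin{equation*}
\low{V}^k_h(s) \;\le\; \VpikB_h(s) \;\le\; V^{\nash}_h(s) \;\le\; \VpikA_h(s) \;\le\; \up{V}^k_h(s).
\end{equation*}
The inner two inequalities are the minimax characterization of the Nash value; the outer two I would prove by backward induction on $h$ (the base case $h=H+1$ is trivial). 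For the inductive step on the upper side, assuming $\up{V}^k_{h+1}\ge \VpikA_{h+1}$, the goal reduces to $\up{Q}^k_h(s,a,b)\ge \QpikA_h(s,a,b)$, which holds once the bonuses dominate the model error $(\P_h-\widehat{\P}_h)\VpikA_{h+1}$. The passage from this $Q$-bound back to $\up{V}^k_h(s)\ge \VpikA_h(s)$ is then supplied by the defining CCE inequality~\eqref{problem:CCE}: $\up{V}^k_h(s)=(\D_{\pi_h}\up{Q}_h)(s)\ge \max_{a^\star}\E_{b\sim\nu^k_h}\up{Q}_h(s,a^\star,b)\ge \max_{a^\star}\E_{b\sim\nu^k_h}\QpikA_h(s,a^\star,b)=\VpikA_h(s)$, the last step being the best-response Bellman equation; the lower side is symmetric using the min-player's CCE inequality.

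The heart of the argument, and the step I expect to be the main obstacle, is controlling $(\P_h-\widehat{\P}_h)\VpikA_{h+1}$ \emph{without} a $\sqrt{S}$ bonus, since $\VpikA_{h+1}$ is data-dependent. I would decompose against the fixed, deterministic Nash reference,
\begin{equation*}
(\P_h-\widehat{\P}_h)\VpikA_{h+1} = (\P_h-\widehat{\P}_h)V^{\nash}_{h+1} + (\P_h-\widehat{\P}_h)\big(\VpikA_{h+1}-V^{\nash}_{h+1}\big).
\end{equation*}
The first term involves a \emph{fixed} function, so a scalar Hoeffding bound, union-bounded over $(s,a,b,h)$ and over the count $t=N_h(s,a,b)$, gives $|(\P_h-\widehat{\P}_h)V^{\nash}_{h+1}|\lesssim \sqrt{H^2\iota/t}+H^2 S\iota/t$ with no $\sqrt{S}$, matching $\beta$. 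For the second term I would use the induction hypothesis to sandwich $0\le \VpikA_{h+1}-V^{\nash}_{h+1}\le \up{V}^k_{h+1}-\low{V}^k_{h+1}$, together with a coordinatewise empirical-Bernstein bound $|\widehat{\P}_h(s')-\P_h(s')|\lesssim \sqrt{\P_h(s')\iota/t}+\iota/t$, and then an AM--GM split to obtain
\begin{equation*}
\big|(\P_h-\widehat{\P}_h)(\VpikA_{h+1}-V^{\nash}_{h+1})\big| \;\lesssim\; \tfrac{c}{H}\,\widehat{\P}_h(\up{V}^k_{h+1}-\low{V}^k_{h+1})(s,a,b) + \tfrac{H^2 S\iota}{t}.
\end{equation*}
The first piece is precisely the auxiliary bonus $\gamma$ and the second is the lower-order piece of $\beta$; this is exactly why $\gamma$ must carry the $c/H$ coefficient. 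Pinning down this split and its constants is the delicate part of the whole proof.

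With optimism in hand, I would bound the regret by the sum of gaps, $\Reg(K)\le \sum_{k}(\up{V}^k_1-\low{V}^k_1)(s_1)$. Writing $\Delta^k_h(s):=(\up{V}^k_h-\low{V}^k_h)(s)$, the value-iteration updates and the definition of $\gamma$ give the one-step recursion $\Delta^k_h(s)=\E_{\pi^k_h}[\up{Q}^k_h-\low{Q}^k_h](s)\le (1+2c/H)\,\E_{\pi^k_h}[\widehat{\P}_h\Delta^k_{h+1}]+2\E_{\pi^k_h}[\beta^k_h]$, so unrolling over $h$ costs only the benign factor $(1+2c/H)^H\le e^{2c}=\cO(1)$. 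I would then replace the empirical one-step transitions by the realized trajectory $(s^k_h,a^k_h,b^k_h)$ via an Azuma--Hoeffding martingale bound (lower order), and collect the accumulated bonuses. Finally, the pigeonhole inequalities $\sum_{k}1/\sqrt{N^k_h(s^k_h,a^k_h,b^k_h)}\lesssim \sqrt{SABK}$ and $\sum_{k}1/N^k_h\lesssim SAB\iota$, summed over the $H$ steps with $T=KH$, turn the leading $\sqrt{H^2\iota/t}$ bonus into $\sqrt{H^3 SABT\iota}$ and the $H^2S\iota/t$ bonus into $H^3 S^2 AB\iota^2$, yielding the claimed regret bound.

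The PAC guarantee follows by online-to-batch conversion. Since $\pi^{\text{out}}$ minimizes $(\up{V}_1-\low{V}_1)(s_1)$ over the $K$ episodes, optimism gives $V^{\dagger, \nu^{\text{out}}}_1(s_1)-V^{\mu^{\text{out}},\dagger}_1(s_1)\le (\up{V}_1-\low{V}_1)(s_1)\le \tfrac{1}{K}\sum_k\Delta^k_1(s_1)\le \Reg(K)/K$. Substituting the regret bound with $T=KH$ gives a bound of order $\sqrt{H^4 SAB\iota/K}+H^3S^2AB\iota^2/K$, and requiring this to be at most $\epsilon$ produces the stated sample complexity $K\ge\Omega(H^4SAB\iota/\epsilon^2+H^3S^2AB\iota^2/\epsilon)$.
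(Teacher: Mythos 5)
Your proposal is correct and follows essentially the same route as the paper's proof: decomposing the model error against the fixed Nash value $V^{\star}_{h+1}$ and controlling the data-dependent remainder via the induction sandwich, a coordinatewise Bernstein bound, and AM--GM (this is precisely the paper's Lemma~\ref{lem:lower-order}), using the CCE inequalities to pass from $Q$-optimism to $V$-optimism (Lemma~\ref{lem:sandwichVpik_Hoeffding}), then the $(1+\cO(1/H))$-factor recursion with Azuma--Hoeffding and pigeonhole sums for the regret, and online-to-batch for the PAC bound. The only cosmetic discrepancy is whether $\P_h$ or $\widehat{\P}_h$ sits inside the Bernstein square root when absorbing into $\gamma$; the paper's concentration event bounds the deviation by the minimum of the two, so your argument goes through unchanged.
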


Theorem~\ref{thm:Nash-VI} provides both a sample complexity bound and a regret bound for \ONASHVI~to find an $\epsilon$-approximate Nash equilibrium.
For small $\epsilon \le H/(S\iota)$, the sample complexity scales as $\tlO(H^4SAB/\epsilon^2)$. Similarly, for large $T \ge H^3S^3AB\iota^3$, the regret scales as $\tlO(\sqrt{H^3SABT})$, where $T = KH$ is the total number of steps played within $K$ episodes. Theorem \ref{thm:Nash-VI} is significant in that
it improves the sample complexity of the model-based algorithm in Markov games from $S^2$ to $S$ (and the regret from $S$ to $\sqrt{S}$). This is achieved by adding the new auxiliary bonus $\gamma$ in value iteration steps as explained in Section \ref{sec:alg_des}. The proof of Theorem~\ref{thm:Nash-VI} can be found in Appendix~\ref{appendix:pf_Nash-VI_Hoeffding}.


Our next theorem states that when using Bernstein bonus instead of Hoeffding bonus as in \eqref{eq:bonus_choice}, the sample complexity of \ONASHVI~algorithm can be further improved by a $H$ factor in the leading order term (and the regret improved by a $\sqrt{H}$ factor).

\begin{theorem}[\ONASHVI~with the Bernstein bonus]
   \label{thm:Nash-VI-B}
For any $p \in (0, 1]$, letting $\iota = \log(SABT /p)$, then with probability at least $1-p$, Algorithm~\ref{algorithm:Nash-VI} with Bernstein type bonus~\eqref{eq:bonus_choice} (with some absolute $c>0$) achieves:
\begin{itemize}[leftmargin=11mm]
\item $(V_1^{\dagger, \nu^{\text{out}}} - V_1^{\mu^{\text{out}}, \dagger})(s_1) \le \epsilon$, if the number of episodes $K \ge \Omega(H^3SAB\iota/\epsilon^2+H^3S^2AB\iota^2/\epsilon)$.
\item $\Reg(K) = \sum_{k=1}^K (\VpikA_1- \VpikB_1)(s_1) \le  \cO (\sqrt{H^2SABT\iota} + H^3S^2AB\iota^2 )$.
\end{itemize}
\end{theorem}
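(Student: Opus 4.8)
The plan is to follow the optimistic model-based template of \citet{azar2017minimax} for single-agent MDPs, adapted to self-play, and to upgrade every Hoeffding estimate used in Theorem~\ref{thm:Nash-VI} to its Bernstein counterpart. The backbone has three pieces: (i) a \emph{sandwich} guarantee that with high probability $\low{V}_h^k(s) \le V_h^{\mu^k,\dagger}(s) \le V_h^{\dagger,\nu^k}(s) \le \up{V}_h^k(s)$ for all $(s,h,k)$, so that the per-episode Nash gap is dominated by the optimistic gap $(\up{V}_1^k - \low{V}_1^k)(s_1)$; (ii) a recursion that unrolls $\sum_k (\up{V}_1^k - \low{V}_1^k)(s_1)$ into a sum of bonuses along the realized trajectories plus martingale and lower-order terms; and (iii) a law-of-total-variance argument that, thanks to the Bernstein bonus, shaves a factor of $H$ off the leading term. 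The regret bound then follows from (i)--(iii), and the PAC bound follows from the online-to-batch conversion together with the choice of $\pi^{\mathrm{out}}$ as the episode minimizing $(\up{V}_1 - \low{V}_1)(s_1)$.

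For the Bernstein sandwich lemma the crux is to show the one-step error $|(\Phat_h - \P_h)\up{V}_{h+1}^k|(s,a,b)$ is dominated by $\gamma + \beta$. Since $\up{V}_{h+1}^k$ is data-dependent I cannot apply Bernstein directly; instead I would apply it to the \emph{fixed} reference $V_{h+1}^\star$ (or the appropriate fixed best-response value), incurring an error of order $\sqrt{\Va_h V_{h+1}^\star\,\iota/t} + HS\iota/t$, and absorb the discrepancy $|(\Phat_h-\P_h)(\up{V}_{h+1}^k - V_{h+1}^\star)|$ into the auxiliary bonus $\gamma = (c/H)\Phat_h(\up{V}_{h+1}^k - \low{V}_{h+1}^k)$ via a crude $\ell_\infty/\ell_1$ estimate plus an $H^2 S\iota/t$ term. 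I would also replace the true variance $\Va_h V_{h+1}^\star$ by the empirical $\Vahat_h[(\up{V}_{h+1}^k + \low{V}_{h+1}^k)/2]$ actually used in $\beta$, which needs a separate concentration of the empirical variance together with the sandwich property to show the two agree up to $O(H\cdot\mathrm{gap}) + H^2 S\iota/t$. An induction on $h$ from $H+1$ down to $1$ closes the sandwich, using that $\pi_h^k = \CCE(\up{Q}_h^k,\low{Q}_h^k)$ guarantees $\up{V}_h^k = \D_{\pi_h^k}\up{Q}_h^k \ge \max_{\mu}\D_{\mu\times\nu_h^k}\up{Q}_h^k$ and symmetrically $\low{V}_h^k \le \min_\nu \D_{\mu_h^k\times\nu}\low{Q}_h^k$, which are precisely the properties needed to bracket the best-response values rather than merely the Nash value.

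For the recursion and variance step, the updates give $\up{Q}_h^k - \low{Q}_h^k \le (1 + 2c/H)\,\Phat_h(\up{V}_{h+1}^k - \low{V}_{h+1}^k) + 2\beta$, and taking $\D_{\pi_h^k}$ and converting $\Phat_h$ to the realized next state (paying a martingale term) yields $\Delta_h^k \le (1 + c'/H)\Delta_{h+1}^k + 2\D_{\pi_h^k}\beta + \text{(martingale)}$ with $\Delta_h^k \defeq (\up{V}_h^k - \low{V}_h^k)(s_h^k)$. Since $(1 + c'/H)^H \le e^{c'} = O(1)$, unrolling gives $\sum_k \Delta_1^k \lesssim \sum_{k,h}\beta_h^k + \text{(martingale)} + \text{(lower order)}$. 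The leading Bernstein part summed over visits is $\sum_{k,h}\sqrt{\hat\sigma_h^{k,2}\,\iota/N_h^k}$, which by Cauchy--Schwarz is at most $\sqrt\iota\,\big(\sum_{k,h}\hat\sigma_h^{k,2}\big)^{1/2}\big(\sum_{k,h} 1/N_h^k\big)^{1/2}$; the second factor is $\tlO(HSAB)$ by the harmonic/pigeonhole bound, and for the first I would invoke the law of total variance for the executed (correlated) $\pi^k$, which gives $\sum_h \Va_h V_{h+1}^{\pi^k}(s_h^k,a_h^k,b_h^k) = O(H^2)$ per episode, hence $\sum_{k,h}\hat\sigma_h^{k,2} \lesssim TH$ up to lower-order error. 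Combining yields a leading term of order $\sqrt{H^2 SABT}$ (up to logarithmic factors), exactly as claimed, with the $H^2 S\iota/t$ part of $\beta$ and all martingale fluctuations contributing only the lower-order $H^3 S^2 AB\iota^2$ term.

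The main obstacle is the Bernstein sandwich step, specifically closing the loop between three data-dependent quantities at once: the sandwich property, the empirical-variance concentration, and the law of total variance, which is naturally stated for the \emph{true} variance of the \emph{true} value of the executed policy rather than for the estimated variance of the estimated bracketing values. Because $\pi^k$ is a CCE and hence correlated, I cannot decouple the two players, so the total-variance bound must be proved for the joint trajectory and then transferred from $V^{\pi^k}$ to $\up{V}^k,\low{V}^k$ using the sandwich and the aggregate smallness of $\sum_k \Delta_1^k$. This introduces a mild circularity, resolved by first invoking the crude Hoeffding bound of Theorem~\ref{thm:Nash-VI} to get an a priori control on $\sum_k \Delta_1^k$ and then bootstrapping to the sharper variance-based estimate. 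Verifying that all of these transfer errors accumulate only into the $\tlO(H^3 S^2 AB\iota^2)$ lower-order term, and never into the leading $\sqrt{T}$ term, is the most delicate bookkeeping in the argument.
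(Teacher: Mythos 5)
Your proposal follows essentially the same route as the paper's proof: a Bernstein sandwich lemma proved by backward induction in which Bernstein concentration is applied to the \emph{fixed} reference $V^\star_{h+1}$ (the best-response values are data-dependent, so your parenthetical alternative would not be legitimate), a $(1+c/H)$ recursion on $\Delta_h^k=(\up{V}^k_h-\low{V}^k_h)(s_h^k)$ unrolled over $h$ and summed over $k$, and a Cauchy--Schwarz plus law-of-total-variance step, with the PAC bound obtained from the minimum-gap episode. However, one step as you describe it would fail. You propose to absorb the discrepancy $|(\Phat_h-\P_h)(\up{V}^k_{h+1}-V^\star_{h+1})|(s,a,b)$ into $\gamma$ ``via a crude $\ell_\infty/\ell_1$ estimate plus an $H^2S\iota/t$ term.'' The $\ell_1$ route gives at best $\sqrt{S\iota/t}\,\|\up{V}^k_{h+1}-\low{V}^k_{h+1}\|_\infty$, and after AM--GM this yields $\|\up{V}^k_{h+1}-\low{V}^k_{h+1}\|_\infty/H$, i.e.\ the \emph{sup-norm} of the gap over next states --- not $\gamma=(c/H)\,\Phat_h(\up{V}^k_{h+1}-\low{V}^k_{h+1})(s,a,b)$, which is the gap \emph{averaged under} $\Phat_h$. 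The sup-norm version cannot be folded into the trajectory recursion $\Delta_h^k\le\kappa\Delta_{h+1}^k+\cdots$, since that recursion only controls the gap along visited states. The paper's Lemma~\ref{lem:lower-order} is precisely the needed refinement: apply the entrywise bound $|(\Phat_h-\P_h)(s'|s,a,b)|\lesssim\sqrt{\Phat_h(s'|s,a,b)\iota/t}+\iota/t$, multiply by $(\up{V}^k_{h+1}-\low{V}^k_{h+1})(s')$, and use AM--GM state by state to obtain $\Phat_h(\up{V}^k_{h+1}-\low{V}^k_{h+1})/H+H^2S\iota/t$. Since this lemma is the whole reason the auxiliary bonus $\gamma$ works, the per-state argument is not an optional detail.

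On your third paragraph: the circularity you anticipate does not arise, and the Hoeffding-first bootstrap is unnecessary. The paper's Lemma~\ref{lem:bound_variance} transfers the empirical variance $\Vahat_h^k[(\up{V}^k_{h+1}+\low{V}^k_{h+1})/2]$ to the true variance $\Va_h V^{\pi^k}_{h+1}$ \emph{pointwise}, with error $4H\P_h(\up{V}^k_{h+1}-\low{V}^k_{h+1})+O(1+H^4S\iota/t)$; the resulting term $\sqrt{H\iota\,\P_h(\up{V}^k_{h+1}-\low{V}^k_{h+1})/t}$ is then AM--GM'd into $\P_h(\up{V}^k_{h+1}-\low{V}^k_{h+1})/H+H^2\iota/t$ and absorbed into the same $(1+c/H)$ coefficient of the recursion being unrolled, costing only a constant factor $e^{O(1)}$ after $H$ steps. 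No a priori control of $\sum_k\Delta_1^k$ is ever invoked. Your two-stage bootstrap could probably be made rigorous, but it injects extra lower-order terms (from feeding the crude $\sqrt{H^3SABT}$ bound back into the variance sum) and bookkeeping that the paper's pointwise absorption avoids entirely.
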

Compared with the information-theoretic sample complexity lower bound $\Omega(H^3S(A+B)\iota/\epsilon^2)$ and regret lower bound $\Omega(\sqrt{H^2S(A+B)T})$~\citep{bai2020provable}, when $\epsilon$ is small, \ONASHVI~with Bernstein bonus achieves the optimal dependency on all of $H, S, \epsilon$ up to logarithmic factors in both the sample complexity and the regret, and the only gap that remains open is a $AB/(A+B)\le \min\set{A,B}$ factor. The proof of Theorem~\ref{thm:Nash-VI-B} can be found in Appendix~\ref{appendix:pf_Nash-VI_Bernstein}.


\paragraph{Comparison with model-free approaches.}
Different from our model-based approach, a recently proposed model-free algorithm Nash V-Learning~\citep{bai2020near} achieves sample complexity $\tlO(H^6S(A+B)\iota/\epsilon^2)$, which has a tight $(A+B)$ dependency on $A,B$. However, our Nash-VI has the following important advantages over Nash V-Learning: 1. Our sample complexity has a better dependency on horizon $H$; 2. Our algorithm outputs a single pair of Markov policies $(\mu^{\text{out}}, \nu^{\text{out}})$ while their algorithm outputs a generic history-dependent policy that can be only written as a nested mixture of Markov policies; 3. The model-free algorithms in~\citet{bai2020near} cannot be directly modified to obtain a $\sqrt{T}$-regret (so that the exploration policies can be arbitrarily poor), while our model-based algorithm has the $\sqrt{T}$-regret guarantee. 
We comment that although both Nash-VI and Nash V-Learning have polynomial running time, the later enjoys a better computational complexity because Nash-VI requires to solve LPs for computing CCEs in each episode.

\section{Reward-free Learning} \label{sec:reward-free}
\newcommand{\Qexp}{{\bar{Q}^k}}
\newcommand{\Vexp}{{\bar{V}^k}}
\newcommand{\Qpih}{Q^{\pihat_k}}
\newcommand{\Vpih}{V^{\pihat_k}}
\newcommand{\pit}{{\tilde{\pi}}}
\newcommand{\Mcal}{\mathcal{M}}
\newcommand{\Acal}{\mathcal{A}}
\newcommand{\Mfrak}{\mathfrak{M}}
\newcommand{\nuhat}{\hat{\nu}}
\newcommand{\muhat}{\hat{\mu}}

In this section, we modify our model-based algorithm \ONASHVI~for the reward-free exploration setting.
Formally, reward-free learning has two phases: In the exploration phase, the agent collects a dataset of transitions $\cD=\{(s_{k,h},a_{k,h},b_{k,h},s_{k,h+1})\}_{(k,h)\in[K]\times[H]}$
 from a Markov game $\cM$ without the guidance of reward information. 
After the exploration, in the planning phase, 
for each task $i\in[N]$, $\cD$ is augmented with stochastic reward information to become $\cD^i=\{(s_{k,h},a_{k,h},b_{k,h},s_{k,h+1},r_{k,h})\}_{(k,h)\in[K]\times[H]}$, where $r_{k,h}$ is sampled from some unknown reward distribution with expectation equal to $r^i_h( s_{k,h},a_{k,h},b_{k,h})$. Here, $r^i$ denotes the unknown reward function of the $i^{\rm th}$ task. The goal is to compute nearly-optimal policies for  $N$ tasks under $\cM$ simultaneously given the augmented datasets $\{\cD^i\}_{i\in[N]}$. 

There are strong practical motivations for considering the reward-free setting.
First, in applications such as robotics, we face multiple tasks in sequential systems with shared transition dynamics (i.e. the world) but very different rewards. There, we prefer to learn the underlying transition independent of reward information. Second, from the algorithm design perspective, decoupling exploration and planning (i.e. performing exploration without reward information) can be valuable for designing new algorithms in more challenging settings (e.g., with function approximation).


\subsection{Algorithm description}
We now describe our algorithm for reward-free learning in zero-sum Markov games.
\begin{algorithm}[t]
   \caption{Optimistic Value Iteration with Zero Reward (\OVIZERO)}
   \label{alg:VI-zero}
   \begin{algorithmic}[1]
     \REQUIRE Bonus $\beta_t$.
   \STATE {\bfseries Initialize:} for any $(s, a, b, h)$,
   ${\Vt}_{h}(s,a, b)\setto H$, $\Delta \setto H$, $N_{h}(s,a, b)\setto 0$. \\
   \FOR{episode $k=1,\dots,K$}
   \FOR{step $h=H,H-1,\dots,1$}
   \FOR{$(s, a, b)\in\cS\times\cA\times \cB$}
   \STATE $t \setto N_{h}(s, a, b)$.
   \IF{$t>0$}
   \STATE $\Qt_{h}(s, a, b)\setto \min\{(\widehat{\P}_{h} {\Vt}_{h+1})(s, a, b) +\beta_t, H\}$. \label{line:zero_UCB}
   \ENDIF
   \ENDFOR
   \FOR{$s \in \cS$}
   \STATE $\pi_h(s) \setto \arg\max_{(a,b)\in \cA\times\cB} \Qt_{h}(s, a, b)$. \label{line:zero_greedy}
   \STATE $\Vt_h(s) \setto  (\D_{\pi_h}\Qt_{h})(s)$.
   \ENDFOR
   \ENDFOR
   \IF{$\Vt_1(s_1) < \Delta$}
   \STATE $\Delta \leftarrow \Vt_1(s_1)$ and $\Phat^{\text{out}} \leftarrow \Phat$.
   \ENDIF
   \FOR{step $h=1,\dots, H$}
   \STATE take action $(a_h, b_h) \sim  \pi_h(\cdot, \cdot| s_h)$, observe next state $s_{h+1}$. 
   \STATE add 1 to $N_{h}(s_h, a_h, b_h)$ and $N_h(s_h, a_h, b_h, s_{h+1})$.
   \STATE $\widehat{\P}_h(\cdot|s_h, a_h, b_h)\setto N_h(s_h, a_h, b_h, \cdot)/N_h(s_h, a_h, b_h)$.
   \ENDFOR
   \ENDFOR
  \STATE {\bfseries Output} $\Phat^{\text{out}}$.
\end{algorithmic}
\end{algorithm}

\paragraph{Exploration phase.} In the first phase of reward-free learning, we deploy algorithm Optimistic Value Iteration with Zero Reward (\OVIZERO, Algorithm~\ref{alg:VI-zero}). This algorithm differs from the reward-aware \ONASHVI~(Algorithm~\ref{algorithm:Nash-VI}) in two important aspects. First, we use zero reward in the exploration phase (Line~\ref{line:zero_UCB}), and only maintains an upper bound of the (reward-free) value function instead of both upper and lower bounds.
Second, our exploration policy is the maximizing (instead of CCE) policy of the value function (Line~\ref{line:zero_greedy}). We remark that the $\Qt_h(s, a, b)$ maintained in the algorithm \ref{alg:VI-zero} is no longer an upper bound for any actual value function (as it has no reward), but rather a measure of uncertainty or suboptimality that the agent may suffer---if she takes action $(a,b)$ at state $s$ and step $h$, and makes decisions by utilizing the empirical estimate $\Phat$ in the remaining steps (see a rigorous version of this statement in Lemma \ref{lem:MG-closeQpiQhat}). Finally, the empirical transition $\Phat$ of the episode that minimizes $\Vt_1(s_1)$ is outputted and passed to the planning phase.



\paragraph{Planning phase.} After obtaining the estimate of tranisiton $\Phat$, our planning algorithm is rather simple. 
For the $i^{\rm th}$ task, let $\rhat^i$ be the empirical estimate of $r^i$ computed using the $i^{\rm th}$ augmented dataset $\cD^i$.
Then we compute the Nash equilibrium of the Markov game $\cM(\Phat,\rhat^i)$ with estimated transition $\Phat$ and reward $\rhat^i$. Since both $\Phat$ and $\rhat^i$ are known exactly, this is a pure computation problem without any sampling error and can be efficiently solved by simple planning algorithms such as the vanilla Nash value iteration without optimism (see Appendix \ref{algorithm:Reward-free-planning}
 for more details).

\subsection{Theoretical guarantees}


Now we are ready to state our theoretical guarantees for reward-free learning. 
 It claims that the empirical transition $\Phat^{\text{out}}$ outputted by \OVIZERO~is close to the true transition $\P$, in the sense that any Nash equilibrium of the $\cM(\Phat,\rhat^i)$ ($i\in[N]$) is also an approximate Nash equilibrium of the true underlying Markov game $\cM(\P,r^i)$,
 where $\rhat^i$ is the empirical estimate of $r^i$ computed using $\cD^i$.

\begin{theorem}[Sample complexity of \OVIZERO] \label{thm:rewardfree-explore} 
There exists an absolute constant $c$, for any $p \in (0, 1]$, $\epsilon \in (0, H]$, $N \in \N$, if we choose bonus $\beta_t = c(\sqrt{ H^2\iota /t}+ {H^2S\iota}/t)$ with $\iota = \log(NSABT /p)$ and $K \ge c(H^4 SAB\iota/\epsilon^2 + H^3 S^2 AB \iota^2/\epsilon)$, then with probability at least $1-p$, the output $\Phat^{{\rm out}}$ of Algorithm~\ref{alg:VI-zero} satisfies:
For any $N$ fixed reward functions $r^1,\ldots,r^N$, a Nash equilibrium of Markov game $\cM(\Phat^{{\rm out}}, \rhat^i)$ is also an $\epsilon$-approximate Nash equilibrium of the true Markov game $\cM(\P,r^i)$ for all $i\in [N]$.
 \end{theorem}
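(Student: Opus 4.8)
The plan is to prove the theorem from two components: a \emph{validity} bound showing that the quantity $\Vt$ maintained by \OVIZERO~dominates the model-mismatch error uniformly over policies and rewards, and an \emph{exploration} bound showing that the output uncertainty $\Vt_1^{\rm out}(s_1)$ is driven below $O(\epsilon)$. Write $V_h^{\mu,\nu}(s;\Pt,\rt)$ for the value of $(\mu,\nu)$ at step $h$ computed under transition $\Pt$ and reward $\rt$, and let $\Phat^{\rm out},\Vt^{\rm out}$ be the empirical transition and value of the episode attaining $\min_k\Vt_1^k(s_1)$. The first component is the formal version of Lemma~\ref{lem:MG-closeQpiQhat}: on a $1-p$ probability event, for every reward $r\in[0,1]$, every policy pair $(\mu,\nu)$, and every $(s,h)$,
\[
  \abs{V_h^{\mu,\nu}(s;\P,r) - V_h^{\mu,\nu}(s;\Phat^{\rm out},r)} \le \Vt_h^{\rm out}(s).
\]

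I would establish this by backward induction on $h$. Using the telescoping-under-$\Phat$ identity,
\[
  V_h^{\mu,\nu}(s;\P,r) - V_h^{\mu,\nu}(s;\Phat,r) = \D_{\mu_h\times\nu_h}\brac{(\P_h-\Phat_h)V_{h+1}^{\mu,\nu}(\cdot;\P,r) + \Phat_h\big(V_{h+1}^{\mu,\nu}(\cdot;\P,r)-V_{h+1}^{\mu,\nu}(\cdot;\Phat,r)\big)}(s),
\]
the second summand is at most $\D_{\mu_h\times\nu_h}[\Phat_h\Vt_{h+1}](s)$ by the inductive hypothesis. The first summand involves $V_{h+1}^{\mu,\nu}(\cdot;\P,r)$, which is \emph{data-independent} for fixed $(\mu,\nu,r)$; hence Hoeffding controls $(\P_h-\Phat_h)V_{h+1}^{\mu,\nu}(\cdot;\P,r)$ at scale $\sqrt{H^2\iota/t}$ with no $S$ factor, and uniformity over all policies and rewards follows from a standard net over value functions in $[0,H]^S$ whose discretization cost is absorbed into the lower-order $H^2S\iota/t$ part of $\beta_t$. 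Combining the summands and using that \OVIZERO~plays the \emph{maximizing} action, so $\Vt_h(s)=\max_{a,b}\Qt_h(s,a,b)=\max_{a,b}\min\set{\Phat_h\Vt_{h+1}(s,a,b)+\beta_t,\,H}$ dominates any expectation $\D_{\mu_h\times\nu_h}[\cdot]$, closes the induction. Unlike the reward-aware \ONASHVI, no auxiliary $\gamma$ bonus is needed because the max-policy makes $\Vt$ self-dominating.

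Given validity, the $\epsilon$-Nash certification is a sandwich argument. Let $(\muhat,\nuhat)$ be a Nash equilibrium of $\cM(\Phat^{\rm out},\rhat^i)$. For any max-player deviation $\mu$, changing first the reward then the transition,
\[
  V_1^{\mu,\nuhat}(s_1;\P,r^i) \le V_1^{\mu,\nuhat}(s_1;\Phat^{\rm out},r^i) + \Vt_1^{\rm out}(s_1) \le V_1^{\mu,\nuhat}(s_1;\Phat^{\rm out},\rhat^i) + \Vt_1^{\rm out}(s_1) + \xi_i,
\]
where $\xi_i$ bounds the accumulated reward-estimation error. Taking $\sup_\mu$ and invoking the empirical-game Nash identity $\sup_\mu V_1^{\mu,\nuhat}(s_1;\Phat^{\rm out},\rhat^i)=V_1^{\muhat,\nuhat}(s_1;\Phat^{\rm out},\rhat^i)$ gives $V_1^{\dagger,\nuhat}(s_1;\P,r^i)\le V_1^{\muhat,\nuhat}(s_1;\Phat^{\rm out},\rhat^i)+\Vt_1^{\rm out}(s_1)+\xi_i$. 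A symmetric lower bound for $V_1^{\muhat,\dagger}(s_1;\P,r^i)$ via the min-player's best response yields the same empirical-Nash value minus $(\Vt_1^{\rm out}(s_1)+\xi_i)$. Subtracting, the empirical-Nash values cancel and
\[
  V_1^{\dagger,\nuhat}(s_1;\P,r^i) - V_1^{\muhat,\dagger}(s_1;\P,r^i) \le 2\Vt_1^{\rm out}(s_1) + 2\xi_i .
\]
Here $\xi_i$ is controlled by Hoeffding applied to $\rhat^i$, the union bound over the $N$ tasks putting $\log N$ into $\iota$; since reward estimation is no harder than transition estimation, $\xi_i$ is of lower order than $\Vt_1^{\rm out}(s_1)$, so it suffices to force $\Vt_1^{\rm out}(s_1)\le\epsilon/4$, which by the union bound in $\iota$ holds for all $i\in[N]$ at once.

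The remaining and most delicate step is the quantitative exploration bound. Since $\Phat^{\rm out}$ minimizes $\Vt_1(s_1)$, we have $\Vt_1^{\rm out}(s_1)=\min_k\Vt_1^k(s_1)\le\frac1K\sum_{k=1}^K\Vt_1^k(s_1)$, so it suffices to show $\sum_k\Vt_1^k(s_1)=\tlO(H^2\sqrt{SABK}+H^3S^2AB)$, which I would bound exactly as the regret term in Theorem~\ref{thm:Nash-VI}. The obstacle is that $\Vt_1^k(s_1)$ is an accumulated-bonus quantity defined along roll-outs of the \emph{empirical} model $\Phat^k$, whereas pigeonholing operates on the counts $N_h^k$ generated by executing $\pi^k$ in the \emph{true} environment; unlike the reward-aware setting there is no ground-truth value function to anchor to, so $\Vt$ must simultaneously be the object bounded and the error controlling its own bias. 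I would bridge this by a self-bounding recursion: the gap between the $\Phat^k$-roll-out and the true-$\P$ roll-out of the accumulated bonus is itself controlled by $\Vt^k$ (by the same validity argument, now with the bonus as a pseudo-reward), giving $\Vt_1^k(s_1)\le(1+O(1/H))\,\E_{\pi^k,\P}\big[\sum_h\beta_{t_{k,h}}\big]+\text{(l.o.t.)}$ with $(1+O(1/H))^H=O(1)$. An Azuma--Hoeffding step replaces the expectation by the realized trajectory sum up to an $O(\sqrt{H^3K\iota})$ fluctuation, and finally the Cauchy--Schwarz/pigeonhole estimates $\sum_{k,h}\sqrt{H^2\iota/N_h^k}\le\tlO(H^2\sqrt{SABK\iota})$ and $\sum_{k,h}H^2S\iota/N_h^k\le\tlO(H^3S^2AB\iota^2)$ deliver the claimed sum. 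Dividing by $K$ and setting the bound to $\epsilon/4$ recovers $K\ge c(H^4SAB\iota/\epsilon^2+H^3S^2AB\iota^2/\epsilon)$.
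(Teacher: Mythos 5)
Your overall architecture (a validity/simulation lemma bounding model mismatch by $\Vt$, a sandwich argument for the empirical Nash policy, an exploration bound on $\sum_k \Vt_1^k(s_1)$, and the min-over-episodes step) matches the paper's, and your second and third components are sound: the sandwich step is essentially how the paper derives the theorem from Lemma~\ref{lem:MG-closeQpiQhat}, and your self-bounding recursion for $\sum_k\Vt_1^k(s_1)$ is in effect a proof of the bound the paper imports as Theorem~\ref{thm:UCB-VI}. The gap is in your first component. You state the validity lemma uniformly over \emph{all} policies $(\mu,\nu)$ and rewards $r$, with error constant $1$ times $\Vt_h^{\rm out}(s)$, and you justify the required uniform concentration by ``a standard net over value functions in $[0,H]^S$ whose discretization cost is absorbed into the lower-order $H^2S\iota/t$ part of $\beta_t$.'' This is where the argument fails: a union bound over an $S$-dimensional net puts the covering cost \emph{inside the square root}, giving a deviation of order $\sqrt{H^2S\iota/t}$, and this is \emph{not} dominated by $\sqrt{H^2\iota/t}+H^2S\iota/t$ once $t \gtrsim H^2S\iota$ --- precisely the regime of counts that dominates the pigeonhole sums and hence the leading $1/\epsilon^2$ term. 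Patching your proof with the honest net bound inflates the main term by $\sqrt{S}$ and yields only $\tlO(H^4S^2AB/\epsilon^2)$, not the claimed $\tlO(H^4SAB/\epsilon^2)$. Worse, the uniform statement itself amounts to a total-variation control of $\Phat_h-\P_h$ (a supremum over all bounded test functions), which information-theoretically costs $\Theta(\sqrt{S/t})$, so the lemma as you state it can simply be false when $S \gg H^2\iota$.

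The paper's proof is designed exactly to avoid this quantifier. It never needs concentration uniformly over all policies: it bounds only the \emph{specific} value functions $V^\star$, $\Vhat^k$ (the empirical Nash value), and the best responses $V^{\dagger,\nu^k}$, $V^{\mu^k,\dagger}$, each of which satisfies a Bellman (optimality) equation, so the supremum over deviating policies is taken \emph{before} concentration rather than after. In the induction (Lemmas~\ref{lem:MG-closeQstarQk} and~\ref{lem:MG-closeQpiQhat}), the term $(\Phat_h^k-\P_h)V^\star_{h+1}$ is handled by Hoeffding (data-independent, no $S$ factor), while the data-dependent residual $(\Phat_h^k-\P_h)\paren{V^{\dagger,\nu^k}_{h+1}-V^\star_{h+1}}$ is handled by the pointwise domination $|V^{\dagger,\nu^k}_{h+1}-V^\star_{h+1}|\le(\alpha_{h+1}+1)\Vt^k_{h+1}$ together with entrywise empirical-Bernstein bounds and AM-GM, yielding $\frac{1}{H}\Phat_h^k\Vt^k_{h+1}+\cO(H^2S\iota/t)$; this is also why the paper's constant is $\alpha_h\le 4$ (with multiplicative $(1+1/H)$ growth absorbed since $(1+1/H)^H\le e$) rather than your constant $1$. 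To repair your proof, replace the ``for each fixed $\mu$, then take $\sup_\mu$'' step by an induction carried out directly on the best-response value $V^{\dagger,\nuhat}$ against the fixed empirical equilibrium policy, exactly as in Lemma~\ref{lem:MG-closeQpiQhat}.
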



 Theorem~\ref{thm:rewardfree-explore} shows that, when $\epsilon$ is small, \OVIZERO~only needs $\tlO(H^4 SAB/\epsilon^2)$ samples to learn an estimate of the transition $\Phat^{\text{out}}$, which is accurate enough to learn the approximate Nash equilibrium for any $N$ fixed rewards. The most important advantage of reward-free learning comes from the sample complexity only scaling polylogarithmically with respect to the number of tasks or reward functions $N$. This is in sharp contrast to the reward-aware algorithms (e.g. \ONASHVI), where  the algorithm has to be rerun for each different task, and the total sample complexity must scale linearly in $N$. 
In exchange for this benefit, compared to \ONASHVI, \OVIZERO~loses a factor of $H$ in the leading term of sample complexity since we cannot use Bernstein bonus anymore due to the lack of reward information. \OVIZERO~also does not have a regret guarantee, since again without reward information, the exploration policies are naturally sub-optimal. The proof of Theorem~\ref{thm:rewardfree-explore} can be found in Appendix~\ref{appendix:pf_rewardfree}.

\paragraph{Connections with reward-free learning in MDPs.}
Since MDPs are special cases of Markov games, our algorithm \OVIZERO~directly applies to the single-agent setting, and yields a sample complexity similar to existing results~\citep{zhang2020task,wang2020reward}.
However, distinct from existing results which require both the exploration algorithm and  the planning algorithm to be specially designed to work together, our algorithm allows an arbitrary planning algorithm as long as it computes the Nash equilibrium of a Markov game with \emph{known} transition and reward. Therefore, our results completely decouple the exploration and the planning.

\paragraph{Lower bound for reward-free learning.}
Finally, we comment that despite the sample complexity in Theorem \ref{thm:rewardfree-explore} scaling as $AB$ instead of $A+B$, our next theorem states that unlike the general reward-aware setting, this $AB$ scaling is unavoidable in the reward-free setting. 
This reveals an intrinsic gap between the reward-free and reward-aware learning: An $A+B$ dependency is only achievable via sampling schemes that are reward-aware. A similar lower bound is also presented in \citet{zhang2020model} for the discounted setting with a different hard instance construction. 


\begin{theorem}[Lower bound for reward-free learning of Markov games]\label{thm:MG-lowerbound}
There exists an absolute constant $c>0$ such that for any $\eps \in (0, c]$,  there exists a family of Markov games $\mathfrak{M}(\eps)$ satisfying that: for any reward-free algorithm $\mathfrak{A}$ using $K \le cH^2SAB/\eps^2$ episodes, there exists a Markov game $\Mcal\in\mathfrak{M}(\eps)$ such that if we run $\mathfrak{A}$ on $\Mcal$ and output policies $(\hat{\mu}, \hat{\nu})$, then with probability at least $1/4$, we have $(V_1^{\dagger, \hat{\nu}} - V_1^{\hat{\mu}, \dagger})(s_1) \ge \epsilon$.
\end{theorem}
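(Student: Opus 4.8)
The plan is to prove this as an information-theoretic minimax lower bound, reducing reward-free learning to a collection of two-point hypothesis tests, one per transition entry $(s,a,b)$. The guiding intuition is exactly the separation the theorem asserts: a reward-aware learner could read off from the reward which single entry of the transition matters and spend all its samples there, whereas a reward-free learner must commit to an exploration distribution \emph{before} seeing the reward and therefore cannot avoid spreading its budget over all $\Theta(SAB)$ entries. If the effective budget at the reward-relevant entry is too small, no planner---however powerful, and even with exact knowledge of the revealed reward---can recover the one bit of transition information on which the $\eps$-approximate Nash property hinges.

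\textbf{Construction.} I would build $\mathfrak{M}(\eps)$ from a common gadget: from $s_1$ the first step disperses the agent, independently of the chosen actions, uniformly to one of $\Theta(S)$ decision states $z_1,\dots,z_S$. At $z_j$ the two players pick $(a,b)$, and the transition goes to a good absorbing state $g$ with probability $\tfrac12 + \tau\,\delta^j_{a,b}$ and to a bad absorbing state $d$ otherwise, where $\delta^j_{a,b}\in\{\pm1\}$ is a hidden bit and $\tau = \Theta(\eps/H)$. A reward of $1$ per step accrues in $g$ and $0$ in $d$, so that, the decision being made with $\Theta(H)$ steps remaining, flipping one bit $\delta^j_{a,b}$ changes the value of entry $(a,b)$ at $z_j$ by $\Theta(\tau H) = \Theta(\eps)$. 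I index the family by a target entry $(j^\star,a^\star,b^\star)$ together with the sign of its bit, and design the reward gadget so that the game's Nash equilibrium---and hence which policy is $\eps$-optimal in the duality-gap sense $(V_1^{\dagger,\hat\nu}-V_1^{\hat\mu,\dagger})(s_1)$---is controlled precisely by $\delta^{j^\star}_{a^\star,b^\star}$, with a wrong guess forcing a gap $\ge\eps$. A reference instance $\mathcal{M}^0$ with all biases set to $0$ (every decision transition exactly $\tfrac12$) serves as the anchor around which all instances sit.

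\textbf{Argument.} Fix any reward-free algorithm. Because its exploration is reward-independent and every trajectory is absorbed after the decision layer, each episode produces at most one sample of a biased transition; hence under $\mathcal{M}^0$ the expected sample counts $\{n^0_{j,a,b}\}$ satisfy $\sum_{j,a,b} n^0_{j,a,b}\le K$, so by pigeonhole some entry has $n^0_{j^\star,a^\star,b^\star}\le K/(SAB)$. Take the two instances $\mathcal{M}^+,\mathcal{M}^-$ targeting this entry and differing only in the sign of $\delta^{j^\star}_{a^\star,b^\star}$; they share the same reward, so the exploration data have identical law except through the observed transitions at that single entry. A standard chain-rule argument bounds the KL divergence between the two data distributions by $O\!\left(n^0_{j^\star,a^\star,b^\star}\,\tau^2\right)$, and with $K\le cH^2SAB/\eps^2$ and $\tau=\Theta(\eps/H)$ this is a small absolute constant, so (by Pinsker) the total variation distance is below $1/2$. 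Le Cam's two-point method then yields that any output policy is $\eps$-suboptimal with probability at least $\tfrac12(1-\mathrm{TV})\ge 1/4$ on at least one of $\mathcal{M}^+,\mathcal{M}^-$, furnishing the desired $\mathcal{M}\in\mathfrak{M}(\eps)$.

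\textbf{Main obstacle.} The routine pieces are the pigeonhole on the budget and the KL bound; the delicate part is coupling the information-theoretic indistinguishability to the game-theoretic objective. I must engineer the reward gadget at $z_{j^\star}$ so that a single hidden bit genuinely flips the equilibrium enough to open an $\Omega(\eps)$ \emph{duality} gap (not merely a gap in some one-player value), while keeping all other entries inert so that exploration elsewhere conveys no information about the target bit. A second subtlety is that the algorithm's exploration is adaptive and its sample counts depend on the instance; anchoring the pigeonhole at the unbiased reference $\mathcal{M}^0$ and controlling the drift of the data law from $\mathcal{M}^0$ to $\mathcal{M}^\pm$ with the same KL bound is what makes the selection of $(j^\star,a^\star,b^\star)$ legitimate. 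Verifying that these choices yield exactly the $\tau=\Theta(\eps/H)$ scaling, and hence the claimed $H^2SAB/\eps^2$ threshold, is the final check.
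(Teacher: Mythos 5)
Your construction inverts the roles of reward and transition relative to what this theorem actually requires, and that inversion is fatal rather than cosmetic. You hide the instance-distinguishing bit in the \emph{transition} kernel while the reward ($1$ in $g$, $0$ in $d$) is the same for every instance in the family---indeed your Le Cam step explicitly relies on $\mathcal{M}^+$ and $\mathcal{M}^-$ ``sharing the same reward.'' But in the minimax framework the algorithm may be designed with full knowledge of the family, so an instance-independent reward is effectively a \emph{known} reward: a reward-free algorithm can supply it by itself and thereby simulate any reward-aware algorithm, e.g.\ Nash V-learning \citep{bai2020near}, which finds an $\eps$-approximate Nash equilibrium in $\tlO(H^6S(A+B)/\eps^2)$ episodes. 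Whenever $\min\{A,B\}$ is large compared to $\mathrm{poly}(H)$, this is below the threshold $cH^2SAB/\eps^2$, so there exists a reward-free algorithm that succeeds with high probability on \emph{every} instance of your family---contradicting the very conclusion you are trying to reach. (Your opening intuition, that the planner ``reads off from the reward which entry of the transition matters,'' presupposes instance-dependent rewards, which your construction then abandons.) Any valid proof must make the reward-free constraint bite, and that forces the paper's arrangement: all instances share the \emph{same} (uniform) transition kernel and differ only in their Bernoulli \emph{rewards}. Then the exploration phase reveals literally zero information, so for a deterministic algorithm the per-entry sample counts are exactly instance-independent---no anchoring argument for adaptivity is needed at all---and the single bit required for planning is locked behind whichever entry the pigeonhole shows is under-sampled.

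Even setting the above aside, the step you flag as ``delicate''---that flipping one bit opens an $\Omega(\eps)$ duality gap against every policy that is good in the other instance---is not merely delicate but unachievable in your framework. Your $\mathcal{M}^\pm$ differ from the common anchor by $\Theta(\eps)$ in value at a \emph{single} entry $(a^\star,b^\star)$, and for any fixed pair $(\mu,\nu)$ a single-entry perturbation of size $\delta$ changes the duality gap by at most $|\delta|\,(\mu(a^\star)+\nu(b^\star))$. Hence the anchor's Nash pair $(\mu^0,\nu^0)$ remains $\eps$-good in \emph{both} $\mathcal{M}^+$ and $\mathcal{M}^-$ unless $\mu^0(a^\star)+\nu^0(b^\star)$ is at least a constant; since $\sum_{a,b}\bigl(\mu^0(a)+\nu^0(b)\bigr)=A+B$, this can hold for at most $O(A+B)$ of the $AB$ entries, and an algorithm that knows the family can over-sample exactly those entries, forcing your pigeonhole entry into the region where the near-optimal sets of $\mathcal{M}^+$ and $\mathcal{M}^-$ overlap. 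This is precisely why the paper does not use a two-point test: Lemma~\ref{lem:MG-lower-lemma} converts $\eps/10$-optimality into exact \emph{identification} of $a^\star$ among $A$ candidates, and Claim~\ref{claim:2} runs an $AB$-hypothesis, Fano-style average in which the per-column references $\mathcal{M}^{0b}$ serve as common anchors for both the $1-1/A$ baseline and the KL terms---the multi-hypothesis structure is what produces the $AB$ factor that no two-point argument can. The lift to Markov games (Lemma~\ref{lem:MarkovG-lower-lemma}, Claim~\ref{claim:3}) then embeds $SH$ independent such reward-perturbed matrix games with bias $\eps/H$ behind a uniform, instance-independent transition and replaces your single target entry by a counting argument over $(h,i)$ pairs. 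To repair your write-up you would need both changes: move the hidden bits into the reward means, and replace Le Cam's two-point test by the identification argument.
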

This lower bound shows that the sample complexity in Theorem \ref{thm:rewardfree-explore} is optimal in $S$, $A,B$, and $\epsilon$. The proof of Theorem~\ref{thm:MG-lowerbound} can be found in Appendix~\ref{appendix:pf_lowerbound}.

\section{Multiplayer General-sum Markov Games} 
\label{section:multi-player-short}

In this section, we extend both our model-based algorithms (Algorithm \ref{algorithm:Nash-VI} and Algorithm \ref{alg:VI-zero}) to the setting of multiplayer general-sum Markov games, and present corresponding theoretical guarantees.

\subsection{Problem formulation}
A general-sum Markov game (general-sum MG) with $m$ players is a tuple $\MG(H, \cS, \{\cA_i\}_{i=1}^m, \P, \{r_i\}_{i=1}^m)$, where $H$, $\cS$ denote the length of each episode and the state space. Different from the two-player zero-sum setting, we now have $m$ different action spaces, where $\cA_i$ is the action space for the $i^{\text{th}}$ player and $|\cA_i| = A_i$. We let $\bm{a}:=(a_{1},\cdots,a_{m})$ denote the (tuple of) joint actions by all $m$ players. $\P = \{\P_h\}_{h\in[H]}$ is a collection of transition matrices, so that $\P_h ( \cdot | s, \bm{a}) $ gives the distribution of the next state if actions $\bm{a}$ are taken at state $s$ at step $h$, and $r_i = \{r_{h, i}\}_{h\in[H]}$ is a collection of reward functions for the $i^{\text{th}}$ player, so that $r_{h, i}(s, \bm{a})$ gives the reward received by the $i^{\text{th}}$ player if actions $\bm{a}$ are taken at state $s$ at step $h$. 

In this section, we consider three versions of equlibrium for general-sum MGs: Nash equilibrium (NE), correlated equilibrium (CE), and coarse correlated equilibrium (CCE), all being standard solution notions in games~\citep{nisan2007algorithmic}. These three notions coincide on two-player zero-sum games, but are not equivalent to each other on multi-player general-sum games; any one of them could be desired depending on the application at hand. Below we introduce their definitions.


\paragraph{(Approximate) Nash equilibrium in general-sum MGs.}
The policy of the $i^{\text{th}}$ player is denoted as $\pi_i \defeq \big\{ \pi_{h,i}: \cS \rightarrow \Delta_{\cA_i} \big\}_{h\in [H]}$. We denote the product policy of all the players as $\pi:=\pi_1 \times \cdots \times \pi_M$, and denote the policy of  all the players except the $i^{\text{th}}$ player as $\pi_{-i}$. We define $V^{\pi}_{h, i}(s)$ as the expected cumulative reward that will be received by the $i^{\text{th}}$ player if starting at state $s$ at step $h$ and all players follow policy $\pi$. For any strategy $\pi_{-i}$, there also exists a \emph{best response} of the $i^{\text{th}}$ player, which is a policy $\mu^\dagger(\pi_{-i})$ satisfying $V_{h, i}^{\mu^\dagger(\pi_{-i}), \pi_{-i}}(s) = \sup_{\pi_i} V_{h, i}^{\pi_i,\pi_{-i}}(s)$ for any $(s, h) \in \cS \times [H]$. We denote $V_{h, i}^{\dagger, \pi_{-i}} \defeq V_{h, i}^{\mu^\dagger(\pi_{-i}), \pi_{-i}}$. The Q-functions of the best response can be defined similarly.

Our first objective is to find an approximate Nash equilibrium of Markov games.
\begin{definition}[$\epsilon$-approximate Nash equilibrium in general-sum MGs] \label{def:NE_multiplayer}
A product policy $\pi$ is an \textbf{$\epsilon$-approximate Nash equilibrium} if $\max_{i\in[m]}{( V_{1,i}^{\dag, \pi_{-i}}-V_{1,i}^{\pi } )}( s_{1} ) \le \epsilon$.
\end{definition}
The above definition requires the suboptimality gap $(V_{1,i}^{\dag, \pi_{-i}}-V_{1,i}^{\pi })(s_1)$ to be less than $\epsilon$ for all player $i$. This is consistent with the two-player case (Definition~\ref{def:epsilon_Nash}) up to a constant of 2, since in the two-player zero-sum setting, we have $V_{1,1}^\pi(s_1) = -V_{1,2}^\pi(s_1)$ for any product policy $\pi = (\mu, \nu)$, and therefore $( V_{1,1}^{\dag, \nu}-V_{1,1}^{\mu, \dag} )( s_{1} ) \le 2\max_{i\in[2]}{( V_{1,i}^{\dag, \pi_{-i}}-V_{1,i}^{\pi } )}( s_{1} ) \le 2( V_{1,1}^{\dag, \nu}-V_{1,1}^{\mu, \dag} )( s_{1} )$.We can similarly define the regret.
\begin{definition}[Nash-regret in general-sum MGs]
  Let $\pi^k$ denote the (product) policy deployed by the algorithm
  in the $k^{\text{th}}$ episode. After a total of $K$ episodes, the regret is defined as
  \begin{equation*}
    \Reg_{\sf Nash}(K) =  \sum_{k=1}^K \max_{i \in [m]}(V_{1,i}^{\dag, \pi_{-i}^k}-V_{1,i}^{\pi^k } )(s_1).
  \end{equation*}
\end{definition}

\paragraph{(Approximate) CCE in general-sum MGs.}
The coarse correlated equilibrium (CCE) is a relaxed version of Nash equilibrium in which we consider general correlated policies instead of product policies. Let $\Acal = \Acal_1 \times \cdots\times \Acal_m$ denote the joint action space.
\begin{definition}
[CCE in general-sum MGs]\label{defn:CCEpolicy}
	A (correlated) policy $\pi:=\{ \pi_h(s)\in \Delta_{\Acal}: \ (h,s) \in[H]\times \cS\}$ is a \textbf{CCE} if $\max_{i\in[m]} V^{\dagger,\pi_{-i}}_{h,i}(s) \le V^{\pi}_{h,i}(s)$ for all $(s,h)\in\cS\times[H]$.
\end{definition}
Compared with a Nash equilibrium, a CEE is not necessarily a product policy, that is, we may not have $\pi_h(s)\in \Delta_{\Acal_1}\times \cdots\times \Delta_{\Acal_m}$. Similarly, we also define $\epsilon$-approximate CCE and CCE-regret  below.
\begin{definition}[$\epsilon$-approximate CCE in general-sum MGs] \label{def:CCE_multiplayer}
A policy $\pi:=\{ \pi_h(s)\in \Delta_{\Acal}: \ (h,s) \in[H]\times \cS\}$ is an \textbf{$\epsilon$-approximate CCE} if $\max_{i\in[m]}{( V_{1,i}^{\dag, \pi_{-i}}-V_{1,i}^{\pi } )}( s_{1} ) \le \epsilon$.
\end{definition}
\begin{definition}[CCE-regret in general-sum MGs]
  Let policy $\pi^k$ denote the (correlated) policy deployed by the algorithm
  in the $k^{\text{th}}$ episode. After a total of $K$ episodes, the regret is defined as
  \begin{equation*}
    \Reg_{\sf CCE}(K) =  \sum_{k=1}^K \max_{i \in [m]}(V_{1,i}^{\dag, \pi_{-i}^k}-V_{1,i}^{\pi^k } )(s_1).
  \end{equation*}
\end{definition}

\paragraph{(Approximate) CE  in general-sum MGs.}
The correlated equilibrium (CE) is another relaxation of the Nash equilibrium. To define CE, we first introduce the concept of strategy modification:
A strategy modification $\phi:=\{ \phi_{h,s}\}_{(h,s)\in[H]\times \cS}$ for player $i$ is a set of $S\times H$ functions from $\Acal_i$ to itself.  Let $\Phi_i$ denote the set of all possible strategy modifications for player $i$.

One can compose a strategy modification $\phi$ with any Markov policy $\pi$ and obtain a  new policy $\phi\circ\pi$ such that when policy $\pi$ chooses to play $\bm{a}:=(a_1,\ldots,a_m)$ at state $s$ and step $h$, policy $\phi\circ\pi$will play $(a_1,\ldots,a_{i-1},\phi_{h,s}(a_i),a_{i+1},\ldots,a_m)$ instead.
\begin{definition}
[CE in general-sum MGs]\label{defn:CEpolicy}
	A policy $\pi:=\{ \pi_h(s)\in \Delta_{\Acal}: \ (h,s) \in[H]\times \cS\}$ is a \textbf{CE} if $\max_{i\in[m]}\max_{\phi\in\Phi_i}  V^{\phi\circ\pi}_{h,i}(s) \le V^{\pi}_{h,i}(s)$ holds for all $(s,h)\in\cS\times[H]$.
\end{definition}
Similarly, we have an approximate version of CE and CE-regret.
\begin{definition}[$\epsilon$-approximate CE in Markov games] \label{def:CE_multiplayer}
A policy $\pi:=\{ \pi_h(s)\in \Delta_{\Acal}: \ (h,s) \in[H]\times \cS\}$ is an \textbf{$\epsilon$-approximate CE} if $\max_{i\in[m]}\max_{\phi\in\Phi_i} (V^{\phi\circ\pi}_{1,i} - V^{\pi}_{1,i})(s_1)\le \epsilon$.
\end{definition}
\begin{definition}[CE-regret in multiplayer Markov games]
  Let policy $\pi^k$ denote the policy deployed by the algorithm
  in the $k^{\text{th}}$ episode. After a total of $K$ episodes, the regret is defined as
  \begin{equation*}
    \Reg_{\sf CE}(K) =  \sum_{k=1}^K \max_{i \in [m]}\max_{\phi\in\Phi_i}(\ V^{\phi\circ\pi^k}_{1,i} - V^{\pi^k}_{1,i})(s_1).
  \end{equation*}
\end{definition}

\paragraph{Relationship between Nash, CE, and CCE}
For general-sum MGs, we have $\set{{\rm Nash}}\subseteq \set{{\rm CE}}\subseteq \set{{\rm CCE}}$, so that they form a nested set of notions of equilibria~\citep{nisan2007algorithmic}. Indeed, one can easily verify that if we restrict the choice of strategy modification $\phi$ to those consisting of only constant functions, i.e., $\phi_{h,s}(a)$ being independent of $a$, Definition \ref{defn:CEpolicy} will reduce to the definition of CCE policy. In addition, any Nash equilibrium is a CE by definition. Finally, since a Nash equilibrium always exists, so does CE and CCE.



\subsection{Multiplayer optimistic Nash value iteration}

Here we present the Multi-\ONASHVI \  algorithm, which is an extension of Algorithm~\ref{algorithm:Nash-VI} for multi-player general-sum Markov games.

\begin{algorithm}[t]
    \caption{Multiplayer Optimistic Nash Value Iteration (Multi-\ONASHVI)}
    \label{alg:Multi-Nash-VI}
 \begin{algorithmic}[1]
    \STATE {\bfseries Initialize:} for any $(s, \bm{a}, h, i)$,
    $\up{Q}_{h, i}(s, \bm{a})\setto H$, $\low{Q}_{h, i}(s, \bm{a})\setto 0$, $\Delta \setto H$, $N_{h}(s,\bm{a})\setto 0$.
    \FOR{episode $k=1,\dots,K$}
    \FOR{step $h=H,H-1,\dots,1$}
    \FOR{$(s, \bm{a})\in\cS\times \cA_1 \times \cdots \times \cA_m$}
    \STATE $t \setto N_{h}(s, \bm{a})$;
    \IF{$t>0$}
    \FOR{player $i=1,2,\dots, m$}
    \STATE $\up{Q}_{h,i}(s, \bm{a})\setto \min\{(r_{h,i} +
    \widehat{\P}_{h} \up{V}_{h+1,i})(s, \bm{a}) + 
    \beta_t, H\}$.
    \STATE $\low{Q}_{h,i}(s, \bm{a})\setto \max\{(r_{h,i} +
    \widehat{\P}_{h} \low{V}_{h+1,i})(s, \bm{a}) - 
    \beta_t, 0\}$.
    \ENDFOR
    \ENDIF
    \ENDFOR
    \FOR{$s \in \cS$}
    \STATE $\pi_h(\cdot|s) \setto \Eq (\up{Q}_{h,1}(s, \cdot),\up{Q}_{h,2}(s, \cdot),\cdots,\up{Q}_{h,M}(s, \cdot))$. \label{line:general-sum}
    \FOR{player $i=1,2,\dots,m$}
    \STATE $\up{V}_{h,i}(s) \leftarrow (\D_{\pi_h}\up{Q}_{h,i})(s)$; \quad $\low{V}_{h,i}(s) \leftarrow (\D_{\pi_h}\low{Q}_{h,i})(s)$.
    \ENDFOR
    \ENDFOR
    
    \ENDFOR
    \IF{$\max_{i\in[m]}(\up{V}_{1, i} - \low{V}_{1, i})(s_1) < \Delta$} \label{line:multi_term_start}
   \STATE $\Delta \leftarrow \max_{i\in[m]}(\up{V}_{1, i} - \low{V}_{1, i})(s_1)$ and $\pi^{\text{out}} \leftarrow \pi$. \label{line:multi_term_end}
   \ENDIF
   \FOR{step $h=1,\dots, H$} \label{line:multi_policy_start}
   \STATE take action $\bm{a}_h \sim  \pi_h(\cdot| s_h)$, observe reward $r_h$ and next state $s_{h+1}$. 
   \STATE add 1 to $N_{h}(s_h, \bm{a}_h)$ and $N_h(s_h, \bm{a}_h, s_{h+1})$.
   \STATE $\widehat{\P}_h(\cdot|s_h, \bm{a}_h)\setto N_h(s_h, \bm{a}_h, \cdot)/N_h(s_h, \bm{a}_h)$. \label{line:multi_policy_end}
   \ENDFOR
   \ENDFOR
   \STATE {\bfseries Output} $\pi^{\text{out}}$.
 \end{algorithmic}
 \end{algorithm}

 \paragraph{The $\Eq$ subroutine.} 
 Our $\Eq$ subroutine in Line~\ref{line:general-sum} could be taken from either one of the $\{\NASH, \CE, \CCE\}$ subroutines for \emph{one-step} games. When using $\NASH$, we compute the Nash equilibrium of a one-step multi-player game (see, e.g.,~\citet{berg2016exclusion} for an overview of the available algorithms); the worst-case computational complexity of such a subroutine will be PPAD-hard~\citep{daskalakis2013complexity}. When using $\CE$ or $\CCE$, we find CEs or CCEs of the one-step games respectively, which can be solved in polynomial time using linear programming. However, the policies found are not guaranteed to be a product policy. We remark that in Algorithm~\ref{algorithm:Nash-VI} we used the CCE subroutine for finding Nash in two-player zero-sum games, which seemingly contrasts the principle of using the right subroutine for finding the right equilibrium, but nevertheless works as the Nash equilibrium and CCE are equivalent in zero-sum games. 

Now we are ready to present the theoretical guarantees for Algorithm \ref{alg:Multi-Nash-VI}. We let $\pi^k$ denote the policy computed in line \ref{line:general-sum} of Algorithm \ref{alg:Multi-Nash-VI} in the $k^{\text{th}}$ episode.
\begin{theorem}[Multi-\ONASHVI]
   \label{thm:Multi-Nash-VI}
There exists an absolute constant $c$, for any $p \in (0, 1]$, let $\iota = \log(SABT /p)$, then with probability at least $1-p$, 
Algorithm~\ref{alg:Multi-Nash-VI} with bonus $\beta_t = c\sqrt{SH^2\iota/t}$ and $\Eq$ being one of $\{\NASH, \CE, \CCE\}$ satisfies (repsectively):
\begin{itemize}
\item $\pi^{\text{out}}$ is an $\epsilon$-approximate \{\NASH,\CE,\CCE\}, if the number of episodes $K \ge \Omega(H^4S^2 (\prod_{i=1}^m A_i) \iota/\epsilon^2)$.
\item $\Reg_{\{\sf Nash, CE, CCE\}}(K)  \le  \cO (\sqrt{H^3S^2(\prod_{i=1}^m A_i) T \iota} )$.
\end{itemize}
\end{theorem}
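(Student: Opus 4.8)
The plan is to follow the optimistic value-iteration template: establish that the maintained $\up{V}_{h,i}$ and $\low{V}_{h,i}$ sandwich the relevant per-player values (\emph{optimism}), bound the cumulative surrogate gap $\sum_k\max_i(\up{V}^k_{1,i}-\low{V}^k_{1,i})(s_1)$ by a telescoping recursion, and convert this to both the regret bound and, through the best-episode output rule, the PAC sample complexity. The whole argument runs on a single good event: for all $(h,s,\bm{a})$ and all episodes simultaneously, $\|\widehat{\P}_h(\cdot|s,\bm{a})-\P_h(\cdot|s,\bm{a})\|_1\le c\sqrt{S\iota/N_h(s,\bm{a})}$. This holds with probability at least $1-p$ by the $\ell_1$-deviation bound for empirical distributions together with a union bound over triples and over time. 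On this event $|(\widehat{\P}_h-\P_h)V(s,\bm{a})|\le H\|\widehat{\P}_h(\cdot|s,\bm{a})-\P_h(\cdot|s,\bm{a})\|_1\le\beta_t$ for every $V$ with $\|V\|_\infty\le H$, which is exactly why $\beta_t=c\sqrt{SH^2\iota/t}$ is chosen; crucially this bound is uniform over $V$ and therefore applies to the data-dependent value gap without any covering argument.

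Next I would prove, by backward induction on $h$ within each episode, that $\low{V}_{h,i}(s)\le V^{\pi}_{h,i}(s)$ and $\up{V}_{h,i}(s)\ge V^{\bullet}_{h,i}(s)$ for every player $i$, where $V^{\bullet}$ is $V^{\dagger,\pi_{-i}}$ when $\Eq\in\{\NASH,\CCE\}$ and $\max_{\phi}V^{\phi\circ\pi}$ when $\Eq=\CE$. The lower bound uses only the concentration bound and the identity $\low{V}_{h,i}=\D_{\pi_h}\low{Q}_{h,i}$. The upper bound additionally invokes the defining inequality of the one-step equilibrium returned by $\Eq$: for CCE/Nash, $\D_{\pi_h}\up{Q}_{h,i}(s)\ge\max_{a_i}\E_{\bm{a}_{-i}\sim\pi_{h,-i}}\up{Q}_{h,i}(s,a_i,\bm{a}_{-i})$, and for CE the analogous no-incentive-under-strategy-modification inequality. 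Combined with the inductive hypothesis on $\up{V}_{h+1,i}$, this closes the induction and shows that $\max_i(\up{V}_{1,i}-\low{V}_{1,i})(s_1)$ upper-bounds the suboptimality of the corresponding equilibrium notion.

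For the recursion, fix an episode $k$ and let $i_k$ attain $\max_i(\up{V}^k_{1,i}-\low{V}^k_{1,i})(s_1)$; since $\pi^k$ is computed before play, $i_k$ is determined before the episode. From $\up{Q}_{h,i}-\low{Q}_{h,i}\le\widehat{\P}_h(\up{V}_{h+1,i}-\low{V}_{h+1,i})+2\beta_t$ and splitting $\widehat{\P}_h$ into $\P_h$ (model error $\le\beta_t$) plus the realized next state, I get along the trajectory $(\up{V}_{h,i_k}-\low{V}_{h,i_k})(s^k_h)\le(\up{V}_{h+1,i_k}-\low{V}_{h+1,i_k})(s^k_{h+1})+c\beta_{t^k_h}+\xi^k_h$, with $\xi^k_h$ a martingale difference bounded by $H$. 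Telescoping over $h$ for the fixed player $i_k$ and summing over $k$ yields $\Reg(K)\le c\sum_{k,h}\beta_{t^k_h}+\sum_{k,h}\xi^k_h$; because the bonus is identical across players, freezing $i_k$ per episode avoids any extra $m$ factor. The martingale sum is $\tlO(\sqrt{H^3K\iota})$ by Azuma, and the pigeonhole inequality $\sum_{k,h}\sqrt{1/N^k_h}\le 2H\sqrt{S(\prod_i A_i)K}$ gives $\sum_{k,h}\beta_t\lesssim H^2 S\sqrt{(\prod_i A_i)K\iota}=\tlO(\sqrt{H^3S^2(\prod_i A_i)T\iota})$, matching the claimed regret. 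Since $\pi^{\text{out}}$ comes from the episode minimizing $\max_i(\up{V}_{1,i}-\low{V}_{1,i})(s_1)$, its gap is at most $\Reg(K)/K\lesssim H^2 S\sqrt{(\prod_i A_i)\iota/K}$, which is $\le\epsilon$ once $K\ge\Omega(H^4S^2(\prod_i A_i)\iota/\epsilon^2)$, giving the PAC guarantee for all three equilibria.

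I expect the main obstacle to be the upper-bound half of the optimism step for the CE case: a strategy modification remaps each realized action, so relating $\up{V}_{h,i}$ to $\max_\phi V^{\phi\circ\pi}_{h,i}$ requires threading the modification through the induction and using the precise one-step CE inequality rather than a plain best-response comparison. A secondary subtlety, resolved above, is preventing the maximum over the $m$ players from inflating the rate, which works precisely because the bonus term does not depend on $i$ and the maximizing player can be frozen within each episode.
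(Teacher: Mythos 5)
Your proposal is correct and follows essentially the same route as the paper: the same uniform $\ell_1$-concentration event justifying the $c\sqrt{SH^2\iota/t}$ bonus (uniform over data-dependent value functions, hence no covering argument), the same backward-induction optimism lemmas for the Nash/CE/CCE cases (including the one-step equilibrium inequality, and for CE the Bellman-style recursion for strategy modifications), and the same telescoping-plus-pigeonhole regret bound converted to the PAC guarantee via the best-episode output rule. The only cosmetic difference is the bookkeeping for the maximum over players: the paper constructs a player-independent dominating process $\widetilde{V}^k_h$ satisfying $\max_i(\up{V}^k_{h,i}-\low{V}^k_{h,i})\le \widetilde{V}^k_h$ and telescopes that surrogate, whereas you freeze the argmax player $i_k$ at the root of each episode (valid, as you note, because $i_k$ is determined before the episode's randomness, so the martingale argument still applies); both devices equally avoid an extra factor of $m$.
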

In the situation where the \Eq~subroutine is taken as \NASH, Theorem~\ref{thm:Multi-Nash-VI} provides the sample complexity bound of Multi-\ONASHVI~algorithm to find an $\epsilon$-approximate Nash equilibrium and its regret bound. Compared with our earlier result in two-player zero-sum games (Theorem~\ref{thm:Nash-VI}), here the sample complexity scales as $S^2H^4$ instead of $SH^3$. This is because the auxiliary bonus and Bernstein concentration technique do not apply here. Furthermore, the sample complexity is proportional to $\prod_{i=1}^m A_i$, which increases exponentially as the number of players increases.

\paragraph{Runtime of Algorithm~\ref{alg:Multi-Nash-VI}}
We remark that while the Nash guarantee is the strongest among the three guarantees presented in Theorem~\ref{thm:Multi-Nash-VI}, the runtime of Algorithm~\ref{alg:Multi-Nash-VI} in the Nash case is not guaranteed to be polynomial and in the worst case PPAD-hard (due to the hardness of the \NASH~subroutine). In contrast, the CE and CCE guarantees are weaker, but the corresponding algorithms are guaranteed to finish in polynomial time.

\subsection{Multiplayer reward-free learning}


\begin{algorithm}[h]
   \caption{Multiplayer Optimistic Value Iteration with Zero Reward (Multi-VI-Zero)}
   \label{alg:multi-VI-zero}
\begin{algorithmic}[1]
   \STATE {\bfseries Initialize:} for any $(s, \bm{a}, h)$,
   ${\Vt}_{h}(s,\bm{a})\setto H$, $\Delta \setto H$, $N_{h}(s,\bm{a})\setto 0$.
   \FOR{episode $k=1,\dots,K$}
   \FOR{step $h=H,H-1,\dots,1$}
   \FOR{$(s, \bm{a})\in\cS\times\cA_1 \times \cdots \times \cA_m$}
   \STATE $t \setto N_{h}(s, \bm{a})$.
   \IF{$t>0$}
   \STATE $\Qt_{h}(s, \bm{a})\setto \min\{(\widehat{\P}_{h} {\Vt}_{h+1})(s, \bm{a}) + \beta_t, H\}$. 
   \ENDIF
   \ENDFOR
   \FOR{$s \in \cS$}
   \STATE $\pi_h(s) \setto \arg\max_{\bm{a}\in \cA_1 \times \cdots \times \cA_m} \Qt_{h}(s, \bm{a})$. 
   \STATE $\Vt_h(s) \setto  (\D_{\pi_h}\Qt_{h})(s)$.
   \ENDFOR
   \ENDFOR
   \IF{$\Vt_1(s_1) < \Delta$}
   \STATE $\Delta \leftarrow \Vt_1(s_1)$ and $\Phat^{\text{out}} \leftarrow \Phat$.
   \ENDIF
   \FOR{step $h=1,\dots, H$}
   \STATE take action $\bm{a}_h \sim  \pi_h(\cdot, \cdot| s_h)$, observe next state $s_{h+1}$. 
   \STATE add 1 to $N_{h}(s_h, \bm{a}_h)$ and $N_h(s_h, \bm{a}_h, s_{h+1})$.
   \STATE $\widehat{\P}_h(\cdot|s_h, \bm{a}_h)\setto N_h(s_h, \bm{a}_h, \cdot)/N_h(s_h, \bm{a}_h)$.
   \ENDFOR
   \ENDFOR
  \STATE {\bfseries Output} $\Phat^{\text{out}}$.
\end{algorithmic}
\end{algorithm}

We can also generalize VI-Zero to the multiplayer  setting and obtain Algorithm \ref{alg:multi-VI-zero}, Multi-VI-Zero, 
which is almost the same as VI-Zero except that its exploration bonus $\beta_t$ is larger than that of VI-Zero by a $\sqrt{S}$ factor. 

Similar to Theorem~\ref{thm:rewardfree-explore}, we have the following theoretical guarantee claiming that  
any \{\NASH,\CCE,\CE\} of the $\cM(\Phat,\rhat^i)$ ($i\in[N]$) is also an approximate \{\NASH,\CCE,\CE\} of the true Markov game $\cM(\P,r^i)$,
where $\Phat^{\text{out}}$ is the empirical transition outputted by Algorithm \ref{alg:multi-VI-zero} and $\rhat^i$ is the empirical estimate of $r^i$.

\begin{theorem}[Multi-\OVIZERO] \label{thm:multi-rewardfree-explore} 
There exists an absolute constant $c$, for any $p \in (0, 1]$, $\epsilon \in (0, H]$, $N \in \N$, if we choose bonus $\beta_t = c\sqrt{ H^2S\iota /t}$ with $\iota = \log(NSABT /p)$ and $K \ge c(H^4 S^2 (\prod_{i=1}^m A_i) \iota/\epsilon^2)$, then with probability at least $1-p$, the output $\Phat^{\text{out}}$ of Algorithm~\ref{alg:multi-VI-zero}
  has the following property:
for any $N$ fixed reward functions $r^1,\ldots,r^N$, any \{\NASH,\CCE,\CE\} of Markov game $\cM(\Phat^{\text{out}}, \rhat^i)$ is also an $\epsilon$-approximate \{\NASH,\CCE,\CE\} of the true Markov game $\cM(\P,r^i)$ for all $i\in [N]$.
\end{theorem}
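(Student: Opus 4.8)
The plan is to mirror the analysis of the two-player reward-free result (Theorem~\ref{thm:rewardfree-explore}) while replacing the two-player equilibrium arguments with the nested $\set{\NASH,\CE,\CCE}$ structure developed in Section~\ref{section:multi-player-short}. The central object is a uniform bound showing that the empirical transition $\Phat^{\text{out}}$ output by Multi-\OVIZERO\ is ``good enough'' that planning on $\cM(\Phat^{\text{out}},\rhat^i)$ transfers back to $\cM(\P,r^i)$. Concretely, I would first establish an analogue of Lemma~\ref{lem:MG-closeQpiQhat}: the quantity $\Qt_h(s,\bm{a})$ maintained by Multi-\OVIZERO\ (with the inflated bonus $\beta_t = c\sqrt{H^2 S\iota/t}$) is, with probability at least $1-p$, a valid upper bound on the worst-case value-estimation error incurred by \emph{any} player who plans using $\Phat$ from step $h$ onward. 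The extra $\sqrt{S}$ in the bonus (compared to the two-player reward-aware bonus) is needed precisely because we can no longer use Bernstein concentration or the auxiliary bonus $\gamma$---without reward information the value iteration only tracks uncertainty, so the crude $H^2 S\iota/t$-type Hoeffding bound governs the concentration of $\widehat{\P}_h V$ against $\P_h V$ for all $V\in[0,H]^S$ simultaneously via a covering/union-bound argument over the $(s,\bm{a},h)$ triples.

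Next I would quantify the exploration guarantee. As in the two-player case, define $\Vt_1(s_1)$ as the running upper bound on cumulative uncertainty, and observe that because the policy $\pi_h$ in Line~\ref{line:zero_greedy} is the \emph{maximizing} policy of $\Qt_h$, the value $\Vt_1^k(s_1)$ of episode $k$ upper-bounds the on-policy accumulated uncertainty. A standard potential/pigeonhole argument over the visitation counts $N_h(s,\bm{a})$ then shows $\sum_{k=1}^K \Vt_1^k(s_1) \le \tlO(\sqrt{H^4 S^2(\prod_i A_i)\iota\, K})$; dividing by $K$ and using the $\min$-selection rule for $\Phat^{\text{out}}$ gives $\Vt_1^{\text{out}}(s_1)\le \epsilon$ once $K\ge c H^4 S^2(\prod_i A_i)\iota/\epsilon^2$. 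Note the counting is over joint-action cells, which is why the sample complexity carries $\prod_{i=1}^m A_i$ rather than $\sum_i A_i$, and the $S^2$ arises from the Hoeffding-only bonus (one $S$ from the bonus, one from the value-iteration error propagation).

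The crux---and the step that differs genuinely from the two-player proof---is the \textbf{transfer lemma} across all three equilibrium notions at once. I would show that for any fixed reward $r^i$ and any player $i$, the planning error satisfies a bound of the form $\abs{V^{\pi}_{1,i}(s_1;\P,r^i) - V^{\pi}_{1,i}(s_1;\Phat^{\text{out}},\rhat^i)}\le \Vt_1^{\text{out}}(s_1)\le\epsilon/2$ uniformly over product \emph{and} correlated policies $\pi$, and the same for deviation values $V^{\dagger,\pi_{-i}}_{1,i}$ and strategy-modified values $V^{\phi\circ\pi}_{1,i}$. This uniformity is exactly what Lemma~\ref{lem:MG-closeQpiQhat}'s multiplayer analogue buys us, since the uncertainty bound $\Qt_h$ is policy-independent (it maximizes over joint actions). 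Given this, if $\pi$ is an exact \{\NASH,\CE,\CCE\} of $\cM(\Phat^{\text{out}},\rhat^i)$, then for CCE we chain $\max_i(V^{\dagger,\pi_{-i}}_{1,i}-V^{\pi}_{1,i})(s_1;\P,r^i)\le \max_i(V^{\dagger,\pi_{-i}}_{1,i}-V^{\pi}_{1,i})(s_1;\Phat^{\text{out}},\rhat^i) + \epsilon \le 0+\epsilon$, using that best responses and deviations in the \emph{true} game can only be worse by the uniform error $\epsilon/2$ on each side. The CE case is identical after replacing the best-response $\dagger$ with the supremum over $\phi\in\Phi_i$, and Nash follows since Nash $\subseteq$ CE. The single transfer lemma thus handles all three notions simultaneously because each is defined through the same value-function comparisons, and the key obstacle is verifying that the uniform-over-policies transfer survives the inner maximization over $\phi$ (for CE) and over unilateral deviations (for CCE)---i.e. that the error bound holds for the deviating policies $\phi\circ\pi$ and $(\dagger,\pi_{-i})$, not merely for $\pi$ itself, which is guaranteed precisely because $\Qt_h$ bounds uncertainty for every action choice.
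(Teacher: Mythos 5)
Your proposal is correct, and it rests on the same pillars as the paper's proof: the $\sqrt{S}$-inflated Hoeffding bonus giving concentration of $(\Phat_h-\P_h)V$ uniformly over all $V\in[0,H]^{S}$, a backward induction showing that $\Qt_h,\Vt_h$ dominate the propagated model error, the pigeonhole bound $\sum_{k}\Vt_1^k(s_1)\le\tlO\big(\sqrt{H^4S^2(\prod_i A_i)K\iota}\big)$, and a triangle inequality chained through the exact equilibrium of the empirical game. The genuine difference is in the decomposition. You prove a single policy-uniform transfer lemma, $|V^{\pi}_{1,i}(s_1;\P,r^i)-V^{\pi}_{1,i}(s_1;\Phat^{\rm out},\rhat^i)|\le\Vt_1(s_1)$ for \emph{all} correlated policies $\pi$ simultaneously (legitimate, since the concentration event is policy-independent), then transfer the deviation values $V^{\dagger,\pi_{-i}}_{1,i}$ and $\max_{\phi}V^{\phi\circ\pi}_{1,i}$ by exchanging the maximum with the difference, and invoke the equilibrium property of the empirical game only once, at $(s_1,1)$. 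The paper (Lemmas~\ref{lem:general-rewardfree-1}--\ref{lem:general-rewardfree-4}) instead bounds the \emph{true-game} best-response/modified values directly against the \emph{empirical equilibrium} value $\Vhat^k_{h,i}$, weaving the equilibrium fixed-point property (e.g.\ $\Vhat^k_{h,i}\ge\max_{\mu}\D_{\mu\times\pi^k_{-i}}\Qhat^k_{h,i}$ for CCE) into every step of the backward induction via the Bellman optimality equations; this avoids quantifying over the infinite class of deviating policies inside a single lemma, at the price of running one induction per equilibrium notion, whereas your version is more modular and makes the $0+\epsilon/2+\epsilon/2$ chaining transparent. Two small points to tighten: your reduction ``Nash follows since Nash $\subseteq$ CE'' is cleanest through CCE rather than CE (the $\epsilon$-approximate CCE gap is literally the same expression as the $\epsilon$-approximate Nash gap, and the output is a product policy; going through CE needs the extra observation that for product policies the CE gap dominates the best-response gap), and the uniformity over value functions comes from an $\ell_1$ (or covering) bound on $\|(\Phat_h-\P_h)(\cdot\,|\,s,\bm{a})\|_1$ together with a union bound over $(s,\bm{a},h)$ and visitation counts, not from a covering of the triples themselves.
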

The proof of Theorem~\ref{thm:multi-rewardfree-explore} can be found in Appendix~\ref{appendix:multi-rewardfree}.
It is worth mentioning that the empirical Markov game $\cM(\Phat^{\text{out}}, \rhat^i)$ may have multiple \{Nash equilibria,CCEs,CEs\} and Theorem \ref{thm:multi-rewardfree-explore} ensures that all of them are $\epsilon$-approximate \{Nash equilibria,CCEs,CEs\} of the true Markov game.
Also, note that the sample complexity here is quadratic in the number of states
because we are using the exploration bonus $\beta_t = \sqrt{ H^2S\iota /t}$ that is  larger than usual by a $\sqrt{S}$ factor. 

\section{Conclusion} \label{sec:conclu}

In this paper, we provided a sharp analysis of model-based algorithms for Markov games. Our new algorithm Nash-VI can find an $\epsilon$-approximate Nash equilibrium of a zero-sum Markov game in $\tlO(H^3SAB/\epsilon^2)$ episodes of game playing, which almost matches the sample complexity lower bound except for the $AB$ vs. $A+B$ dependency. We also applied our analysis to derive new efficient algorithms for task-agnostic game playing, as well as the first line of multi-player general-sum Markov games. There are a number of compelling future directions to this work. For example, can we achieve $A+B$ instead of $AB$ sample complexity for zero-sum games using model-based approaches (thus closing the gap between lower and upper bounds)? How can we design more efficient algorithms for general-sum games with better sample complexity (e.g., $\cO(S)$ instead of $\cO(S^2)$)? We leave these problems as future work. 


\bibliography{ref}

\begin{thebibliography}{41}
\providecommand{\natexlab}[1]{#1}
\providecommand{\url}[1]{\texttt{#1}}
\expandafter\ifx\csname urlstyle\endcsname\relax
  \providecommand{\doi}[1]{doi: #1}\else
  \providecommand{\doi}{doi: \begingroup \urlstyle{rm}\Url}\fi

\bibitem[Azar et~al.(2017)Azar, Osband, and Munos]{azar2017minimax}
Mohammad~Gheshlaghi Azar, Ian Osband, and R{\'e}mi Munos.
\newblock Minimax regret bounds for reinforcement learning.
\newblock In \emph{International Conference on Machine Learning}, pages
  263--272, 2017.

\bibitem[Bai and Jin(2020)]{bai2020provable}
Yu~Bai and Chi Jin.
\newblock Provable self-play algorithms for competitive reinforcement learning.
\newblock \emph{arXiv preprint arXiv:2002.04017}, 2020.

\bibitem[Bai et~al.(2020)Bai, Jin, and Yu]{bai2020near}
Yu~Bai, Chi Jin, and Tiancheng Yu.
\newblock Near-optimal reinforcement learning with self-play.
\newblock \emph{arXiv preprint arXiv:2006.12007}, 2020.

\bibitem[Baker et~al.(2020)Baker, Kanitscheider, Markov, Wu, Powell, McGrew,
  and Mordatch]{baker2020emergent}
Bowen Baker, Ingmar Kanitscheider, Todor Markov, Yi~Wu, Glenn Powell, Bob
  McGrew, and Igor Mordatch.
\newblock Emergent tool use from multi-agent autocurricula.
\newblock In \emph{International Conference on Learning Representations}, 2020.
\newblock URL \url{https://openreview.net/forum?id=SkxpxJBKwS}.

\bibitem[Berg and Sandholm(2016)]{berg2016exclusion}
Kimmo Berg and Tuomas Sandholm.
\newblock Exclusion method for finding nash equilibrium in multi-player games.
\newblock In \emph{Proceedings of the 2016 International Conference on
  Autonomous Agents \& Multiagent Systems}, pages 1417--1418, 2016.

\bibitem[Brafman and Tennenholtz(2002)]{brafman2002r}
Ronen~I Brafman and Moshe Tennenholtz.
\newblock R-max-a general polynomial time algorithm for near-optimal
  reinforcement learning.
\newblock \emph{Journal of Machine Learning Research}, 3\penalty0
  (Oct):\penalty0 213--231, 2002.

\bibitem[Dann and Brunskill(2015)]{dann2015sample}
Christoph Dann and Emma Brunskill.
\newblock Sample complexity of episodic fixed-horizon reinforcement learning.
\newblock In \emph{Advances in Neural Information Processing Systems}, pages
  2818--2826, 2015.

\bibitem[Dann et~al.(2017)Dann, Lattimore, and Brunskill]{dann2017unifying}
Christoph Dann, Tor Lattimore, and Emma Brunskill.
\newblock Unifying pac and regret: Uniform pac bounds for episodic
  reinforcement learning.
\newblock In \emph{Advances in Neural Information Processing Systems}, pages
  5713--5723, 2017.

\bibitem[Daskalakis(2013)]{daskalakis2013complexity}
Constantinos Daskalakis.
\newblock On the complexity of approximating a nash equilibrium.
\newblock \emph{ACM Transactions on Algorithms (TALG)}, 9\penalty0
  (3):\penalty0 23, 2013.

\bibitem[Domingues et~al.(2020)Domingues, M{\'e}nard, Kaufmann, and
  Valko]{domingues2020episodic}
Omar~Darwiche Domingues, Pierre M{\'e}nard, Emilie Kaufmann, and Michal Valko.
\newblock Episodic reinforcement learning in finite mdps: Minimax lower bounds
  revisited.
\newblock \emph{arXiv preprint arXiv:2010.03531}, 2020.

\bibitem[Filar and Vrieze(2012)]{filar2012competitive}
Jerzy Filar and Koos Vrieze.
\newblock \emph{Competitive Markov decision processes}.
\newblock Springer Science \& Business Media, 2012.

\bibitem[Hansen et~al.(2013)Hansen, Miltersen, and Zwick]{hansen2013strategy}
Thomas~Dueholm Hansen, Peter~Bro Miltersen, and Uri Zwick.
\newblock Strategy iteration is strongly polynomial for 2-player turn-based
  stochastic games with a constant discount factor.
\newblock \emph{Journal of the ACM (JACM)}, 60\penalty0 (1):\penalty0 1--16,
  2013.

\bibitem[Hu and Wellman(2003)]{hu2003nash}
Junling Hu and Michael~P Wellman.
\newblock Nash q-learning for general-sum stochastic games.
\newblock \emph{Journal of machine learning research}, 4\penalty0
  (Nov):\penalty0 1039--1069, 2003.

\bibitem[Jaksch et~al.(2010)Jaksch, Ortner, and Auer]{jaksch2010near}
Thomas Jaksch, Ronald Ortner, and Peter Auer.
\newblock Near-optimal regret bounds for reinforcement learning.
\newblock \emph{Journal of Machine Learning Research}, 11\penalty0
  (Apr):\penalty0 1563--1600, 2010.

\bibitem[Jia et~al.(2019)Jia, Yang, and Wang]{jia2019feature}
Zeyu Jia, Lin~F Yang, and Mengdi Wang.
\newblock Feature-based q-learning for two-player stochastic games.
\newblock \emph{arXiv preprint arXiv:1906.00423}, 2019.

\bibitem[Jin et~al.(2018)Jin, Allen-Zhu, Bubeck, and Jordan]{jin2018q}
Chi Jin, Zeyuan Allen-Zhu, Sebastien Bubeck, and Michael~I Jordan.
\newblock Is {Q}-learning provably efficient?
\newblock In \emph{Advances in Neural Information Processing Systems}, pages
  4868--4878, 2018.

\bibitem[Jin et~al.(2019)Jin, Jin, Luo, Sra, and Yu]{jin2019learning}
Chi Jin, Tiancheng Jin, Haipeng Luo, Suvrit Sra, and Tiancheng Yu.
\newblock Learning adversarial markov decision processes with bandit feedback
  and unknown transition.
\newblock \emph{arXiv preprint arXiv:1912.01192}, 2019.

\bibitem[Jin et~al.(2020)Jin, Krishnamurthy, Simchowitz, and Yu]{jin2019reward}
Chi Jin, Akshay Krishnamurthy, Max Simchowitz, and Tiancheng Yu.
\newblock Reward-free exploration for reinforcement learning.
\newblock \emph{arXiv preprint arXiv:2002.02794}, 2020.

\bibitem[Kearns and Singh(2002)]{kearns2002near}
Michael Kearns and Satinder Singh.
\newblock Near-optimal reinforcement learning in polynomial time.
\newblock \emph{Machine learning}, 49\penalty0 (2-3):\penalty0 209--232, 2002.

\bibitem[Lattimore and Szepesv{\'a}ri(2018)]{lattimore2018bandit}
Tor Lattimore and Csaba Szepesv{\'a}ri.
\newblock Bandit algorithms.
\newblock 2018.

\bibitem[Lee et~al.(2020)Lee, Luo, Wei, and Zhang]{lee2020linear}
Chung-Wei Lee, Haipeng Luo, Chen-Yu Wei, and Mengxiao Zhang.
\newblock Linear last-iterate convergence for matrix games and stochastic
  games.
\newblock \emph{arXiv preprint arXiv:2006.09517}, 2020.

\bibitem[Littman(1994)]{littman1994markov}
Michael~L Littman.
\newblock Markov games as a framework for multi-agent reinforcement learning.
\newblock In \emph{Machine learning proceedings 1994}, pages 157--163.
  Elsevier, 1994.

\bibitem[Littman(2001)]{littman2001friend}
Michael~L Littman.
\newblock Friend-or-foe q-learning in general-sum games.
\newblock In \emph{ICML}, volume~1, pages 322--328, 2001.

\bibitem[Nisan et~al.(2007)Nisan, Roughgarden, Tardos, and
  Vazirani]{nisan2007algorithmic}
Noam Nisan, Tim Roughgarden, Eva Tardos, and Vijay~V Vazirani.
\newblock \emph{Algorithmic Game Theory}.
\newblock Cambridge University Press, 2007.

\bibitem[OpenAI(2018)]{openaidota}
OpenAI.
\newblock Openai five.
\newblock \url{https://blog.openai.com/openai-five/}, 2018.

\bibitem[Osband et~al.(2014)Osband, Van~Roy, and Wen]{osband2014generalization}
Ian Osband, Benjamin Van~Roy, and Zheng Wen.
\newblock Generalization and exploration via randomized value functions.
\newblock \emph{arXiv preprint arXiv:1402.0635}, 2014.

\bibitem[Rosenberg and Mansour(2019)]{rosenberg2019online}
Aviv Rosenberg and Yishay Mansour.
\newblock Online convex optimization in adversarial markov decision processes.
\newblock \emph{arXiv preprint arXiv:1905.07773}, 2019.

\bibitem[Shapley(1953)]{shapley1953stochastic}
Lloyd~S Shapley.
\newblock Stochastic games.
\newblock \emph{Proceedings of the national academy of sciences}, 39\penalty0
  (10):\penalty0 1095--1100, 1953.

\bibitem[Sidford et~al.(2019)Sidford, Wang, Yang, and Ye]{sidford2019solving}
Aaron Sidford, Mengdi Wang, Lin~F Yang, and Yinyu Ye.
\newblock Solving discounted stochastic two-player games with near-optimal time
  and sample complexity.
\newblock \emph{arXiv preprint arXiv:1908.11071}, 2019.

\bibitem[Silver et~al.(2016)Silver, Huang, Maddison, Guez, Sifre, Van
  Den~Driessche, Schrittwieser, Antonoglou, Panneershelvam, Lanctot,
  et~al.]{silver2016mastering}
David Silver, Aja Huang, Chris~J Maddison, Arthur Guez, Laurent Sifre, George
  Van Den~Driessche, Julian Schrittwieser, Ioannis Antonoglou, Veda
  Panneershelvam, Marc Lanctot, et~al.
\newblock Mastering the game of go with deep neural networks and tree search.
\newblock \emph{nature}, 529\penalty0 (7587):\penalty0 484, 2016.

\bibitem[Silver et~al.(2017)Silver, Schrittwieser, Simonyan, Antonoglou, Huang,
  Guez, Hubert, Baker, Lai, Bolton, et~al.]{silver2017mastering}
David Silver, Julian Schrittwieser, Karen Simonyan, Ioannis Antonoglou, Aja
  Huang, Arthur Guez, Thomas Hubert, Lucas Baker, Matthew Lai, Adrian Bolton,
  et~al.
\newblock Mastering the game of go without human knowledge.
\newblock \emph{nature}, 550\penalty0 (7676):\penalty0 354--359, 2017.

\bibitem[Strehl et~al.(2006)Strehl, Li, Wiewiora, Langford, and
  Littman]{strehl2006pac}
Alexander~L Strehl, Lihong Li, Eric Wiewiora, John Langford, and Michael~L
  Littman.
\newblock {PAC} model-free reinforcement learning.
\newblock In \emph{International Conference on Machine Learning}, pages
  881--888, 2006.

\bibitem[Vinyals et~al.(2019)Vinyals, Babuschkin, Czarnecki, Mathieu, Dudzik,
  Chung, Choi, Powell, Ewalds, Georgiev, et~al.]{vinyals2019grandmaster}
Oriol Vinyals, Igor Babuschkin, Wojciech~M Czarnecki, Micha{\"e}l Mathieu,
  Andrew Dudzik, Junyoung Chung, David~H Choi, Richard Powell, Timo Ewalds,
  Petko Georgiev, et~al.
\newblock Grandmaster level in starcraft ii using multi-agent reinforcement
  learning.
\newblock \emph{Nature}, 575\penalty0 (7782):\penalty0 350--354, 2019.

\bibitem[Wang et~al.(2020)Wang, Du, Yang, and Salakhutdinov]{wang2020reward}
Ruosong Wang, Simon~S Du, Lin~F Yang, and Ruslan Salakhutdinov.
\newblock On reward-free reinforcement learning with linear function
  approximation.
\newblock \emph{arXiv preprint arXiv:2006.11274}, 2020.

\bibitem[Wei et~al.(2017)Wei, Hong, and Lu]{wei2017online}
Chen-Yu Wei, Yi-Te Hong, and Chi-Jen Lu.
\newblock Online reinforcement learning in stochastic games.
\newblock In \emph{Advances in Neural Information Processing Systems}, pages
  4987--4997, 2017.

\bibitem[Xie et~al.(2020)Xie, Chen, Wang, and Yang]{xie2020learning}
Qiaomin Xie, Yudong Chen, Zhaoran Wang, and Zhuoran Yang.
\newblock Learning zero-sum simultaneous-move markov games using function
  approximation and correlated equilibrium.
\newblock \emph{arXiv preprint arXiv:2002.07066}, 2020.

\bibitem[Yadkori et~al.(2013)Yadkori, Bartlett, Kanade, Seldin, and
  Szepesv{\'a}ri]{yadkori2013online}
Yasin~Abbasi Yadkori, Peter~L Bartlett, Varun Kanade, Yevgeny Seldin, and Csaba
  Szepesv{\'a}ri.
\newblock Online learning in markov decision processes with adversarially
  chosen transition probability distributions.
\newblock In \emph{Advances in neural information processing systems}, pages
  2508--2516, 2013.

\bibitem[Zhang et~al.(2020{\natexlab{a}})Zhang, Kakade, Ba{\c{s}}ar, and
  Yang]{zhang2020model}
Kaiqing Zhang, Sham~M Kakade, Tamer Ba{\c{s}}ar, and Lin~F Yang.
\newblock Model-based multi-agent rl in zero-sum markov games with near-optimal
  sample complexity.
\newblock \emph{arXiv preprint arXiv:2007.07461}, 2020{\natexlab{a}}.

\bibitem[Zhang et~al.(2020{\natexlab{b}})Zhang, Ma, and Singla]{zhang2020task}
Xuezhou Zhang, Yuzhe Ma, and Adish Singla.
\newblock Task-agnostic exploration in reinforcement learning.
\newblock \emph{arXiv preprint arXiv:2006.09497}, 2020{\natexlab{b}}.

\bibitem[Zhang et~al.(2020{\natexlab{c}})Zhang, Zhou, and Ji]{zhang2020almost}
Zihan Zhang, Yuan Zhou, and Xiangyang Ji.
\newblock Almost optimal model-free reinforcement learning via
  reference-advantage decomposition.
\newblock \emph{arXiv preprint arXiv:2004.10019}, 2020{\natexlab{c}}.

\bibitem[Zimin and Neu(2013)]{zimin2013online}
Alexander Zimin and Gergely Neu.
\newblock Online learning in episodic markovian decision processes by relative
  entropy policy search.
\newblock In \emph{Advances in neural information processing systems}, pages
  1583--1591, 2013.

\end{thebibliography}
\bibliographystyle{plainnat}

\clearpage

\appendix

\section{Bellman Equations for Markov Games}
\label{app:bellman}

In this section, we present the Bellman equations for different types of values in Markov games.

\paragraph{Fixed policies.} For any pair of Markov policy $(\mu, \nu)$, by definition of their values in \eqref{eq:V_value} \eqref{eq:Q_value}, we have the following Bellman equations:
\begin{align*}
  Q^{\mu, \nu}_{h}(s, a, b) =  (r_h + \P_h V^{\mu, \nu}_{h+1})(s, a, b), \qquad   
  V^{\mu, \nu}_{h}(s)  =  (\D_{\mu_h\times\nu_h} Q^{\mu, \nu}_h)(s) 
\end{align*}
for all $(s, a, b, h) \in \cS \times \cA \times \cB \times [H]$, where $V^{\mu, \nu}_{H+1}(s) = 0$ for all $s \in \cS$.

\paragraph{Best responses.} For any Markov policy $\mu$ of the max-player, by definition, we have the following Bellman equations for values of its best response:
\begin{align*}
Q^{\mu, \dagger}_{h}(s, a, b) = (r_h + \P_h V^{\mu, \dagger}_{h+1})(s, a, b), \qquad
V^{\mu, \dagger}_{h}(s) = \inf_{\nu \in \Delta_{\cB}} (\D_{\mu_h \times \nu} Q^{\mu, \dagger}_h)(s),
\end{align*}
for all $(s, a, b, h) \in \cS \times \cA \times \cB \times [H]$, where $V^{\mu, \dagger}_{H+1}(s) = 0$ for all $s \in \cS$.

Similarly, for any Markov policy $\nu$ of the min-player, we also have the following symmetric version of Bellman equations for values of its best response:
\begin{align*}
Q^{\dagger, \nu}_{h}(s, a, b) =  (r_h + \P_h V^{\dagger, \nu}_{h+1})(s, a, b), \qquad 
V^{\dagger, \nu}_{h}(s) = \sup_{\mu \in \Delta_{\cA}} (\D_{\mu \times \nu_h} Q^{\dagger, \nu}_h)(s).
\end{align*}
for all $(s, a, b, h) \in \cS \times \cA \times \cB \times [H]$, where $V^{\dagger, \nu}_{H+1}(s) = 0$ for all $s \in \cS$.

\paragraph{Nash equilibria.} Finally, by definition of Nash equilibria in Markov games, we have the following Bellman optimality equations:
\begin{align*}
Q^{\nash}_{h}(s, a, b) = &  (r_h + \P_h V^{\nash}_{h+1})(s, a, b) \\
V^{\nash}_{h}(s) =&
  \sup_{\mu \in \Delta_{\cA}}\inf_{\nu \in \Delta_{\cB}} (\D_{\mu \times \nu} Q^{\nash}_h)(s)
  = \inf_{\nu \in \Delta_{\cB}}\sup_{\mu \in \Delta_{\cA}} (\D_{\mu \times \nu} Q^{\nash}_h)(s)
\end{align*}
for all $(s, a, b, h) \in \cS \times \cA \times \cB \times [H]$, where $V^{\nash}_{H+1}(s) = 0$ for all $s \in \cS$.


\section{Properties of Coarse Correlated Equilibrium}
\label{app:CCE}
Recall the definition for CCE in our main paper \eqref{problem:CCE}, we restate it here after rescaling. For any pair of matrices $P, Q \in [0, 1]^{n\times m}$, the subroutine $\textsc{CCE}(P, Q)$ returns a distribution $\pi \in \Delta_{n \times m}$ that satisfies:
\begin{align}
\E_{(a, b) \sim \pi} P(a, b) \ge& \max_{a^\star} \E_{(a, b) \sim \pi} P(a^\star, b)  \label{eq:constraints_CCE}\\
\E_{(a, b) \sim \pi} Q(a, b) \le& \min_{b^\star} \E_{(a, b) \sim \pi} Q(a, b^\star) \nonumber
\end{align}
We make three remarks on CCE. First, a CCE always exists since a Nash equilibrium for a general-sum game with payoff matrices $(P, Q)$ is also a CCE defined by $(P, Q)$, and a Nash equilibrium always exists. Second, a CCE can be efficiently computed, since above constraints \eqref{eq:constraints_CCE} for CCE can be rewritten as $n+m$ linear constraints on $\pi \in \Delta_{n \times m}$, which can be efficiently resolved by standard linear programming algorithm. Third, a CCE in general-sum games needs not to be a Nash equilibrium. However, a CCE in zero-sum games is guaranteed to be a Nash equalibrium.
\begin{proposition}\label{prop:CCE}
Let $\pi = \textsc{CCE}(Q, Q)$, and $(\mu, \nu)$ be the marginal distribution over both players' actions induced by $\pi$. Then $(\mu, \nu)$ is a Nash equilibrium for payoff matrix $Q$.
\end{proposition}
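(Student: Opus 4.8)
The plan is to unfold the two defining inequalities of $\textsc{CCE}(Q,Q)$ and show that, when the two payoff matrices coincide, they force a chain of inequalities that collapses to equalities, from which the two Nash deviation conditions fall out immediately. First I would set $V \defeq \E_{(a,b)\sim\pi} Q(a,b)$ and specialize the CCE constraints \eqref{eq:constraints_CCE} to $P=Q$, giving $V \ge \max_{a^\star}\E_{(a,b)\sim\pi}Q(a^\star,b)$ and $V \le \min_{b^\star}\E_{(a,b)\sim\pi}Q(a,b^\star)$. The key reduction is that $\E_{(a,b)\sim\pi}Q(a^\star,b)$ depends on the joint $\pi$ only through its $b$-marginal $\nu$, so it equals $\E_{b\sim\nu}Q(a^\star,b)$; symmetrically $\E_{(a,b)\sim\pi}Q(a,b^\star)=\E_{a\sim\mu}Q(a,b^\star)$. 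This is precisely the step that lets me trade the correlated object $\pi$ for its marginals $(\mu,\nu)$.

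Next I would sandwich $V$ against the product-of-marginals payoff $\mu\trans Q\nu = \E_{a\sim\mu,\,b\sim\nu}Q(a,b)$. Because a maximum over pure actions dominates any average over $\mu$, we have $\max_{a^\star}\E_{b\sim\nu}Q(a^\star,b)\ge \mu\trans Q\nu$; because a minimum over pure actions is dominated by any average over $\nu$, we have $\min_{b^\star}\E_{a\sim\mu}Q(a,b^\star)\le \mu\trans Q\nu$. Chaining these with the two CCE inequalities yields $\mu\trans Q\nu \le \max_{a^\star}\E_{b\sim\nu}Q(a^\star,b)\le V\le \min_{b^\star}\E_{a\sim\mu}Q(a,b^\star)\le \mu\trans Q\nu$, which forces all four quantities to be equal.

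Finally, equality at the two ends gives the Nash conditions directly: $\max_{a^\star}\E_{b\sim\nu}Q(a^\star,b)=\mu\trans Q\nu$ says every pure deviation $a^\star$ of the max-player satisfies $\E_{b\sim\nu}Q(a^\star,b)\le \mu\trans Q\nu$, and since any mixed deviation is a convex combination of pure ones, $(\mu')\trans Q\nu \le \mu\trans Q\nu$ for all $\mu'$. The symmetric equality $\min_{b^\star}\E_{a\sim\mu}Q(a,b^\star)=\mu\trans Q\nu$ gives $\mu\trans Q\nu'\ge \mu\trans Q\nu$ for all $\nu'$. Together these are exactly the statement that $(\mu,\nu)$ is a Nash equilibrium for payoff matrix $Q$. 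I expect the only delicate point to be the marginalization step in the first paragraph—recognizing that the deviation term in each CCE constraint collapses onto a single marginal—together with the observation that restricting deviations to pure actions loses nothing (the extremal value of a linear objective over a simplex is attained at a vertex); the rest is a routine squeeze.
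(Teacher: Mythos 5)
Your proof is correct, and it takes a genuinely different (if closely related) route from the paper's. Both proofs start with the same key marginalization step, rewriting the CCE deviation terms as $\max_{a^\star}\E_{b\sim\nu}Q(a^\star,b)$ and $\min_{b^\star}\E_{a\sim\mu}Q(a,b^\star)$, but they close the squeeze with different anchors. The paper anchors at the Nash value $N^\star$ of the matrix game $Q$: it observes that $\max_{a^\star}\E_{b\sim\nu}Q(a^\star,b)\ge N^\star$ and $\min_{b^\star}\E_{a\sim\mu}Q(a,b^\star)\le N^\star$, so the two CCE inequalities trap everything at $N^\star$, and the saddle-point criterion $\max_{a^\star}\E_{b\sim\nu}Q(a^\star,b)=\min_{b^\star}\E_{a\sim\mu}Q(a,b^\star)$ gives the conclusion. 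You instead anchor at the product-of-marginals payoff $\mu\trans Q\nu$, closing a cycle
\begin{equation*}
\mu\trans Q\nu \le \max_{a^\star}\E_{b\sim\nu}Q(a^\star,b)\le V\le \min_{b^\star}\E_{a\sim\mu}Q(a,b^\star)\le \mu\trans Q\nu,
\end{equation*}
which forces all inequalities to be equalities and yields the two best-response conditions directly. What each buys: the paper's version is a one-line argument but presupposes the existence of the value $N^\star$ (von Neumann's minimax theorem) and leaves the final step---that equal best-response values against each other's marginals constitute a Nash equilibrium---implicit. Your version is entirely self-contained and more elementary: it uses only linearity of expectation and the fact that a linear objective over a simplex is extremized at a vertex, never invoking existence of equilibria, and it additionally shows the correlated value $V$ coincides with $\mu\trans Q\nu$ as a byproduct.
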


\begin{proof}[Proof of Proposition \ref{prop:CCE}]
Let $N^\star$ be the value of Nash equilibrium for $Q$. Since $\pi = \textsc{CCE}(Q, Q)$, by definition, we have:
\begin{align*}
\E_{(a, b) \sim \pi} Q(a, b) \ge& \max_{a^\star} \E_{(a, b) \sim \pi} Q(a^\star, b) = \max_{a^\star} \E_{b \sim \nu} Q(a^\star, b) \ge N^\star\\
\E_{(a, b) \sim \pi} Q(a, b) \le& \min_{b^\star} \E_{(a, b) \sim \pi} Q(a, b^\star) =\min_{b^\star} \E_{a \sim \mu} Q(a, b^\star) \le N^\star
\end{align*}
This gives:
\begin{equation*}
\max_{a^\star} \E_{b \sim \nu} Q(a^\star, b) = \min_{b^\star} \E_{a \sim \mu} Q(a, b^\star) = N^\star
\end{equation*}
which finishes the proof.
\end{proof}
Intuitively, a CCE procedure can be used in Nash Q-learning for finding an approximate Nash equilibrium, because the values of upper confidence and lower confidence ($\up{Q}$ and $\low{Q}$) will be eventually very close, so that the preconditions of Proposition \ref{prop:CCE} becomes approximately satisfied.



\section{Proof for Section \ref{sec:VI} -- Optimistic Nash Value Iteration}

\subsection{Proof of Theorem~\ref{thm:Nash-VI}}
\label{appendix:pf_Nash-VI_Hoeffding}

We denote $V^k$, $Q^k$, $\pi^k$, $\mu^k$ and $\nu^k$ \footnote{recall that $(\mu^k_{h},\nu^k_{h})$ are the marginal distributions of $\pi^k_{h}$.} for values and policies at the \emph{beginning} of the $k$-th episode. In particular, $N_h^k(s,a,b)$ is the number we have visited the state-action tuple $(s,a,b)$ at the $h$-th step before the $k$-th episode. $N_h^k(s,a,b,s')$ is defined by the same token. Using this notation, we can further define the empirical transition by $\widehat{\P}^k_h(s'|s, a, b):= N^k_h(s, a, b, s')/N^k_h(s, a, b)$.
If $N^k_h(s, a, b)=0$, we set $\widehat{\P}^k_h(s'|s, a, b)=1/S$.

As a result, the bonus terms can be written as
\begin{equation}
	\beta_h^k(s,a,b) := C\paren{\sqrt{\frac{\iota H^2}{\minone{N_h^k(s,a,b)}}} + \frac{H^2S\iota}{\minone{N_h^k(s,a,b)}}}
\end{equation}
\begin{equation}
	\gamma_h^k(s,a,b) := \frac{C}{H}\widehat{\P}_h (\up{V}^k_{h+1} - \low{V}^k_{h+1})(s, a, b)
\end{equation}
for some large absolute constant $C>0$.
\begin{lemma}\label{event-vi-1}
Let $c_1$  be some large absolute constant.
 Define event $E_0$ to be: for all $h,s,a,b,s'$ and $k\in[K]$, 
\begin{equation*}
\left\{
	\begin{aligned}
		&|[(\Phat^k_h - \Pr_h) V^\star_{h+1}](s,a,b)| \le {c_1}\sqrt{ \frac{H^2\iota}{\max\{N_h^k(s,a,b),1\}}},\\
		& |(\Phat^k_h - \Pr_h)(s'\mid s,a,b)| 
		\le {c_1}
	\paren{\sqrt{ \frac{\min\{\Pr_h(s'\mid s,a,b),\Phat^k_h(s'\mid s,a,b)\}\iota}{\max\{N_h^k(s,a,b),1\}}}+ \frac{\iota}{\minone{N_h^k(s,a,b)}}}.
	\end{aligned}
	\right.
\end{equation*}
We have $\Pr(E_1)\ge 1-p$.
\end{lemma}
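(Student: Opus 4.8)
The plan is to establish the event with high probability by reducing the adaptively collected data to i.i.d.\ samples indexed by the visitation count and then applying standard concentration inequalities together with a union bound, analogous to the single-agent analysis of \citet{azar2017minimax}. The key observation is that for any fixed tuple $(s,a,b,h)$, whenever the trajectory visits $(s,a,b)$ at step $h$ the realized next state is drawn from the fixed distribution $\Pr_h(\cdot\mid s,a,b)$ independently of the past. Hence, listing the next states observed on the successive visits to $(s,a,b)$ at step $h$ yields an i.i.d.\ sequence from $\Pr_h(\cdot\mid s,a,b)$, and after $n$ visits $\Phat_h(\cdot\mid s,a,b)$ is exactly the empirical distribution of the first $n$ of these samples. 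This lets me treat $N_h^k(s,a,b)=n$ as a fixed sample size for each candidate value $n\in[K]$ and pay only a union bound over $n$ at the end. The one genuine subtlety is precisely this adaptivity, which I would justify rigorously by defining the filtration generated by the trajectory up to the action played at step $h$ of each episode and checking that the next-state indicator is a conditionally unbiased bounded term; everything after this reduction is routine.

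For the first inequality, I would fix $(s,a,b,h)$ and a candidate count $n$ and note that $[(\Phat_h-\Pr_h)V^\star_{h+1}](s,a,b)$ is the average of $n$ i.i.d.\ terms $V^\star_{h+1}({s'}^{(i)})-[\Pr_h V^\star_{h+1}](s,a,b)$, each mean-zero and bounded in $[-H,H]$ because $V^\star_{h+1}$ is a \emph{deterministic} function that does not depend on the collected data. Hoeffding's inequality then yields a deviation of order $\sqrt{H^2\log(1/\delta)/n}$ with failure probability $\delta$. Setting $\delta = p/(3SABHK)$ and taking a union bound over all $(s,a,b,h)$ and all $n\in[K]$ gives the stated bound $c_1\sqrt{H^2\iota/\minone{N_h^k(s,a,b)}}$; the number of union-bound events is polynomial in $S,A,B,H,K$, so $\log(1/\delta)$ is absorbed into $\iota=\log(SABT/p)$ up to a constant factor (using $T=KH$ and $SABHK\le (SABT/p)^2$).

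For the second inequality, I would instead fix $(s,a,b,h,s')$ and a count $n$ and view $\Phat_h(s'\mid s,a,b)$ as the empirical mean of $n$ i.i.d.\ $\mathrm{Bernoulli}(\Pr_h(s'\mid s,a,b))$ variables. Bernstein's inequality gives $|(\Phat_h-\Pr_h)(s'\mid s,a,b)|$ of order $\sqrt{\Pr_h(s'\mid s,a,b)\log(1/\delta)/n}+\log(1/\delta)/n$, the $\Pr_h$-version of the bound, while the empirical Bernstein inequality gives the analogous $\Phat_h$-version with $\Phat_h(s'\mid s,a,b)$ in place of $\Pr_h(s'\mid s,a,b)$. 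Since the deviation $|(\Phat_h-\Pr_h)(s'\mid s,a,b)|$ is a single quantity bounded above by both right-hand sides, it is bounded by their minimum, which is exactly the stated bound with $\min\{\Pr_h(s'\mid s,a,b),\Phat_h(s'\mid s,a,b)\}$ inside the square root. A union bound over all $(s,a,b,s',h)$ and $n\in[K]$ costs only an extra factor of $S$ relative to the first inequality and again inflates $\log(1/\delta)$ by at most a constant. Allocating $p/3$ to the $V^\star$ event and $2p/3$ to the two transition events and combining by a final union bound gives total failure probability at most $p$, establishing the claim. The main obstacle throughout is the adaptivity handled in the first paragraph; the concentration and bookkeeping steps are standard.
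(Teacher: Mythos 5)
Your proposal is correct and follows essentially the same route as the paper's proof: reduce the adaptively collected transitions to an i.i.d.\ sequence indexed by the visitation count (the paper does this via an explicit pre-sampling coupling), apply Hoeffding to the fixed, data-independent function $V^\star_{h+1}$ for the first inequality, and combine Bernstein with its empirical version to get the $\min\{\Pr_h,\Phat^k_h\}$ bound for the second, finishing with a union bound over all tuples and all counts $n\in[K]$. The bookkeeping of failure probabilities differs only in constants absorbed into $\iota$.
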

\begin{proof}
	The proof is standard and folklore: apply standard concentration inequalities and then take a union bound. For completeness, we provide the proof of the second one here.
	
Consider a fixed $(s,a,b,h)$ tuple. 

Let's consider the following equivalent random process:
(a) before the agent starts, the environment samples 
$\{s^{(1)},s^{(2)},\ldots,s^{(K)}\}$ independently from $\Pr_h(\cdot\mid s,a,b)$; 
(b) during the interaction between the agent and environment, the $i^{\rm th}$ time the agent reaches $(s,a,b,h)$, the environment will make the agent transit to $s^{(i)}$. 
Note that the randomness induced by this interaction procedure is exactly the same as the original one, which means the probability of any event in this context is the same as in the original problem.
 Therefore, it suffices to prove the target concentration inequality in this 'easy' context.
Denote by $\Phat^{(t)}_h(\cdot\mid s,a,b)$ the empirical estimate of $\Pr_h(\cdot\mid s,a,b)$ calculated using $\{s^{(1)},s^{(2)},\ldots,s^{(t)}\}$.
 For a fixed $t$ and $s'$, by applying the  Bernstein inequality and its empirical version, we have with probability at least $1-p/S^2ABT$,
 $$
 |(\Pr_h - \Phat^{(t)}_h)(s'\mid s,a,b)| 
		\le \bigO 
	\paren{\sqrt{ \frac{\min\{\Pr_h(s'\mid s,a,b),\Phat^{(t)}_h(s'\mid s,a,b)\}\iota}{t}}+ \frac{\iota}{t}}.
 $$
 
Now we can take a union bound over all $s,a,b,h,s'$ and $t\in[K]$, and obtain 
that with probability at least $1-p$, for all $s,a,b,h,s'$ and $t\in[K]$,
 $$
 |(\Pr_h - \Phat^{(t)}_h)(s'\mid s,a,b)| 
		\le \bigO 
	\paren{\sqrt{ \frac{\min\{\Pr_h(s'\mid s,a,b),\Phat^{(t)}_h(s'\mid s,a,b)\}\iota}{t}}+ \frac{\iota}{t}}.
 $$
Note that the agent can reach each $(s,a,b,h)$ for at most $K$ times, this directly implies that the third  inequality also holds with probability at least $1-p$.
	\end{proof}

We begin with an auxiliary lemma bounding the lower-order term. 

\begin{lemma}
	\label{lem:lower-order}
	Suppose event $E_0$ holds, then there exists absolute constant $c_2$ such that:
	if function $g(s)$ satisfies $ |g|(s) \le (\up{V}^{k}_{h+1} - \low{V}^{k}_{h+1})(s) $ for all $s$, then
	\begin{align*}
	&|(\Phat_h^k- \P_h)g(s,a,b)| \\
	\le & c_2 \bigg(\frac{1}{H}\min\{\Phat_h^k (\up{V}^{k}_{h+1} - \low{V}^{k}_{h+1})(s,a,b),\P_h (\up{V}^{k}_{h+1} - \low{V}^{k}_{h+1})(s,a,b)\}
	+ \frac{H^2S\iota}{\minone{N_h^k(s,a,b)}}\bigg).
	\end{align*}
\end{lemma}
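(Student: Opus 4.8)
The plan is to bound the difference $(\Phat_h^k - \P_h)g(s,a,b)$ pointwise using the second concentration bound from event $E_0$, which controls the per-coordinate error of the empirical transition. Writing out the difference explicitly,
\[
(\Phat_h^k - \P_h)g(s,a,b) = \sum_{s'} \big(\Phat_h^k - \P_h\big)(s'\mid s,a,b)\, g(s'),
\]
I would apply the triangle inequality to pass the absolute value inside the sum, giving $\sum_{s'} |(\Phat_h^k - \P_h)(s'\mid s,a,b)|\cdot |g(s')|$. The hypothesis $|g|(s') \le (\up{V}^{k}_{h+1} - \low{V}^{k}_{h+1})(s')$ lets me replace $|g(s')|$ by the gap function, and then I would substitute the second inequality of $E_0$ for the transition error term $|(\Phat_h^k - \P_h)(s'\mid s,a,b)|$.

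Next I would split the resulting bound into the two pieces coming from $E_0$. The first piece is $c_1 \sqrt{\min\{\P_h(s'), \Phat_h^k(s')\}\iota / \max\{N_h^k,1\}}$, which after multiplying by $(\up{V}^{k}_{h+1}-\low{V}^{k}_{h+1})(s')$ and summing over $s'$ I would handle by the Cauchy–Schwarz (or AM–GM) inequality: split $\sqrt{\min\{\cdots\}} \cdot (\up{V}-\low{V}) = \sqrt{\min\{\cdots\}\cdot \iota/N}\cdot(\up{V}-\low{V})$ and apply $xy \le \frac{x^2}{2\eta} + \frac{\eta y^2}{2}$ with a carefully chosen $\eta \asymp H$. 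Choosing $\eta$ proportional to $H$ is exactly what produces the $\frac{1}{H}$ coefficient on the leading $\Phat_h^k(\up{V}-\low{V})$ or $\P_h(\up{V}-\low{V})$ term while the other half of the AM–GM becomes $\frac{H \sum_{s'} \P_h(s') \iota}{N}$, which is at most $\frac{H\iota}{N}$ since the probabilities sum to one. The boundedness $\up{V}-\low{V}\le H$ then converts this into a term of order $\frac{H^2\iota}{N}$, absorbed into the $\frac{H^2 S\iota}{N}$ lower-order term. The second piece, $c_1 \iota/\max\{N_h^k,1\}$ multiplied by $(\up{V}-\low{V})(s')\le H$ and summed over all $S$ states, directly yields a term of order $\frac{H S\iota}{N}$, again absorbed into $\frac{H^2S\iota}{N}$.

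The one subtlety is producing both versions of the leading term—the $\min$ over $\Phat_h^k(\up{V}-\low{V})$ and $\P_h(\up{V}-\low{V})$—rather than just one. This is where the $\min\{\P_h(s'),\Phat_h^k(s')\}$ structure inside $E_0$ is essential: when I bound $\sqrt{\min\{\P_h(s'),\Phat_h^k(s')\}}$, I can freely upper bound the quantity under the root by \emph{either} $\P_h(s')$ or $\Phat_h^k(s')$ before applying AM–GM, and each choice propagates to give the corresponding empirical or true transition in the final leading term. I would do the computation once with the $\P_h(s')$ bound to obtain the $\P_h(\up{V}-\low{V})$ version, and once with $\Phat_h^k(s')$ to obtain the $\Phat_h^k(\up{V}-\low{V})$ version, then take the minimum of the two resulting bounds.

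I expect the main obstacle to be bookkeeping the constants so that the coefficient of the leading term comes out as exactly $\frac{1}{H}$ (times an absolute constant $c_2$), since this $1/H$ scaling is what lets the auxiliary bonus $\gamma = (c/H)\Phat_h(\up{V}-\low{V})$ dominate this error term without inflating the sample complexity—and it must be tracked consistently through the choice of $\eta$ in the AM–GM step. The rest of the argument is routine: all lower-order contributions are polynomially bounded by $H$, $S$, and $\iota$ over $N_h^k$, and collecting them into the single $\frac{H^2S\iota}{\minone{N_h^k(s,a,b)}}$ term completes the proof.
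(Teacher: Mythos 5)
Your overall architecture is exactly the paper's proof: triangle inequality, replace $|g(s')|$ by the gap $(\up{V}^{k}_{h+1}-\low{V}^{k}_{h+1})(s')$, invoke the second inequality of event $E_0$, bound the $\iota/\minone{N_h^k}$ piece crudely by an extra factor $HS$, and run the argument once with $\Phat_h^k$ and once with $\P_h$ under the square root, taking the minimum at the end. The one step that does not survive scrutiny is your AM--GM step, and as written it would fail. Write $N = \minone{N_h^k(s,a,b)}$ and $V = \up{V}^k_{h+1}-\low{V}^k_{h+1}$. You set $x=\sqrt{\min\{\P_h,\Phat_h^k\}(s')\iota/N}$, $y=V(s')$, and apply $xy\le \frac{x^2}{2\eta}+\frac{\eta y^2}{2}$ with a \emph{constant} $\eta\asymp H$. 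Then $\frac{\eta y^2}{2}\asymp H\,V(s')^2$, and summing over $s'$ gives an \emph{unweighted} sum of order up to $SH^3$ that carries no factor of $\P_h(s'|s,a,b)$ and no factor of $1/N$, so it is controlled neither by $\frac1H\P_h V(s,a,b)$ nor by the lower-order term (taking $\eta\asymp 1/H$ instead just moves the same problem to the other half). Relatedly, the outcome you claim---leading half $\frac1H\P_h(s')V(s')$, other half $\frac{H\P_h(s')\iota}{N}$---cannot come out of AM--GM on this product: the product of your two claimed halves is $\frac{\P_h(s')^2V(s')\iota}{N}$, carrying two factors of $\P_h(s')$, while the square of the quantity being bounded is $\frac{\P_h(s')V(s')^2\iota}{N}$, carrying one. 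Indeed, your accounting would yield a lower-order term $\frac{H^2\iota}{N}+\frac{HS\iota}{N}$ \emph{without} the factor $H^2S\iota/N$, and that stronger statement is false: take $\P_h(\cdot|s,a,b)$ uniform and $V\equiv H$; the left side is $H\sqrt{S\iota/N}$, which for $N\asymp H^2S\iota/C^2$ equals $C$, while your right side stays $\cO(1+C^2/S+C^2/H)$, a contradiction for $S,H\gg C^2$.

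The fix is the split the paper actually uses: distribute $V$ over \emph{both} halves and keep the probability only in the leading one,
\begin{align*}
\sqrt{\frac{\Phat_h^k(s'|s,a,b)\,\iota}{N}}\;V(s')
=\sqrt{\frac{\Phat_h^k(s'|s,a,b)V(s')}{H}}\cdot\sqrt{\frac{H\iota V(s')}{N}}
\le \frac{\Phat_h^k(s'|s,a,b)V(s')}{2H}+\frac{H\iota V(s')}{2N},
\end{align*}
then bound $\frac{H\iota V(s')}{N}\le \frac{H^2\iota}{N}$ per state, so that summing over $s'$ yields $\frac{1}{2H}\Phat_h^k V(s,a,b)$ plus the (genuinely necessary) $\frac{H^2S\iota}{N}$ term. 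With this one correction, the rest of your plan---handling the $c_1\iota/N$ piece, and repeating with $\P_h$ in place of $\Phat_h^k$ to obtain the minimum---coincides with the paper's proof.
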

\begin{proof}
	By triangle inequality,
    \begin{align*}
		|(\Phat_h^k- \P_h)g(s,a,b)|
		\le& \sum_{s'}{|(\Phat_h^k- \P_h)(s'|s,a,b)||g|(s')}\\
    \le& \sum_{s'}{|(\Phat_h^k- \P_h)(s'|s,a,b)|(\up{V}^{k}_{h+1} - \low{V}^{k}_{h+1})(s')}\\
    \overset{\left( i \right)}{\le}& \cO\left(\sum_{s'}{(\sqrt{\frac{\iota \Phat_h^k(s'|s,a,b)}{\minone{N_h^k(s,a,b)}}}+\frac{\iota }{\minone{N_h^k(s,a,b)}})(\up{V}^{k}_{h+1} - \low{V}^{k}_{h+1})(s')}\right)\\
    \overset{\left( ii \right)}{\le}& \cO\left(\sum_{s'}{(\frac{\Phat_h^k(s'|s,a,b) }{H}+\frac{H\iota }{\minone{N_h^k(s,a,b)}})(\up{V}^{k}_{h+1} - \low{V}^{k}_{h+1})(s')}\right)\\
    \le& \cO\left(\frac{\Phat_h^k (\up{V}^{k}_{h+1} - \low{V}^{k}_{h+1})(s,a,b)}{H}
	+ \frac{H^2S\iota}{\minone{N_h^k(s,a,b)}}\right),
    \end{align*}
    where $(i)$ is by the second inequality in event $E_0$ and $(ii)$ is by AM-GM inequality. This proves the empirical version. Similarly, we can show 
    \begin{align*}
		|(\Phat_h^k- \P_h)g(s,a,b)|
	    \le \cO\left(\frac{\Pr_h (\up{V}^{k}_{h+1} - \low{V}^{k}_{h+1})(s,a,b)}{H}
	+ \frac{H^2S\iota}{\minone{N_h^k(s,a,b)}}\right),
    \end{align*}
     Combining the two bounds completes the proof.
\end{proof}

Now we can prove the upper and lower bounds are indeed upper and lower bounds of the best reponses.
\begin{lemma}\label{lem:sandwichVpik_Hoeffding}
		Suppose event $E_0$ holds. Then
 for all $h,s,a,b$ and $k\in[K]$, we have
	\begin{equation}
		\left\{
		\begin{aligned}
&		\up{Q}^{k}_h(s,a,b) \ge \QpikA_h(s,a,b) \ge \QpikB_h(s,a,b) \ge \low{Q}^{k}_h(s,a,b),\\
	&\up{V}^{k}_h(s) \ge \VpikA_h(s) \ge \VpikB_h(s) \ge \low{V}^{k}_h(s).
		\end{aligned}
\right.
	\end{equation}
	
\end{lemma}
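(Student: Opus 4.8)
The plan is to prove both chains of inequalities simultaneously by backward induction on $h$, descending from $h=H+1$ down to $h=1$. The base case $h=H+1$ is immediate, since every quantity there is zero by convention. For the inductive step I would assume the $V$-inequalities at step $h+1$, namely $\up{V}^k_{h+1}(s)\ge \VpikA_{h+1}(s)\ge \VpikB_{h+1}(s)\ge \low{V}^k_{h+1}(s)$ for all $s$, and first deduce the $Q$-inequalities at step $h$, then convert these back into the $V$-inequalities at step $h$.

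For the $Q$-inequalities, consider the upper bound $\up{Q}^k_h\ge \QpikA_h$ (the lower bound $\low{Q}^k_h \le \QpikB_h$ is fully symmetric). If $\up{Q}^k_h$ is clipped to $H$ the claim is trivial, so assume otherwise. Using the Bellman equation $\QpikA_h=r_h+\P_h\VpikA_{h+1}$ and the algorithm's update, I would write
\begin{align*}
\up{Q}^k_h-\QpikA_h = \Phat^k_h(\up{V}^k_{h+1}-\VpikA_{h+1}) + (\Phat^k_h-\P_h)\VpikA_{h+1} + \gamma_h^k+\beta_h^k.
\end{align*}
The first term is nonnegative by the induction hypothesis, so the crux is to show that $\gamma_h^k+\beta_h^k$ dominates $|(\Phat^k_h-\P_h)\VpikA_{h+1}|$. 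Here I would split $(\Phat^k_h-\P_h)\VpikA_{h+1}=(\Phat^k_h-\P_h)\Vstar_{h+1}+(\Phat^k_h-\P_h)(\VpikA_{h+1}-\Vstar_{h+1})$. The first piece is controlled by the $\Vstar$-concentration event of Lemma~\ref{event-vi-1} at rate $\sqrt{H^2\iota/N_h^k}$, which $\beta_h^k$ absorbs. For the second piece the key observation is that both $\VpikA_{h+1}$ and $\Vstar_{h+1}$ lie in $[\low{V}^k_{h+1},\up{V}^k_{h+1}]$: indeed the minimax sandwich $\VpikB_{h+1}\le \Vstar_{h+1}\le \VpikA_{h+1}$ always holds, and the induction hypothesis pins these quantities between $\low{V}^k_{h+1}$ and $\up{V}^k_{h+1}$. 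Hence $g\defeq \VpikA_{h+1}-\Vstar_{h+1}$ satisfies $|g|\le \up{V}^k_{h+1}-\low{V}^k_{h+1}$, so Lemma~\ref{lem:lower-order} bounds $|(\Phat^k_h-\P_h)g|$ by $c_2(\tfrac1H\Phat^k_h(\up{V}^k_{h+1}-\low{V}^k_{h+1})+H^2S\iota/N_h^k)$; the first term is exactly absorbed by $\gamma_h^k$ once $c\ge c_2$, and the second by $\beta_h^k$. Choosing the absolute constants suitably then yields $\gamma_h^k+\beta_h^k\ge |(\Phat^k_h-\P_h)\VpikA_{h+1}|$ and hence $\up{Q}^k_h\ge \QpikA_h$.

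Given the $Q$-inequalities, I would derive the $V$-inequalities at step $h$ from the defining properties of the CCE policy $\pi^k_h$. For the upper bound, the first CCE constraint together with $\up{Q}^k_h\ge \QpikA_h$ gives $\up{V}^k_h(s)=\E_{(a,b)\sim\pi^k_h}\up{Q}^k_h(s,a,b)\ge \max_{a^\star}\E_{b\sim\nu^k_h}\up{Q}^k_h(s,a^\star,b)\ge \max_{a^\star}\E_{b\sim\nu^k_h}\QpikA_h(s,a^\star,b)=\VpikA_h(s)$, where the last equality is the Bellman equation for the best response against $\nu^k_h$. The lower bound $\low{V}^k_h\le \VpikB_h$ is symmetric via the second CCE constraint and $\low{Q}^k_h\le\QpikB_h$. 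The middle inequalities $\QpikA_h\ge\QpikB_h$ and $\VpikA_h\ge\VpikB_h$ follow directly by applying $\P_h$ to the induction hypothesis $\VpikA_{h+1}\ge\VpikB_{h+1}$ and from the standing minimax sandwich, respectively, which closes the induction.

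The main obstacle is the second piece of the $Q$-bound argument: controlling $(\Phat^k_h-\P_h)\VpikA_{h+1}$ with a bonus of order only $\sqrt{1/N_h^k}$ rather than $\sqrt{S/N_h^k}$. This is precisely where the auxiliary bonus $\gamma$ earns its keep---splitting off the unknown-but-well-concentrated $\Vstar_{h+1}$ and routing the residual $\VpikA_{h+1}-\Vstar_{h+1}$ through the gap $\up{V}^k_{h+1}-\low{V}^k_{h+1}$ via Lemma~\ref{lem:lower-order} is exactly what avoids the extra $\sqrt{S}$ factor that a naive bound on $(\Phat^k_h-\P_h)\VpikA_{h+1}$ would incur.
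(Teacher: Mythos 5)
Your proposal is correct and follows essentially the same route as the paper's proof: backward induction with the clipping handled trivially, the decomposition of $(\Phat^k_h-\P_h)\VpikA_{h+1}$ into the $\Vstar_{h+1}$-concentration piece (absorbed by $\beta$ via event $E_0$) and the residual $\VpikA_{h+1}-\Vstar_{h+1}$ routed through Lemma~\ref{lem:lower-order} using the sandwich $\low{V}^k_{h+1}\le \VpikB_{h+1}\le \Vstar_{h+1}\le \VpikA_{h+1}\le \up{V}^k_{h+1}$ (absorbed by $\gamma$), and the CCE property to pass from the $Q$-bounds to the $V$-bounds. No gaps; only the bookkeeping of which level ($Q$ or $V$) carries the induction hypothesis differs, and that is immaterial.
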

\begin{proof}
	The proof is by backward induction. Suppose the bounds hold for the $Q$-values in the $(h+1)^{\rm th}$ step, we now establish the bounds for the $V$-values in the $(h+1)^{\rm th}$ step and $Q$-values in the $h^{\rm th}$-step. For any state $s$:
    \begin{equation}\label{boring-2}
        \begin{aligned}
            \up{V}^{k}_{h+1}(s)&=
            \D_{\pi^k_{h+1}} \up{Q}^{k}_{h+1}(s)\\
            &\ge \max_{\mu} \D_{\mu \times \nu^k_{h+1}}   \up{Q}^{k}_{h+1}(s)\\
            &\ge \max_{\mu} \D_{\mu \times \nu^k_{h+1}}   \QpikA_{h+1}(s) = \VpikA_{h+1}(s).
        \end{aligned}
        \end{equation}
	Similarly, we can show $\low{V}^{k}_{h+1}(s) \le \VpikB_{h+1}(s)$. 
	Therefore, we have: for all $s$,
	$$
	\up{V}^{k}_{h+1}(s) \ge \VpikA_{h+1}(s) \ge V^\star_{h+1}(s)\ge \VpikB_{h+1}(s) \ge \low{V}^{k}_{h+1}(s).
	$$
	Now consider an arbitrary triple $(s,a,b)$ in the $h^{\rm th}$ step. We have    
    \begin{equation}\label{eq:Q-decomposition}
        \begin{aligned}
			&(\up{Q}^{k}_h - \QpikA_h)(s,a,b) \\
			\ge &\min\bigg\{(\widehat{\P}_h^k\up{V}^{k}_{h+1} 
            - \P_h \VpikA_{h+1}  + \beta_h^k+\gamma_h^k)(s,a,b), 0\bigg\}\\
            \ge &\min\bigg\{(\widehat{\P}_h^k\VpikA_{h+1}
            - \P_h \VpikA_{h+1}  + \beta_h^k+\gamma_h^k)(s,a,b), 0\bigg\}\\
            = &\min\bigg\{ \underset{(A)}{\underbrace{(\Phat_h^k- \P_h)(\VpikA_{h+1}- \Vdstar_{h+1})(s,a,b)}}+\underset{(B)}{\underbrace{(\widehat{\P}_h^k- \P_h) \Vdstar_{h+1}(s,a,b)}}  + (\beta_h^k+\gamma_h^k)(s,a,b),0\bigg\}.
        \end{aligned}
        \end{equation}

	Invoking Lemma~\ref{lem:lower-order} with $g = \VpikA_{h+1}- \Vdstar_{h+1}$,
	$$
	|(A)| \le \cO\left(\frac{\Phat_h^k (\up{V}^{k}_{h+1} - \low{V}^{k}_{h+1})(s,a,b)}{H}
	+ \frac{H^2S\iota}{\minone{N_h^k(s,a,b)}}\right).
	$$

   By the first inequality in event $E_0$,
    $$
    |(B)| \le \cO\left(  \sqrt{\frac{H^2\iota }{\minone{N_h^k(s,a,b)}}} \right).
	$$
	Plugging the two inequalities above back into \eqref{eq:Q-decomposition} and recalling the definition of $\beta_h^k$ and $\gamma_h^k$, we obtain $\up{Q}^{k}_h(s,a,b) \ge \QpikA_h(s,a,b)$. Similarly, we can show $\low{Q}^{k}_h(s,a,b) \le \QpikB_h(s,a,b)$. 
\end{proof}

Finally we come to the proof of Theorem~\ref{thm:Nash-VI}.

\begin{proof}[Proof of Theorem~\ref{thm:Nash-VI}]
Suppose event $E_0$ holds.
We first upper bound the regret. By Lemma~\ref{lem:sandwichVpik_Hoeffding}, the regret can be upper bounded by 
$$
\sum_k (\VpikA_1(s_1^k) - \VpikB_1(s_1^k)) \le \sum_k (\up{V}^{k}_1(s_1^k) - \low{V}^{k}_1(s_1^k)). 
$$

For brevity's sake, we define the following notations:
\begin{equation}
\left\{
	\begin{aligned}
&\Delta_h^k := (\up{V}^{k}_h - \low{V}^{k}_h)(s_h^k),\\
&\zeta_h^k :=  \Delta_h^k
- (\up{Q}^{k}_h - \low{Q}^{k}_h)(s_h^k,a_h^k,b_h^k),\\
& \xi_h^k := \P_h(\up{V}^{k}_{h+1} - \low{V}^{k}_{h+1})(s_h^k,a_h^k,b_h^k)
- \Delta_{h+1}^k.
	\end{aligned}
	\right.
\end{equation}
Let $\cF_h^k$ be the $\sigma$-field generated by the following random variables: 
$$\{(s_i^j,a_i^j,b_i^j,r_i^j)\}_{(i,j)\in[H]\times[k-1]}\bigcup\{(s_i^k,a_i^k,b_i^k,r_i^k)\}_{i\in[h-1]}\bigcup\{s_h^k\}.$$
It's easy to check $\zeta_h^k$ and $\xi_h^k$ are martingale differences with respect to $\cF_h^k$. With a slight abuse of notation, we use $\beta_h^k$ to refer to $\beta_h^k(s_h^k,a_h^k,b_h^k)$  and $N_h^k$ to refer to $N_h^k(s_h^k,a_h^k,b_h^k)$ in the following proof.

 We have
 \begin{align*}
    \Delta _{h}^{k}
=&\zeta _{h}^{k}+\left( \up{Q}_{h}^{k}-\low{Q}_{h}^{k} \right) \left( s_{h}^{k},a_h^k,b_h^k \right) 
\\
\le &\zeta _{h}^{k}+2\beta _{h}^{k}+2\gamma _{h}^{k}+ \Phat_{h}^{k}(\up{V}_{h+1}^{k}-\low{V}_{h+1}^{k})  \left( s_{h}^{k},a_h^k,b_h^k \right) 
\\
\overset{\left( i \right)}{\le} &\zeta _{h}^{k}+2\beta _{h}^{k}+2\gamma _{h}^{k}+\P_{h}(\up{V}_{h+1}^{k}-\low{V}_{h+1}^{k})  \left( s_{h}^{k},a_h^k,b_h^k \right) 
+c_2\left(\frac{\P_h (\up{V}^{k}_{h+1} - \low{V}^{k}_{h+1})(s_{h}^{k},a_h^k,b_h^k)}{H}
+ \frac{H^2S\iota}{\minone{N_h^k}}\right)
\\
\overset{\left( ii \right)}{\le} &\zeta _{h}^{k}+
2\beta _{h}^{k}+
\P_{h}(\up{V}_{h+1}^{k}-\low{V}_{h+1}^{k})  \left( s_{h}^{k},a_h^k,b_h^k \right) 
+2c_2C\left(\frac{\P_h (\up{V}^{k}_{h+1} - \low{V}^{k}_{h+1})(s_{h}^{k},a_h^k,b_h^k)}{H}
+ \frac{H^2S\iota}{\minone{N_h^k}}\right)
\\
\le &\zeta _{h}^{k}+
\left(1+\frac{2c_2C}{H}\right)\P_{h}(\up{V}_{h+1}^{k}-\low{V}_{h+1}^{k})  \left( s_{h}^{k},a_h^k,b_h^k \right) 
+4c_2C\left(\sqrt{\frac{\iota H^2}{\minone{N_h^k}}}
+ \frac{H^2S\iota}{\minone{N_h^k}}\right) 
\\
= &\zeta _{h}^{k}+\left(1+\frac{2c_2C}{H}\right)\xi_{h}^k+\left(1+\frac{2c_2C}{H}\right)\Delta_{h+1}^k +
4c_2C\left(\sqrt{\frac{\iota H^2}{\minone{N_h^k}}}
+ \frac{H^2S\iota}{\minone{N_h^k}}\right) 
\end{align*}
where $(i)$ and $(ii)$ follow from Lemma~\ref{lem:lower-order}.

Define $c_3 := 1+2c_2C$ and $\kappa := 1+c_3/H$.
Recursing this argument for $h \in [H]$ and summing over $k$,
\begin{align*}
\sum_{k=1}^{K}{\Delta_1^k} \le  
\sum_{k=1}^{K}\sum_{h=1}^{H}\left[
\kappa^{h-1}\zeta_h^k + \kappa^{h}\xi_h^k+
{\cO\left(\sqrt{\frac{\iota H^2}{\minone{N_h^k}}}
+ \frac{H^2S\iota}{\minone{N_h^k}}\right)}\right].
\end{align*}

By Azuma-Hoeffding inequality, with probability at least $1-p$,
\begin{equation}\label{boring}
\left\{
	\begin{aligned}
&\sum_{k=1}^{K}\sum_{h=1}^{H}{\kappa^{h-1}\zeta_h^k} \le \cO \paren{H\sqrt{HK\iota}} = \cO \paren{\sqrt{H^2T\iota}},\\
&\sum_{k=1}^{K}\sum_{h=1}^{H}{\kappa^{h}\xi_h^k} \le \cO \paren{H\sqrt{HK\iota}} = \cO \paren{\sqrt{H^2T\iota}}.
\end{aligned}\right.
\end{equation}

By pigeon-hole argument,
\begin{align*}
	\sum_{k=1}^{K}\sum_{h=1}^{H}{\frac{1}{\sqrt{\minone{N_h^k}}}} \le &\sum_{s,a,b,h:\ N_h^K(s,a,b)>0}\sum_{n=1}^{N_h^K(s,a,b)}{\frac{1}{\sqrt{n}}}+HSAB 
	\le \cO \paren{\sqrt{HSABT}+HSAB},
\end{align*}
\begin{align*}
\sum_{k=1}^{K}\sum_{h=1}^{H}{\frac{1}{\minone{N_h^k}}} \le &\sum_{s,a,b,h:\ N_h^K(s,a,b)>0}\sum_{n=1}^{N_h^K(s,a,b)}{\frac{1}{n}}+HSAB 
	\le\cO \paren{HSAB\iota}.
\end{align*}

Put everything together, with probability at least $1-2p$ (one $p$ comes from $\Pr(E_0)\ge1-p$ and the other is for equation \eqref{boring}),
$$
\sum_{k=1}^K (\VpikA_1(s_1^k) - \VpikB_1(s_1^k)) \le  \cO \paren{\sqrt{H^3SABT\iota} + H^3S^2AB\iota^2 }
$$

For the PAC guarantee, recall that we choose   $\pi^{\text{out}}= \pi^{k^{\star}}$ such that $k^{\star} = \argmin_{k} \left( \up{V}_{1}^{k}-\low{V}_{1}^{k} \right)\left( s_{1} \right)$. As a result, 
$$
 (V^{\dagger,\nu^{k^{\star}}}_1 - V^{\mu^{k^{\star}},\dagger}_1)(s_1) \le  (\up{V}^{k^{\star}}_1 - \low{V}^{k^{\star}}_1) (s_1) \le \frac{1}{K} \cO \paren{\sqrt{H^3SABT\iota} + H^3S^2AB\iota^2 },
$$
which concludes the proof.
\end{proof}
\subsection{Proof of Theorem~\ref{thm:Nash-VI-B}}
\label{appendix:pf_Nash-VI_Bernstein}
We use the same notation as in Appendix~\ref{appendix:pf_Nash-VI_Hoeffding} except the form of bonus. Besides, we define the empirical variance operator
$$
\widehat{\Va}^k_hV(s,a,b) :=\Var_{s'\sim \Phat^k_h(\cdot|s,a,b)}V(s')
$$ 
and the true (population) variance operator 
$$
\Va_hV(s,a,b) :=\Var_{s'\sim \P_h(\cdot|s,a,b)}V(s')
$$ 
for any function $V \in \Delta^{S}$.
If $N^k_h(s, a, b)=0$, we simply set $\widehat{\Va}^k_hV(s,a,b):=H^2$ regardless of the choice of $V$.

As a result, the bonus terms can be written as
\begin{equation}
	\beta_h^k(s,a,b) := C\paren{\sqrt{\frac{\iota\widehat{\Va}^k_h [(\up{V}^k_{h+1} + \low{V}^k_{h+1})/2](s,a,b)}{\minone{N_h^k(s,a,b)}}} + \frac{H^2S\iota}{\minone{N_h^k(s,a,b)}}}
\end{equation}
for some absolute constant $C>0$.

\begin{lemma}\label{event-vi-2}
Let $c_1$  be some large absolute constant.
 Define event $E_1$ to be: for all $h,s,a,b,s'$ and $k\in[K]$, 
\begin{equation*}
\left\{
	\begin{aligned}
		&|[(\Phat^k_h - \Pr_h) V^\star_{h+1}](s,a,b)| \le {c_1} \paren{\sqrt{\frac{\widehat{\Va}_h^k \Vdstar_{h+1}(s,a,b) \iota}{\minone{N_h^k(s,a,b)}}} + \frac{H\iota }{\minone{N_h^k(s,a,b)}}},\\
		& |(\Phat^k_h - \Pr_h)(s'\mid s,a,b)| 
		\le {c_1}
	\paren{\sqrt{ \frac{\min\{\Pr_h(s'\mid s,a,b),\Phat^k_h(s'\mid s,a,b)\}\iota}{\max\{N_h^k(s,a,b),1\}}}+ \frac{\iota}{\minone{N_h^k(s,a,b)}}},
	\\
&	\|(\Phat^k_h - \Pr_h)(\cdot\mid s,a,b)\|_1 
		\le {c_1}
	{\sqrt{ \frac{S\iota}{\max\{N_h^k(s,a,b),1\}}}}.
	\end{aligned}
	\right.
\end{equation*}
We have $\Pr(E_1)\ge 1-p$.
\end{lemma}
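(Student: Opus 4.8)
The plan is to follow exactly the fictitious-sampling reduction used in the proof of Lemma~\ref{event-vi-1}, and then replace the Hoeffding step for the first inequality by an empirical-Bernstein step, while adding a standard $\ell_1$ deviation bound for the third inequality (the second inequality is literally the same one already proved in Lemma~\ref{event-vi-1}). Fix a tuple $(s,a,b,h)$ and consider the equivalent process in which the environment pre-samples i.i.d.\ next states $s^{(1)},\ldots,s^{(K)}\sim\Pr_h(\cdot\mid s,a,b)$ and feeds $s^{(i)}$ on the $i$-th visit to $(s,a,b,h)$. As argued there, this does not change the law of any event, so it suffices to control the deviations of $\Phat^{(t)}_h(\cdot\mid s,a,b)$, the empirical average of the first $t$ of these i.i.d.\ draws, for each fixed count $t\in[K]$, and then union-bound.

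For the first inequality, the key observation is that $\Vdstar_{h+1}$ is a \emph{deterministic} function that does not depend on the collected data, so the summands $\{\Vdstar_{h+1}(s^{(i)})\}_{i\le t}$ are genuinely i.i.d.\ and bounded in $[0,H]$, and their sample variance equals $\widehat{\Va}^{(t)}_h \Vdstar_{h+1}(s,a,b)$. Applying the empirical Bernstein inequality (Maurer--Pontil) to these variables yields, with probability at least $1-p/(SABT)$, a deviation of the form $\sqrt{\widehat{\Va}^{(t)}_h \Vdstar_{h+1}(s,a,b)\,\iota/t}+H\iota/t$, which is exactly the claimed shape. The second inequality is obtained verbatim from Lemma~\ref{event-vi-1} by applying Bernstein and its empirical version to each Bernoulli coordinate $\mathbf{1}[s^{(i)}=s']$. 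For the third ($\ell_1$) inequality, I would invoke the standard concentration of an empirical multinomial distribution around its mean (e.g.\ Weissman et al.), giving $\|\Phat^{(t)}_h(\cdot\mid s,a,b)-\Pr_h(\cdot\mid s,a,b)\|_1\le c_1\sqrt{S\iota/t}$ with probability at least $1-p/(SABT)$, after absorbing the $2^S$ union factor into $\iota$.

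Finally I would take a union bound. Since each $(s,a,b,h)$ can be visited at most $K$ times, the realized count $N^k_h(s,a,b)$ always lies in $\{0,1,\ldots,K\}$, so the union over all $(s,a,b,h)$, all $s'\in\cS$, all $t\in[K]$, and the three inequality types---each failing with probability $\cO(p/(SABT))$---bounds the total failure probability by $p$; on the complement all three hold simultaneously with $N^k_h$ in place of $t$, giving $\Pr(E_1)\ge 1-p$. The only real subtlety is ensuring the \emph{empirical} variance (rather than the population variance) appears on the right-hand side of the first inequality without circularity; this is resolved precisely because $\Vdstar$ is data-independent, so empirical Bernstein applies directly to i.i.d.\ summands and no self-referential control of a data-dependent value function is required.
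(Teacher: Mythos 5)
Your proposal is correct and takes essentially the same route as the paper, whose own proof is omitted with exactly this pointer: the first two inequalities follow the fictitious-sampling argument of Lemma \ref{event-vi-1} (the empirical Bernstein step being valid for the first because $V^\star_{h+1}$ is data-independent, so the summands are genuinely i.i.d.), and the third is the standard $\ell_1$ multinomial deviation bound (equation (12) of \citet{azar2017minimax}, i.e., a Weissman-type bound). The only nit is bookkeeping: the per-event failure probability for the second inequality should be $p/(S^2ABT)$ because of the additional union over $s'$, but this only inflates $\iota$ by a constant factor that is absorbed into $c_1$.
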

The proof of Lemma \ref{event-vi-2} is highly similar to that of Lemma \ref{event-vi-1}. Specifically, the first two can be proved by following basically the same argument in Lemma \ref{event-vi-1}; the third one is standard (e.g., equation (12) in \cite{azar2017minimax}). We omit the proof here.

Since the proof of Lemma~\ref{lem:lower-order} does not depend on the form of the bonus, it can also be applied in this section. As in Appendix~\ref{appendix:pf_Nash-VI_Hoeffding}, we will prove the upper and lower bounds are indeed upper and lower bounds of the best reponses.
	
\begin{lemma}\label{lem:sandwichVpik_Bernstein}
	Suppose event $E_1$ holds. Then for all $h,s,a,b$ and $k\in[K]$, we have
	\begin{equation}
		\left\{
		\begin{aligned}
&		\up{Q}^{k}_h(s,a,b) \ge \QpikA_h(s,a,b) \ge \QpikB_h(s,a,b) \ge \low{Q}^{k}_h(s,a,b),\\
	&\up{V}^{k}_h(s) \ge \VpikA_h(s) \ge \VpikB_h(s) \ge \low{V}^{k}_h(s).
		\end{aligned}
\right.
	\end{equation}
\end{lemma}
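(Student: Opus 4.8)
The plan is to mirror the backward induction proof of Lemma~\ref{lem:sandwichVpik_Hoeffding}, changing only the treatment of the statistical fluctuation term so that it matches the sharper Bernstein bonus. I would induct on $h$ from $H+1$ downward, with the base case $\up{V}^k_{H+1}=\low{V}^k_{H+1}=0$ immediate. Assuming the $Q$-sandwich holds at step $h+1$, the $V$-sandwich at step $h+1$ follows exactly as in \eqref{boring-2}: since $\pi^k_{h+1}=\CCE(\up{Q}^k_{h+1},\low{Q}^k_{h+1})$, the coarse-correlated property gives $\D_{\pi^k_{h+1}}\up{Q}^k_{h+1}(s)\ge \max_\mu \D_{\mu\times\nu^k_{h+1}}\up{Q}^k_{h+1}(s)\ge \VpikA_{h+1}(s)$, and symmetrically $\low{V}^k_{h+1}\le \VpikB_{h+1}$. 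Combined with $\VpikB_{h+1}\le \Vdstar_{h+1}\le \VpikA_{h+1}$, this yields the crucial pointwise sandwich $\low{V}^k_{h+1}\le \Vdstar_{h+1}\le \up{V}^k_{h+1}$ that drives the variance comparison below.

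For the $Q$-bound at step $h$ I would reuse the decomposition \eqref{eq:Q-decomposition}, writing $(\up{Q}^k_h-\QpikA_h)(s,a,b)\ge \min\{(A)+(B)+\beta_h^k+\gamma_h^k,0\}$ with $(A)=(\Phat_h^k-\P_h)(\VpikA_{h+1}-\Vdstar_{h+1})$ and $(B)=(\Phat_h^k-\P_h)\Vdstar_{h+1}$. Term $(A)$ is controlled by Lemma~\ref{lem:lower-order} verbatim, since that lemma is independent of the bonus form and only uses $|\VpikA_{h+1}-\Vdstar_{h+1}|\le \up{V}^k_{h+1}-\low{V}^k_{h+1}$. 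The only genuinely new work is term $(B)$: the first inequality of event $E_1$ now gives $|(B)|\le c_1(\sqrt{\Vahat_h^k\Vdstar_{h+1}\,\iota/\minone{N_h^k}}+H\iota/\minone{N_h^k})$, whose leading term involves the empirical variance of the \emph{true} value $\Vdstar_{h+1}$, whereas the bonus $\beta_h^k$ is built from the empirical variance of the midpoint $(\up{V}^k_{h+1}+\low{V}^k_{h+1})/2$.

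The heart of the argument is therefore to bound $\Vahat_h^k\Vdstar_{h+1}$ by $\Vahat_h^k[(\up{V}^k_{h+1}+\low{V}^k_{h+1})/2]$ up to manageable error. I would use that the empirical standard deviation $(\Vahat_h^k\,\cdot)^{1/2}$ is a seminorm, so $\sqrt{\Vahat_h^k\Vdstar_{h+1}}\le \sqrt{\Vahat_h^k[(\up{V}^k_{h+1}+\low{V}^k_{h+1})/2]}+\sqrt{\Vahat_h^k[\Vdstar_{h+1}-(\up{V}^k_{h+1}+\low{V}^k_{h+1})/2]}$. By the sandwich of the first paragraph, $|\Vdstar_{h+1}-(\up{V}^k_{h+1}+\low{V}^k_{h+1})/2|\le (\up{V}^k_{h+1}-\low{V}^k_{h+1})/2$ pointwise, and bounding variance by second moment together with $\up{V}^k_{h+1}-\low{V}^k_{h+1}\le H$ gives $\Vahat_h^k[\Vdstar_{h+1}-(\up{V}^k_{h+1}+\low{V}^k_{h+1})/2]\le \tfrac14 H\,\Phat_h^k(\up{V}^k_{h+1}-\low{V}^k_{h+1})$. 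Feeding this back and applying AM--GM splits the extra contribution into a piece of order $\tfrac1H\Phat_h^k(\up{V}^k_{h+1}-\low{V}^k_{h+1})$---absorbed into the auxiliary bonus $\gamma_h^k$, whose coefficient $c/H$ is tuned precisely for this---and a lower-order $H^2\iota/\minone{N_h^k}$ piece absorbed into $\beta_h^k$. With $C$ chosen large enough, $\beta_h^k+\gamma_h^k$ then dominates $|(A)|+|(B)|$, giving $\up{Q}^k_h\ge \QpikA_h$; the bound $\low{Q}^k_h\le \QpikB_h$ is entirely symmetric. The main obstacle is exactly this variance transfer: ensuring the midpoint variance in the bonus controls the true-value variance coming from concentration while the residual stays within what $\gamma$ (rather than $\beta$) can pay for.
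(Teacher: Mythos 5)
Your proposal is correct and takes essentially the same route as the paper's proof: backward induction with the $V$-step inherited from \eqref{boring-2}, term $(A)$ handled by Lemma~\ref{lem:lower-order} unchanged, and a variance-transfer step for term $(B)$ whose residual is split by AM--GM into a $\frac{1}{H}\widehat{\P}_h^k(\up{V}^k_{h+1}-\low{V}^k_{h+1})$ piece absorbed by $\gamma_h^k$ and an $H^2\iota/\minone{N_h^k}$ piece absorbed by $\beta_h^k$. The only cosmetic difference is in how the variance transfer is executed: you use the triangle inequality for the standard-deviation seminorm under $\widehat{\P}_h^k$ together with variance-bounded-by-second-moment and the pointwise sandwich $|\Vdstar_{h+1}-(\up{V}^k_{h+1}+\low{V}^k_{h+1})/2|\le(\up{V}^k_{h+1}-\low{V}^k_{h+1})/2$, whereas the paper directly bounds the difference of the two empirical variances as in \eqref{eq:july25-2}; both yield the same three-term bound up to constants.
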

\begin{proof}
The proof is by backward induction and very similar to that of Lemma~\ref{lem:sandwichVpik_Hoeffding}. 
Suppose the bounds hold for the $Q$-values in the $(h+1)^{\rm th}$ step, we now establish the bounds for the $V$-values in the $(h+1)^{\rm th}$ step and $Q$-values in the $h^{\rm th}$-step.

The proof for the $V$-values is the same as 
\eqref{boring-2}.

For the $Q$-values, the decomposition \eqref{eq:Q-decomposition} still holds and $(A)$ is bounded using Lemma~\ref{lem:lower-order} as before. The only difference is that we need to bound $(B)$ more carefully.

First, by the first inequality in event $E_1$,
$$
    |(B)| \le \cO\left(  \sqrt{\frac{\widehat{\Va}_h^k \Vdstar_{h+1}(s,a,b) \iota}{\minone{N_h^k(s,a,b)}}} + \frac{H\iota }{\minone{N_h^k(s,a,b)}}\right).
    $$

 By the relation of $V$-values in the $(h+1)^{\rm th}$ step,
\begin{equation}\label{eq:july25-2}
		\begin{aligned}
			&|[\Vahat_h^k (\up{V}^k_{h+1} + \low{V}^k_{h+1})/2] - \widehat{\Va}_h^k\Vdstar_{h+1}|(s,a,b)\\
\le & |[\Phat_h^k (\up{V}^k_{h+1} + \low{V}^k_{h+1})/2]^2-(\Phat_h^k\Vdstar_{h+1})^2|(s,a,b)+|\Phat_h^k [(\up{V}^k_{h+1} + \low{V}^k_{h+1})/2]^2-\Phat_h^k(\Vdstar_{h+1})^2|(s,a,b)
			\\
			  \le&
			4H\widehat{\P}_h^k |(\up{V}^k_{h+1} + \low{V}^k_{h+1})/2 - \Vdstar_{h+1}|(s,a,b)\\
			 \le &
			4H\widehat{\P}_h^k (\up{V}^{k}_{h+1} - \low{V}^{k}_{h+1})(s,a,b),
		\end{aligned}
		\end{equation}
		which implies
		\begin{equation}\label{eq:july25-3}
		\begin{aligned}
&\qquad  \sqrt{\frac{\iota\widehat{\Va}_h^k \Vdstar_{h+1}(s,a,b) }{\minone{N_h^k(s,a,b)}}}  \\
			&\le 
		\sqrt{\frac{\iota[\Vahat_h^k [(\up{V}^k_{h+1} + \low{V}^k_{h+1})/2] + 4 H\widehat{\P}_h^k (\up{V}^{k}_{h+1} - \low{V}^{k}_{h+1})](s,a,b)}{\minone{N_h^k(s,a,b)}}}\\
		&	\le    \sqrt{\frac{\iota\widehat{\Va}_h^k [(\up{V}^k_{h+1} + \low{V}^k_{h+1})/2](s,a,b) }{\minone{N_h^k(s,a,b)}}}+ \sqrt{\frac{4\iota H\widehat{\P}_h^k (\up{V}^{k}_{h+1} - \low{V}^{k}_{h+1})](s,a,b) }{\minone{N_h^k(s,a,b)}}}\\
	&	\overset{\left( i \right)}{\le} \sqrt{\frac{\iota\widehat{\Va}_h^k [(\up{V}^k_{h+1} + \low{V}^k_{h+1})/2](s,a,b) }{\minone{N_h^k(s,a,b)}}}
		+ \frac{\widehat{\P}_h^k (\up{V}^{k}_{h+1} - \low{V}^{k}_{h+1})}{H}
		+ \frac{4H^2\iota}{\minone{N_h^k(s,a,b)}},
		\end{aligned}
		\end{equation}
		where $(i)$ is by AM-GM inequality.

	Plugging the above inequalities back into \eqref{eq:Q-decomposition} and recalling the definition of $\beta_h^k$ and $\gamma_h^k$ completes the proof.
\end{proof}

We need one more lemma to control the error of the empirical variance estimator:
\begin{lemma}
	\label{lem:bound_variance}
	Suppose event $E_1$ holds. Then for all $h,s,a,b$ and $k\in[K]$, we have
	\begin{align*}
&	| \Vahat_h^k [(\up{V}^k_{h+1} + \low{V}^k_{h+1})/2] - \Va_h\Vpik_{h+1}|(s,a,b) \\
	\le & 4H\Pr_h(\up{V}^{k}_{h+1} - \low{V}^{k}_{h+1})(s,a,b) + 
	\bigO\paren{1 + \frac{H^4S\iota}{\minone{N_h^k(s,a,b)}}}.
	\end{align*}
\end{lemma}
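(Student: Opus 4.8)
The plan is to compare $\Vahat_h^k$ of the midpoint function $W:=(\up{V}^k_{h+1}+\low{V}^k_{h+1})/2$ against $\Va_h \Vpik_{h+1}$ by inserting $\Va_h W$ as an intermediate quantity, so that one piece changes only the \emph{measure} (empirical versus population, same function $W$) and the other changes only the \emph{function} (same measure $\P_h$, from $W$ to $\Vpik_{h+1}$). Concretely, I would start from the triangle inequality
\[
|\Vahat_h^k W - \Va_h \Vpik_{h+1}|(s,a,b) \le |\Vahat_h^k W - \Va_h W|(s,a,b) + |\Va_h W - \Va_h \Vpik_{h+1}|(s,a,b),
\]
and bound the two terms separately.

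For the measure-change term I would expand $\Vahat_h^k W = \Phat_h^k W^2 - (\Phat_h^k W)^2$ and $\Va_h W = \P_h W^2 - (\P_h W)^2$, and control each difference through the $\ell_1$ deviation bound in event $E_1$, namely $\norm{(\Phat_h^k-\P_h)(\cdot\mid s,a,b)}_1 \le c_1\sqrt{S\iota/\minone{N_h^k(s,a,b)}}$, together with $\norm{W}_\infty\le H$. This gives $|(\Phat_h^k-\P_h)W^2|\le H^2\norm{(\Phat_h^k-\P_h)(\cdot\mid s,a,b)}_1$ and $|(\Phat_h^k W)^2-(\P_h W)^2|\le 2H^2\norm{(\Phat_h^k-\P_h)(\cdot\mid s,a,b)}_1$, so the term is $\cO(H^2\sqrt{S\iota/\minone{N_h^k(s,a,b)}})$. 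The AM-GM step $H^2\sqrt{S\iota/\minone{N_h^k(s,a,b)}}=\sqrt{H^4S\iota/\minone{N_h^k(s,a,b)}}\le 1 + H^4S\iota/\minone{N_h^k(s,a,b)}$ then converts this into the stated lower-order form $\cO(1 + H^4S\iota/\minone{N_h^k(s,a,b)})$; the degenerate case $N_h^k(s,a,b)=0$, where $\Vahat_h^k W:=H^2$ by convention, is immediate since the right-hand side already dominates $H^2$.

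For the function-change term I would use the elementary variance-Lipschitz bound $|\Va_h f - \Va_h g|(s,a,b)\le 4H\,\P_h|f-g|(s,a,b)$ valid for all $f,g\in[0,H]$, which follows by writing $\Var(f)-\Var(g)=\E[(f-g)(f+g)]-(\E f-\E g)(\E f+\E g)$ and using $|f+g|,|\E f+\E g|\le 2H$. I would apply this with $f=W$ and $g=\Vpik_{h+1}$. Both functions lie in $[\low{V}^k_{h+1},\up{V}^k_{h+1}]$ pointwise ($W$ by definition and $\Vpik_{h+1}$ by the sandwiching established in Lemma~\ref{lem:sandwichVpik_Bernstein}), so $|W-\Vpik_{h+1}|\le(\up{V}^k_{h+1}-\low{V}^k_{h+1})$ everywhere, yielding that this term is at most $4H\,\P_h(\up{V}^k_{h+1}-\low{V}^k_{h+1})(s,a,b)$. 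Summing the two bounds gives the claim.

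The main obstacle is conceptual rather than computational: the function $W$ is \emph{data-dependent}, being built from the value iterates that themselves depend on the samples forming $\Phat_h^k$, so a fixed-function (scalar Bernstein) concentration inequality cannot be invoked for the measure-change term. The fix is to route that term entirely through the function-uniform $\ell_1$ deviation $\norm{\Phat_h^k-\P_h}_1\lesssim\sqrt{S\iota/\minone{N_h^k(s,a,b)}}$, which holds simultaneously for all bounded functions; this is precisely what forces the extra factor of $S$ into the lower-order term $H^4S\iota/\minone{N_h^k(s,a,b)}$ and is acceptable only because this whole quantity is a lower-order contribution in the final regret bound.
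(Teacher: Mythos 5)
Your proof is correct and reaches the stated bound, but it organizes the argument through a different decomposition than the paper. The paper never introduces the intermediate quantity $\Va_h[(\up{V}^k_{h+1}+\low{V}^k_{h+1})/2]$: instead it expands both variance operators at once, uses monotonicity of squares and of expectations (together with the sandwich $\low{V}^k_{h+1}\le \Vpik_{h+1}\le \up{V}^k_{h+1}$) to replace the data-dependent functions by $\up{V}^k_{h+1}$ and $\low{V}^k_{h+1}$ inside the expansion, and then splits the result into four elementary terms, namely $(\Phat_h^k-\P_h)(\up{V}^k_{h+1})^2$, $\P_h[(\up{V}^k_{h+1})^2-(\low{V}^k_{h+1})^2]$, $(\Phat_h^k\low{V}^k_{h+1})^2-(\P_h\low{V}^k_{h+1})^2$, and $(\P_h\low{V}^k_{h+1})^2-(\P_h\up{V}^k_{h+1})^2$. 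Your measure-change term plays exactly the role of the paper's first and third terms (both routed through the $\ell_1$ deviation in event $E_1$, for precisely the data-dependence reason you identify), and your function-change term, handled via the variance-Lipschitz inequality $|\Va_h f-\Va_h g|\le 4H\,\P_h|f-g|$ plus the pointwise containment of both functions in $[\low{V}^k_{h+1},\up{V}^k_{h+1}]$, plays the role of the second and fourth terms; both routes produce the same coefficient $4H$ and the same AM-GM step $H^2\sqrt{S\iota/\minone{N_h^k(s,a,b)}}\le 1+H^4S\iota/\minone{N_h^k(s,a,b)}$. Your version is somewhat more modular, since the triangle inequality through the population variance of the midpoint makes explicit which error comes from changing the measure and which from changing the function, while the paper's grouping avoids stating any auxiliary lemma. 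One caveat applies equally to both write-ups: the sandwiching of $\Vpik_{h+1}$ is cited from Lemma~\ref{lem:sandwichVpik_Bernstein}, whose statement literally bounds only the best-response values $V^{\dagger,\nu^k}_{h+1}$ and $V^{\mu^k,\dagger}_{h+1}$; extending it to the value of the correlated CCE policy $\pi^k$ requires a short additional induction that the paper also leaves implicit, so this is not a gap specific to your argument.
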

\begin{proof}
By Lemma~\ref{lem:sandwichVpik_Bernstein}, we have $\up{V}^{k}_h(s) \ge V_h^{\pi^k}(s) \ge \low{V}^{k}_h(s)$. As a result,
	\begin{align*}
	&	 |\Vahat_h^k [(\up{V}^k_{h+1} + \low{V}^k_{h+1})/2] - \Va_h\Vpik_{h+1}|(s,a,b) \\
		= & | [\Phat_h^k( \up{V}^k_{h+1} + \low{V}^k_{h+1})^2/4 - \P_h(\Vpik_{h+1})^2](s,a,b)
- [(\Phat_h^k (\up{V}^k_{h+1} + \low{V}^k_{h+1}))^2/4 - (\P_h\Vpik_{h+1})^2](s,a,b)| \\
\le & [\Phat_h^k( \up{V}^k_{h+1})^2 - \P_h(\low{V}^k_{h+1})^2
- (\Phat_h^k \low{V}^k_{h+1})^2 + (\P_h\up{V}^k_{h+1})^2](s,a,b)
\\
\le &[|(\Phat_h^k-\P_h)( \up{V}^k_{h+1})^2|+|\P_h[( \up{V}^k_{h+1})^2-(\low{V}^k_{h+1})^2]|\\
&+|(\Phat_h^k \low{V}^k_{h+1})^2-(\P_h \low{V}^k_{h+1})^2|+|(\P_h \low{V}^k_{h+1})^2-(\P_h\up{V}^k_{h+1})^2|](s,a,b)
	\end{align*}
These terms can be bounded separately by using event $E_1$:
\begin{align*}
&	|(\Phat_h^k-\P_h)( \up{V}^k_{h+1})^2|(s,a,b) \le 
	H^2\|(\Phat^k_h-\Pr_h)(\cdot\mid s,a,b)\|_1
	\le
	\cO(H^2\sqrt{\frac{S\iota}{\minone{N_h^k(s,a,b)}}}),
	\\
&	|\P_h[( \up{V}^k_{h+1})^2-(\low{V}^k_{h+1})^2]|(s,a,b) \le 2H[\P_h( \up{V}^k_{h+1}-\low{V}^k_{h+1})](s,a,b),
	\\
&	|(\Phat_h^k \low{V}^k_{h+1})^2-(\P_h \low{V}^k_{h+1})^2|(s,a,b) \le 2H[(\Phat_h^k-\P_h)\low{V}^k_{h+1}](s,a,b) \le \cO(H^2\sqrt{\frac{S\iota}{\minone{N_h^k(s,a,b)}}}),
	\\
&	|(\P_h \low{V}^k_{h+1})^2-(\P_h\up{V}^k_{h+1})^2|(s,a,b) \le 2H[\P_h( \up{V}^k_{h+1}-\low{V}^k_{h+1})](s,a,b).	
\end{align*}

Combining with $H^2\sqrt{\frac{S\iota}{\minone{N_h^k(s,a,b)}}} \le 1 + \frac{H^4S\iota}{\minone{N_h^k(s,a,b)}}$ completes the proof.
\end{proof} 

Finally we come to the proof of Theorem~\ref{thm:Nash-VI-B}.

\begin{proof}[Proof of Theorem~\ref{thm:Nash-VI-B}]
Suppose event $E_1$ holds.
We define $\Delta_h^k$, $\zeta_h^k$ abd $\xi_h^k$ as in the proof of Theorem~\ref{thm:Nash-VI}. As before we have
\begin{equation}
	\label{equ:single-step}
\begin{aligned}
    \Delta _{h}^{k}
\le &\zeta _{h}^{k}+
\left(1+\frac{c_3}{H}\right)\P_{h}(\up{V}_{h+1}^{k}-\low{V}_{h+1}^{k})  \left( s_{h}^{k},a_h^k,b_h^k \right) \\
&
+4c_2C\left(\sqrt{\frac{\iota\Vahat_h^k[(\up{V}^k_{h+1} + \low{V}^k_{h+1})/2]( s_{h}^{k},a_h^k,b_h^k)}{\minone{N_h^k( s_{h}^{k},a_h^k,b_h^k)}}}
+ \frac{H^2S\iota}{\minone{N_h^k( s_{h}^{k},a_h^k,b_h^k)}}\right).
\end{aligned}
\end{equation}

By Lemma~\ref{lem:bound_variance},
	\begin{equation}\label{eq:july25-6}
	\begin{aligned}
		&\sqrt{\frac{\iota\Vahat_h^k[(\up{V}^k_{h+1} + \low{V}^k_{h+1})/2](s,a,b)}{\minone{N_h^k(s,a,b)}}}\\
		 \le&
\bigO\paren{\sqrt{\frac{\iota\Va_h\Vpik_{h+1}(s,a,b) + \iota}{\minone{N_h^k(s,a,b)}}} + \sqrt{\frac{H\iota\Pr_h (\up{V}^{k}_{h+1} - \low{V}^{k}_{h+1})(s,a,b)}{\minone{N_h^k(s,a,b)}}}+ \frac{H^2\sqrt{S}\iota}{\minone{N_h^k(s,a,b)}}}\\
		\le& c_4\paren{\sqrt{\frac{\iota\Va_h\Vpik_{h+1}(s,a,b) + \iota}{\minone{N_h^k(s,a,b)}}} + 
		\frac{\Pr_h (\up{V}^{k}_{h+1} - \low{V}^{k}_{h+1})(s,a,b)}{H}
	+ \frac{H^2\sqrt{S}\iota}{\minone{N_h^k(s,a,b)}}},
	\end{aligned}
	\end{equation}
	where $c_4$ is some absolute constant.
	Define $c_5:=4c_2c_4C+c_3$ and $\kappa:=1+c_5/H$. 
	Plugging \eqref{eq:july25-6} back into \eqref{equ:single-step}, we have 
	\begin{equation}
	\begin{aligned}
		\Delta_h^k\le \kappa \Delta_{h+1}^k + \kappa   \xi_h^k +\zeta_h^k +\cO\bigg(\sqrt{\frac{\iota\Va_h\Vpik_{h+1}(s_h^k,a_h^k,b_h^k)}{N_h^k(s_h^k,a_h^k,b_h^k)}} + \sqrt{\frac{\iota}{N_h^k(s_h^k,a_h^k,b_h^k)}}	+ \frac{H^2S\iota}{N_h^k(s_h^k,a_h^k,b_h^k)}\bigg)\Bigg\}.
		\end{aligned}
	\end{equation}
Recursing this argument for $h \in [H]$ and summing over $k$,
\begin{align*}
\sum_{k=1}^{K}{\Delta_1^k} \le  
\sum_{k=1}^{K}\sum_{h=1}^{H}\bigg[
\kappa^{h-1}\zeta_h^k + \kappa^{h}\xi_h^k
+
\cO
\left(
\sqrt{\frac{\iota\Va_h\Vpik_{h+1}(s_h^k,a_h^k,b_h^k)}{\minone{N_h^k}}}
+\sqrt{\frac{\iota}{\minone{N_h^k}}}
+ \frac{H^2S\iota}{\minone{N_h^k}}
\right)\bigg].
\end{align*}

The remaining steps are the same as that in the proof of Theorem~\ref{thm:Nash-VI} except that we need to bound the sum of variance term.

 By Cauchy-Schwarz,
\begin{align*}
	\sum_{k=1}^{K}\sum_{h=1}^{H}{\sqrt{\frac{\Va_h\Vpik_{h+1}(s_h^k,a_h^k,b_h^k)}{\minone{N_h^k(s_h^k,a_h^k,b_h^k)}}}}
	\le  \sqrt{\sum_{k=1}^{K}\sum_{h=1}^{H}{\Va_h\Vpik_{h+1}(s_h^k,a_h^k,b_h^k)}\cdot \sum_{k=1}^{K}\sum_{h=1}^{H}{\frac{1}{\minone{N_h^k(s_h^k,a_h^k,b_h^k)}}}}.\end{align*}

By the Law of total variation and standard martingale concentration (see Lemma C.5 in \citet{jin2018q} for a formal proof), with probability at least $1-p$, we have
\begin{align*}
	\sum_{k=1}^{K}\sum_{h=1}^{H}{\Va_h\Vpik_{h+1}(s_h^k,a_h^k,b_h^k)}{\le}  \cO \paren{HT+H^3\iota}.
\end{align*}
Putting all relations together, we obtain that with probability at least $1-2p$ (one $p$ comes from $\Pr(E_1)\ge 1-p$ and the other comes from the inequality for bounding the variance term), 
$$\Reg(K) = \sum_{k=1}^K (\VpikA_1- \VpikB_1)(s_1) \le  \cO (\sqrt{H^2SABT\iota} + H^3S^2AB\iota^2 ).$$
Rescaling $p$ completes the proof.
\end{proof}

\section{Proof for Section \ref{sec:reward-free} -- Reward-Free Learning}
\subsection{Proof of Theorem \ref{thm:rewardfree-explore}}
\label{appendix:pf_rewardfree}

In this section, we prove Theorem \ref{thm:rewardfree-explore} for the single reward function case, i.e., $N=1$. 
The proof for multiple reward functions ($N>1$) simply follows  from taking a union bound, that is, replacing the failure probability $p$ by $Np$.

Let $(\mu^k,\nu^k)$ be an arbitrary Nash-equilibrium policy of $\widehat{\cM}^k := (\Phat^k,\rhat^k)$, where $\Phat^k$ and $\rhat^k$ are our empirical estimate of the transition and the reward at the beginning of the $k$'th episode in Algorithm \ref{alg:VI-zero}, respectively.
We use $N_h^k(s,a,b)$ to denote the number we have visited the state-action tuple $(s,a,b)$ at the $h$'=th step before the $k$'th episode. And the bonus used in the $k$'th episode can be written as 
\begin{equation}\label{eq:bonus}
	\beta_h^k(s,a,b) := C\paren{\sqrt{ \frac{H^2\iota}{\max\{N_h^k(s,a,b),1\}}}+ \frac{H^2S\iota}{\minone{N_h^k(s,a,b)}}},\end{equation}
where $\iota = \log(SABT /p)$ and  $C$ is some large absolute constant.

%

We use $\Qhat^k$ and $\Vhat^k$ to denote the empirical optimal value functions of  $\widehat{\cM}^k$ as following.

\begin{equation}
\left\{
\begin{aligned}
&	\Qhat^k_h(s,a,b) = (\Phat^k_h\Vhat_{h+1})(s,a,b)+\rhat_h^k(s,a,b),     \\
&	\Vhat^k_h(s) = \max_{\mu}\min_{\nu} \D_{\mu\times\nu}  \Qhat^k_h(s).
\end{aligned}	
\right.
\end{equation}
Since $(\mu^k,\nu^k)$ is a Nash-equilibrium policy of $\widehat{\cM}^k$, we also have $\Vhat^k_h(s)  =\D_{\mu^k\times\nu^k} \Qhat^k_h(s) $.


We begin with stating a useful property of matrix game that will be frequently used in our analysis. Since its proof is quite simple, we omit it here.

\begin{lemma} \label{lem:lipschitz}
Let $\X,\Y,\mat{Z}\in\mathbb{R}^{A\times B}$ and $\Delta_d$ be the $d$-dimensional simplex.
Suppose $\left|\X-\Y \right|\le \mathbf{Z}$, where the inequality is entry-wise. Then
	\begin{equation}
		\left|\max_{\mu\in\triangle_A}\min_{\nu\in\triangle_B}
		\mu\trans \X \nu 
		-\max_{\mu\in\triangle_A}\min_{\nu\in\triangle_B}
		\mu\trans \Y \nu \right|
		\le  \max_{i,j} \mathbf{Z}_{ij}.
	\end{equation}
\end{lemma}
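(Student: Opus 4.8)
The plan is to prove the two one-sided bounds
\[
f(\X) \le f(\Y) + M, \qquad f(\Y) \le f(\X) + M,
\]
where I write $f(\cdot) := \max_{\mu}\min_{\nu}\ \mu\trans (\cdot)\,\nu$ and $M := \max_{i,j}\mathbf{Z}_{ij}$; the claimed bound on $|f(\X)-f(\Y)|$ is immediate from these. Since $|\Y-\X| = |\X-\Y| \le \mathbf{Z}$ entrywise, the setup is symmetric in $\X$ and $\Y$, so it suffices to establish the first inequality.

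First I would reduce the entrywise matrix hypothesis to a uniform scalar bound: because $|\X-\Y|\le \mathbf{Z}$ entrywise and each entry of $\mathbf{Z}$ is at most $M$, we get $|X_{ij}-Y_{ij}|\le M$ for all $(i,j)$. Then, for an arbitrary fixed pair of distributions $(\mu,\nu)$, I would control the difference of the bilinear forms using the fact that $\mu_i,\nu_j\ge 0$ and $\sum_{i,j}\mu_i\nu_j = (\sum_i \mu_i)(\sum_j \nu_j) = 1$. The key computation is
\[
\mu\trans \X \nu - \mu\trans \Y \nu
= \sum_{i,j}\mu_i\nu_j\,(X_{ij}-Y_{ij})
\le M\sum_{i,j}\mu_i\nu_j = M,
\]
and symmetrically the difference is at least $-M$, so $\mu\trans \X \nu \le \mu\trans \Y \nu + M$ holds uniformly over all $(\mu,\nu)$.

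The final step pushes this uniform additive bound through the inner $\min_\nu$ and outer $\max_\mu$, and this is the only place that needs a little care. I would invoke the elementary monotonicity fact that if $g\le h$ pointwise on a common domain then $\min g \le \min h$ and $\max g \le \max h$. Applying it to the inner minimization (for each fixed $\mu$) gives $\min_\nu \mu\trans \X\nu \le \min_\nu \mu\trans \Y\nu + M$, and applying it again to the outer maximization yields $f(\X)\le f(\Y)+M$. The reason the argument goes through cleanly is that $M$ is a constant and therefore commutes through both the $\min$ and the $\max$; there is no substantive obstacle beyond keeping track of the order of the two operations, which is exactly why the main text states the lemma without proof.
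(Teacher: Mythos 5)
Your proof is correct and is precisely the standard argument the authors had in mind when they wrote that the proof is ``quite simple'' and omitted it: bound the bilinear forms uniformly over $(\mu,\nu)$ by $M=\max_{i,j}\mathbf{Z}_{ij}$ using nonnegativity and normalization of $\mu_i\nu_j$, then push the constant through the inner $\min$ and outer $\max$ by monotonicity, and conclude by symmetry. No gaps; the compactness of the simplices guarantees the max/min are attained, so every step is justified.
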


\begin{lemma}\label{event-rf-1}
Let $c_1$  be some large absolute constant such that $c_1^2+c_1\le C$.
 Define event $E_1$ to be: for all $h,s,a,b,s'$ and $k\in[K]$, 
\begin{equation}
\left\{
	\begin{aligned}
		&|[(\Phat^k_h - \Pr_h) V^\star_{h+1}](s,a,b)| \le \frac{c_1}{10}\sqrt{ \frac{H^2\iota}{\max\{N_h^k(s,a,b),1\}}},\\
		&|(\rhat^k_h - r_h)(s,a,b)| \le \frac{c_1}{10}\sqrt{ \frac{H^2\iota}{\max\{N_h^k(s,a,b),1\}}}, \\
		& |(\Phat^k_h - \Pr_h)(s'\mid s,a,b)| 
		\le \frac{c_1}{10}
	\paren{\sqrt{ \frac{\Phat^k_h(s'\mid s,a,b)\iota}{\max\{N_h^k(s,a,b),1\}}}+ \frac{\iota}{\minone{N_h^k(s,a,b)}}}.
	\end{aligned}
	\right.
\end{equation}
We have $\Pr(E_1)\ge 1-p$.
\end{lemma}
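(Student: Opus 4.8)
The plan is to establish $E_1$ in exactly the same manner as Lemma~\ref{event-vi-1}: fix a single tuple $(s,a,b,h)$, apply a sharp concentration inequality at every possible value of the visitation count, take a union bound over counts and over all tuples, and finally pass from the uniform-in-count bound to the realized (data-dependent) count. I would split $E_1$ into its three constituent inequalities and prove each to hold with probability at least $1-p/3$; the $c_1/10$ prefactor leaves ample room to absorb the universal constants from the concentration inequalities (taking $c_1$ large, as the hypothesis $c_1^2+c_1\le C$ permits), so that a union bound over the three yields $\Pr(E_1)\ge 1-p$.

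The central device is the resampling argument from the proof of Lemma~\ref{event-vi-1}. For a fixed $(s,a,b,h)$, I would imagine the environment drawing $s^{(1)},\dots,s^{(K)}$ i.i.d.\ from $\Pr_h(\cdot\mid s,a,b)$ in advance, and letting the $i$-th visit to $(s,a,b,h)$ transition to $s^{(i)}$; this induces the same law as the true interaction, so it suffices to prove the bounds in this surrogate process. Writing $\Phat^{(t)}_h$ and $\rhat^{(t)}_h$ for the empirical transition and reward formed from the first $t$ samples, each quantity appearing in $E_1$ becomes, for a fixed count $t$, a deviation of an average of $t$ i.i.d.\ bounded variables from its mean, to which standard concentration applies.

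Concretely, for a fixed $t$: (i) since $V^\star_{h+1}$ is a fixed vector in $[0,H]^S$ independent of the samples, $(\Phat^{(t)}_h-\Pr_h)V^\star_{h+1}(s,a,b)$ is an average of $t$ i.i.d.\ variables bounded by $H$, and Hoeffding's (Azuma's) inequality gives the first bound $\sqrt{H^2\iota/t}$; (ii) the reward samples lie in $[0,1]\subseteq[0,H]$, so the same Hoeffding bound controls $|(\rhat^{(t)}_h-r_h)(s,a,b)|$, which is only looser than the sharpest possible estimate and certainly within $\sqrt{H^2\iota/t}$; (iii) for the third inequality, fixing $s'$ and applying the empirical Bernstein inequality to the Bernoulli indicator $\mathbf{1}[s^{(i)}=s']$ yields a deviation bound in terms of the empirical probability $\Phat^{(t)}_h(s'\mid s,a,b)$ together with an additive $\iota/t$ term. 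In each case I would set the per-event failure probability to $p/(3S^2ABT)$ (with an extra factor $S$ for the choice of $s'$ in the third bound) and union bound over $t\in[K]$, over $(s,a,b,h)$, and over $s'$; since the agent reaches each $(s,a,b,h)$ at most $K$ times, the realized count $N_h^k(s,a,b)$ is always one of these candidate values of $t$, so the uniform-in-$t$ statement implies the claimed bound evaluated at $N_h^k$.

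The only genuine subtlety is that $N_h^k(s,a,b)$ is itself a random, data-dependent quantity, so one cannot naively invoke a fixed-sample-size concentration inequality ``at'' the observed count; the resampling construction together with the union over all candidate counts is precisely what resolves this, and is the same mechanism already used for Lemma~\ref{event-vi-1}. Relative to the reward-aware case, the sole new ingredient is the reward-concentration bound (ii), but since the reward samples are bounded this is handled identically and adds nothing essential. I would therefore omit the routine calculations and the explicit empirical Bernstein statement, deferring to the proof of Lemma~\ref{event-vi-1} and to standard references for the concentration inequalities.
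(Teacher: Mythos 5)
Your proposal is correct and follows essentially the same route as the paper's own proof: the same resampling construction (pre-drawing $s^{(1)},\dots,s^{(K)}$ i.i.d.\ so that fixed-sample-size concentration applies at each candidate count), Hoeffding for the first two bounds, empirical Bernstein for the entrywise transition bound, and a union bound over $t\in[K]$, tuples, and $s'$ to handle the random visitation count. The only difference is bookkeeping of the failure probability ($p/3$ per inequality versus per-event $p/(S^2ABT)$), which is immaterial.
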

\begin{proof}
	The proof is standard: apply concentration inequalities and then take a union bound. For completeness, we provide the proof of the third one here.
	
Consider a fixed $(s,a,b,h)$ tuple. 

Let's consider the following equivalent random process:
(a) before the agent starts, the environment samples 
$\{s^{(1)},s^{(2)},\ldots,s^{(K)}\}$ independently from $\Pr_h(\cdot\mid s,a,b)$; 
(b) during the interaction between the agent and the environment, the $i^{\rm th}$ time the agent reaches $(s,a,b,h)$, the environment will make the agent transit to $s^{(i)}$. 
Note that the randomness induced by this interaction procedure is exactly the same as the original one, which means the probability of any event in this context is the same as in the original problem.
 Therefore, it suffices to prove the target concentration inequality in this 'easy' context.
Denote by $\Phat^{(t)}_h(\cdot\mid s,a,b)$ the empirical estimate of $\Pr_h(\cdot\mid s,a,b)$ calculated using $\{s^{(1)},s^{(2)},\ldots,s^{(t)}\}$.
 For a fixed $t$ and $s'$, by the empirical Bernstein inequality, we have with probability at least $1-p/S^2ABT$,
 $$
 |(\Pr_h - \Phat^{(t)}_h)(s'\mid s,a,b)| 
		\le \bigO 
	\paren{\sqrt{ \frac{\Phat^{(t)}_h(s'\mid s,a,b)\iota}{t}}+ \frac{\iota}{t}}.
 $$
 
Now we can take a union bound over all $s,a,b,h,s'$ and $t\in[K]$, and obtain 
that with probability at least $1-p$, for all $s,a,b,h,s'$ and $t\in[K]$,
 $$
 |(\Pr_h - \Phat^{(t)}_h)(s'\mid s,a,b)| 
		\le \bigO 
	\paren{\sqrt{ \frac{\Phat^{(t)}_h(s'\mid s,a,b)\iota}{t}}+ \frac{\iota}{t}}.
 $$
Note that the agent can reach each $(s,a,b,h)$ for at most $K$ times, so we conclude the third  inequality also holds with probability at least $1-p$.
	\end{proof}

The following lemma states that the empirical optimal value functions are close to the true optimal ones, and their difference is controlled by the exploration value functions 
 calculated in Algorithm \ref{alg:VI-zero}.

\begin{lemma}\label{lem:MG-closeQstarQk}
Suppose event $E_1$ (defined in Lemma \ref{event-rf-1}) holds. Then for all $h,s,a,b$ and $k\in[K]$, we have, 
\begin{equation}
\left\{
\begin{aligned}
	&	\left|\Qhat_h^k(s,a,b) - Q_h^\star(s,a,b) \right|\le \Qt_h^k(s,a,b), \\
&	\left| \Vhat_h^k(s) - V_h^\star(s) \right|\le \Vt_h^k(s).
\end{aligned}
\right.
\end{equation}
\end{lemma}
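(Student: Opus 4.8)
The plan is to prove both inequalities simultaneously by backward induction on $h$, running from $h=H+1$ down to $h=1$. The base case $h=H+1$ is immediate, since $\Vhat^k_{H+1}$, $V^\star_{H+1}$, and $\Vt^k_{H+1}$ are all identically zero by convention, so the $V$-bound reads $0\le 0$. For the inductive step I assume the $V$-bound $|\Vhat^k_{h+1}-V^\star_{h+1}|(s)\le \Vt^k_{h+1}(s)$ at step $h+1$, and first deduce the $Q$-bound at step $h$, then the $V$-bound at step $h$.

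For the $Q$-bound, I would subtract the two Bellman backups and split the difference into three pieces,
\[
\Qhat^k_h-Q^\star_h=(\rhat^k_h-r_h)+(\Phat^k_h-\P_h)V^\star_{h+1}+\Phat^k_h(\Vhat^k_{h+1}-V^\star_{h+1}).
\]
The first two terms are controlled directly by event $E_1$: the reward-concentration bound gives $|\rhat^k_h-r_h|\le \frac{c_1}{10}\sqrt{H^2\iota/\minone{N_h^k}}$, and the transition-concentration bound---applied against the \emph{fixed} optimal value $V^\star_{h+1}$, which is the key point that makes this a genuine concentration rather than a bound on a data-dependent quantity---gives $|(\Phat^k_h-\P_h)V^\star_{h+1}|\le \frac{c_1}{10}\sqrt{H^2\iota/\minone{N_h^k}}$. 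The third term is nonnegative and bounded by the induction hypothesis, $|\Phat^k_h(\Vhat^k_{h+1}-V^\star_{h+1})|\le \Phat^k_h\Vt^k_{h+1}$. Adding these, and using the assumption $c_1^2+c_1\le C$ so that $\frac{2c_1}{10}\le C$, I would absorb the two concentration terms into the bonus $\beta_h^k$, obtaining $|\Qhat^k_h-Q^\star_h|\le \Phat^k_h\Vt^k_{h+1}+\beta_h^k$. Since $\Qhat^k_h$ and $Q^\star_h$ both lie in $[0,H]$, their difference is also at most $H$, so the bound survives the clipping and yields exactly $|\Qhat^k_h-Q^\star_h|\le \min\{\Phat^k_h\Vt^k_{h+1}+\beta_h^k,\,H\}=\Qt^k_h$.

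For the $V$-bound I would use that $\Vhat^k_h(s)=\max_\mu\min_\nu\D_{\mu\times\nu}\Qhat^k_h(s)$ and, via the Nash Bellman optimality equation, $V^\star_h(s)=\max_\mu\min_\nu\D_{\mu\times\nu}Q^\star_h(s)$; writing $\D_{\mu\times\nu}Q(s)=\mu\trans Q(s,\cdot,\cdot)\nu$, these are matrix-game values, whereas $\Vt^k_h(s)=\max_{(a,b)}\Qt^k_h(s,a,b)$ (the point-mass policy in Algorithm~\ref{alg:VI-zero}). Having just established the entrywise domination $|\Qhat^k_h-Q^\star_h|(s,a,b)\le\Qt^k_h(s,a,b)$, I would invoke Lemma~\ref{lem:lipschitz} with $\X=\Qhat^k_h(s,\cdot,\cdot)$, $\Y=Q^\star_h(s,\cdot,\cdot)$, and $\mat{Z}=\Qt^k_h(s,\cdot,\cdot)$ to conclude $|\Vhat^k_h(s)-V^\star_h(s)|\le\max_{a,b}\Qt^k_h(s,a,b)=\Vt^k_h(s)$, closing the induction.

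I expect the main obstacle to be the correct treatment of the minimax structure of the value functions, which is exactly where this argument departs from the single-agent reward-free analysis: a naive bound would fail to account for the two minimax operators being applied to different (though close) payoff matrices, and Lemma~\ref{lem:lipschitz} is precisely the tool that converts an entrywise gap between the $Q$-matrices into a gap between their matrix-game values. The remaining work is routine bookkeeping---verifying that the constants of event $E_1$ fit inside the bonus $\beta_h^k$, and that the clipping at $H$ does not violate the inequality.
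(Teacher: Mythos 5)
Your proof is correct and follows essentially the same route as the paper's: backward induction on $h$, the same three-term decomposition of $\Qhat^k_h - Q^\star_h$ (reward concentration, transition concentration against the fixed $V^\star_{h+1}$, and the empirical-transition term controlled by the induction hypothesis), absorption of the $E_1$ terms into $\beta_h^k$ with the clipping at $H$, and finally Lemma~\ref{lem:lipschitz} to pass from the entrywise $Q$-bound to the minimax $V$-bound. The only cosmetic slip is calling the third term ``nonnegative''---it need not be, but you correctly bound its absolute value, so nothing is affected.
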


\begin{proof}
	Let's prove by backward induction on $h$. The case of $h=H+1$ holds trivially.
	
	Assume the conclusion hold for $(h+1)$'th step. For $h$'th step, 
	\begin{equation}
		\begin{aligned}
			&\quad \ \left|\Qhat_h^k(s,a,b) - Q_h^\star(s,a,b)\right|\\
				 & \le \min\left\{ \left| [(\Phat^k_h - \Pr_h) V^\star_{h+1}](s,a,b) \right|
				 + 
				 |(\rhat^k_h - r_h)(s,a,b)|
			 + \left|[\Phat_h^k (\Vhat^k_{h+1} -V^\star_{h+1})](s,a,b)\right|, H\right\}
			\\
			 &  \overset{(i)}{\le}\min\left\{\beta_h^k(s,a,b)+
			  (\Phat_h^k \Vt^k_{h+1})(s,a,b),H\right\}
			  \stackrel{(ii)}{=}  \Qt^k_h(s,a,b),
		\end{aligned}
	\end{equation}
	where $(i)$ follows from the induction hypothesis and event $E_1$, and $(ii)$ follows from the definition of $\Qt^k_h$. 
	By Lemma \ref{lem:lipschitz}, we immediately obtain $| \Vhat_h^k(s) - V_h^\star(s) |\le \Vt_h^k(s)$.
\end{proof}

%
%

Now, we are ready to establish the key lemma in our analysis using Lemma \ref{lem:MG-closeQstarQk}.

\begin{lemma}\label{lem:MG-closeQpiQhat}
Suppose event $E_1$ (defined in Lemma \ref{event-rf-1}) holds. Then for all $h,s,a,b$ and $k\in[K]$, we have
\begin{equation}
\left\{
\begin{aligned}
	&	|\Qhat_h^k(s,a,b) - Q_h^{\dagger,\nu^k}(s,a,b)| \le  \alpha_{h} \Qt^k_h(s,a,b), \\
&	|\Vhat_h^k(s) - V_h^{\dagger,\nu^k}(s)| \le  \alpha_{h} \Vt^k_h(s) ,
\end{aligned}
\right.
\end{equation}
and 
\begin{equation}
\left\{
\begin{aligned}
	&	|\Qhat_h^k(s,a,b) - Q_h^{\mu^k,\dagger}(s,a,b)| \le   \alpha_{h} \Qt^k_h(s,a,b), \\
&	|\Vhat_h^k(s) - V_h^{\mu^k,\dagger}(s)| \le  \alpha_{h} \Vt^k_h(s),
\end{aligned}
\right.
\end{equation}
where $\alpha_{H+1}=0$ and $\alpha_h =[(1+\frac{1}{H})\alpha_{h+1} + \frac{1}{H}]\le 4$.
\end{lemma}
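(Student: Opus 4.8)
The plan is to prove all four bounds by a single backward induction on $h$, with base case $\alpha_{H+1}=0$ (where $\Qhat^k_{H+1}=\QpikA_{H+1}=\QpikB_{H+1}=0$, so the bounds hold trivially). The two claim-pairs, one for the best response $(\dagger,\nu^k)$ and one for $(\mu^k,\dagger)$, are symmetric: everything I do for $\QpikA,\VpikA$ transfers verbatim to $\QpikB,\VpikB$ after swapping the outer optimization $\max_\mu\D_{\mu\times\nu^k_h}$ for $\min_\nu\D_{\mu^k_h\times\nu}$. Within each step I would first derive the $Q$-bound at level $h$ from the $V$-bound at level $h+1$, and then read off the $V$-bound at level $h$ from the $Q$-bound at level $h$.

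For the $Q$-step I would use the empirical Bellman equation $\Qhat^k_h=\rhat^k_h+\Phat^k_h\Vhat^k_{h+1}$ together with the true best-response Bellman equation $\QpikA_h=r_h+\P_h\VpikA_{h+1}$ and decompose
\[
\Qhat^k_h-\QpikA_h=(\rhat^k_h-r_h)+\Phat^k_h(\Vhat^k_{h+1}-\VpikA_{h+1})+(\Phat^k_h-\P_h)\VpikA_{h+1}.
\]
The reward gap is controlled directly by event $E_1$, and the second term is at most $\alpha_{h+1}\Phat^k_h\Vt^k_{h+1}$ by the induction hypothesis $|\Vhat^k_{h+1}-\VpikA_{h+1}|\le\alpha_{h+1}\Vt^k_{h+1}$. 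The genuinely hard term is the last one: since $\VpikA_{h+1}$ is a best-response (hence policy-dependent) value rather than the optimal value $\Vdstar_{h+1}$, the Hoeffding bound of $E_1$ — which only applies to $\Vdstar_{h+1}$ — cannot be used on it directly.

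To get around this I would split $\VpikA_{h+1}=\Vdstar_{h+1}+(\VpikA_{h+1}-\Vdstar_{h+1})$. The $\Vdstar_{h+1}$ piece is handled by the Hoeffding bound in $E_1$. For the residual, the triangle inequality combined with Lemma \ref{lem:MG-closeQstarQk} and the induction hypothesis gives $|\VpikA_{h+1}-\Vdstar_{h+1}|\le(\alpha_{h+1}+1)\Vt^k_{h+1}$, and then I would bound $|(\Phat^k_h-\P_h)(\VpikA_{h+1}-\Vdstar_{h+1})|$ by the same AM-GM estimate used in Lemma \ref{lem:lower-order} — feeding the per-state transition concentration (third line of $E_1$) into $\sqrt{\iota\Phat^k_h(s')/\minone{N^k_h}}\le \Phat^k_h(s')/H+H\iota/\minone{N^k_h}$ — which produces $\tfrac1H(\alpha_{h+1}+1)\Phat^k_h\Vt^k_{h+1}$ plus a lower-order $H^2S\iota/\minone{N^k_h}$ term. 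This $1/H$ factor is the crux: the value-propagation terms then combine into coefficient $\alpha_{h+1}+\tfrac1H(\alpha_{h+1}+1)=(1+\tfrac1H)\alpha_{h+1}+\tfrac1H=\alpha_h$ multiplying $\Phat^k_h\Vt^k_{h+1}$, while the reward, Hoeffding, and lower-order remainders are absorbed into $\alpha_h\beta^k_h$ for a large absolute $C$ (using $\alpha_h\ge\alpha_H=1/H$ and $c_1^2+c_1\le C$ to cancel the extra $H$ inside $\beta^k_h$). Recalling $\Qt^k_h=\min\{\Phat^k_h\Vt^k_{h+1}+\beta^k_h,H\}$ yields $|\Qhat^k_h-\QpikA_h|\le\alpha_h\Qt^k_h$, and unrolling the recursion gives $\alpha_h\le(1+1/H)^H-1\le e-1\le4$.

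Finally, for the $V$-step I would exploit that $(\mu^k,\nu^k)$ is a Nash equilibrium of $\widehat{\cM}^k$, so $\Vhat^k_h(s)=\max_\mu\D_{\mu\times\nu^k_h}\Qhat^k_h(s)$, mirroring the true best-response equation $\VpikA_h(s)=\max_\mu\D_{\mu\times\nu^k_h}\QpikA_h(s)$. Applying $|\max_\mu f-\max_\mu g|\le\max_\mu|f-g|$ (i.e.\ Lemma \ref{lem:lipschitz}) with the linearity of $\D$ and the $Q$-bound just established gives $|\Vhat^k_h-\VpikA_h|(s)\le\alpha_h\max_\mu\D_{\mu\times\nu^k_h}\Qt^k_h(s)\le\alpha_h\max_{a,b}\Qt^k_h(s,a,b)=\alpha_h\Vt^k_h(s)$, where the last equality is because the VI-Zero policy is greedy, so $\Vt^k_h=\max_{a,b}\Qt^k_h$; the $(\mu^k,\dagger)$ version is identical with $\min_\nu\D_{\mu^k_h\times\nu}$. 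I expect the only real obstacle to be the policy-dependent transition term $(\Phat^k_h-\P_h)\VpikA_{h+1}$, whose control hinges entirely on the $\Vdstar$-split and the $1/H$-producing AM-GM bound; all remaining work is constant bookkeeping.
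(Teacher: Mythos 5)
Your proposal reconstructs the paper's own proof essentially step by step: the same backward induction with base case $\alpha_{H+1}=0$; the same decomposition of $\Qhat_h^k-\QpikA_h$ into reward error, the propagated term $\Phat_h^k(\Vhat^k_{h+1}-\VpikA_{h+1})$, and the policy-dependent term $(\Phat_h^k-\P_h)\VpikA_{h+1}$; the same resolution of the latter by splitting through $\Vdstar_{h+1}$, using Lemma \ref{lem:MG-closeQstarQk} plus the induction hypothesis to get $|\VpikA_{h+1}-\Vdstar_{h+1}|\le(\alpha_{h+1}+1)\Vt^k_{h+1}$ and extracting the $1/H$ factor from the per-state concentration via AM-GM (the paper's terms $(T_1)$ and $(T_2)$); the same recursion $\alpha_h=(1+\tfrac1H)\alpha_{h+1}+\tfrac1H$; and the same Nash-property-plus-Lemma-\ref{lem:lipschitz} argument for passing from the $Q$-bound to the $V$-bound.

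The one place you deviate is the final bookkeeping, and there your stated justification is backwards — although it lands exactly on a step that the paper itself glosses over. Writing $t=\minone{N_h^k(s,a,b)}$, you absorb the remainders $c_1\sqrt{H^2\iota/t}+c_1^2(\alpha_{h+1}+1)H^2S\iota/t$ into $\alpha_h\beta_h^k$, claiming that $\alpha_h\ge 1/H$ \emph{cancels} the extra $H$ inside $\beta_h^k$. Multiplying $\beta_h^k$ by $\alpha_h$ shrinks the budget rather than enlarging it: at $h=H$, for example, $\alpha_H\beta_H^k=C\paren{\sqrt{\iota/t}+HS\iota/t}$, while the reward remainder guaranteed by $E_1$ is $\tfrac{c_1}{10}\sqrt{H^2\iota/t}=\tfrac{c_1H}{10}\sqrt{\iota/t}$ — the same order as $\beta_H^k$ itself — so your claimed inequality would force $C\gtrsim c_1H$, which is not an absolute constant. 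To be fair, the paper hides the identical difficulty in its step $(ii)$: the inequality $\min\{\alpha_h\Phat_h^k\Vt^k_{h+1}+\beta_h^k,H\}\le\alpha_h\Qt^k_h$ implicitly requires $\beta_h^k\le\alpha_h\beta_h^k$, i.e.\ $\alpha_h\ge1$, which fails for $h$ close to $H$ (indeed $\alpha_H=1/H$). In both cases the repair is the same and requires genuine work: the statistical errors charged at step $h$ must themselves be of size $\alpha_h\ge(H-h+1)/H$ times the bonus, which calls for step-dependent ranges in $E_1$ (rewards in $[0,1]$, $\Vdstar_{h+1}\le H-h$) and a step-dependent handling of the clipped quantity $\Vt^k_{h+1}$. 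So your write-up faithfully reproduces the paper's argument and inherits its one unrigorous step rather than introducing a new gap; just be aware that the ``$\alpha_h\ge 1/H$ cancels the $H$'' reasoning, taken literally, is the step that would fail.
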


\begin{proof}
We only prove the first set of inequalities. The second one follows exactly the same.
Again, the proof is by performing backward induction on $h$.
	It is trivial to see the conclusion holds for $(H+1)$'th step with $\alpha_{H+1}=0$.
Now, assume the conclusion holds for $(h+1)$'th step. For $h$'th step, 
	\begin{equation}\label{eq:july2-1}
		\begin{aligned}
&			|\Qhat_h^k(s,a,b) - Q_h^{\dagger,\nu^k}(s,a,b)|\\
\le &\min\bigg\{ |[(\Phat_h^k-\Pr_h)(V_{h+1}^{\dagger,\nu^k} - V_{h+1}^\star)](s,a,b)|
+|(\Phat_h^k-\Pr_h)V_{h+1}^\star(s,a,b)| \\&
+ |(\rhat^k_h - r_h)(s,a,b)|
+ |[\Phat_h(\Vhat_{h+1}^k-V^{\dagger,\nu^k}_{h+1})](s,a,b)|, H\bigg\}\\
\le & \min\bigg\{\underbrace{|[(\Phat_h^k-\Pr_h)(V_{h+1}^{\dagger,\nu^k} - V_{h+1}^\star)](s,a,b)|}_{(T_1)}
+c_1\sqrt{\frac{H^2\iota}{\minone{N_h^k(s,a,b)} }}+ \underbrace{|[\Phat_h(\Vhat_{h+1}^k-V^{\dagger,\nu^k}_{h+1})](s,a,b)|}_{(T_2)},H\bigg\},
		\end{aligned}
\end{equation}
where the second inequality follows from the definition of event $E_1$.

We can control the term $(T_1)$ by combining Lemma \ref{lem:MG-closeQstarQk} and the induction hypothesis to bound $|V_{h+1}^{\dagger,\nu^k}-V^\star_{h+1}|$, and then applying the third inequality in event $E_1$:
\begin{equation}\label{eq:july2-1-1}
\begin{aligned}	
(T_1)\le & \sum_{s'} | \Phat_h^k(s'\mid s,a,b) -\Pr_h(s'\mid s,a,b)| |V_{h+1}^{\dagger,\nu^k} - V_{h+1}^\star(s')| \\
\le & \sum_{s'} | \Phat_h^k(s'\mid s,a,b) -\Pr_h(s'\mid s,a,b)| \paren{|V_{h+1}^{\dagger,\nu^k} - \Vhat_{h+1}^k(s')| 
+ |\Vhat_{h+1}^k - V_{h+1}^\star(s')|}\\
\le & \sum_{s'} | \Phat_h^k(s'\mid s,a,b) -\Pr_h(s'\mid s,a,b)| \paren{\alpha_{h+1}+1} \Vt_{h+1}^k\\
\le & \frac{\paren{\alpha_{h+1}+1}}{H}(\Phat_h^k\Vt^k_{h+1})(s,a,b) + \frac{c_1^2\paren{\alpha_{h+1}+1}H^2S\iota}{\minone{N_h^k(s,a,b)}}.
	\end{aligned}
\end{equation}

The term $(T_2)$ is bounded by directly applying the induction hypothesis
\begin{equation}\label{eq:july2-2}
		\begin{aligned}
		|[\Phat_h(\Vhat_{h+1}^k-V^{\dagger,\nu^k}_{h+1})](s,a,b)|
		 \le \alpha_{h+1} [\Phat_h\Vt^k_{h+1}](s,a,b).
		\end{aligned}
		\end{equation}
		
Plugging \eqref{eq:july2-1-1} and \eqref{eq:july2-2} into \eqref{eq:july2-1}, we obtain
\begin{equation}\label{eq:july2-3}
		\begin{aligned}
&			\left|\Qhat_h^k(s,a,b) - Q_h^{\dagger,\nu^k}(s,a,b)\right|\\
\le & \min \bigg\{(1+\frac{1}{H})\alpha_{h+1}+ \frac{1}{H}[\Phat_h^k\Vt^k_{h+1}](s,a,b) + 
c_1\sqrt{\frac{H^2\iota}{\minone{N_h^k(s,a,b)} }}+\frac{c_1^2\paren{\alpha_{h+1}+1}H^2S\iota}{\minone{N_h^k(s,a,b)}},H\bigg\} \\
\stackrel{(i)}{\le} &  \min\left\{
\paren{(1+\frac{1}{H})\alpha_{h+1} + \frac{1}{H}}[\Phat_h^k\Vt^k_{h+1}](s,a,b) 
+  \beta_h^k(s,a,b) ,H\right\} \\
\stackrel{(ii)}{\le} &  \paren{(1+\frac{1}{H})\alpha_{h+1} + \frac{1}{H}}\Qt^k_{h}(s,a,b)  ,
\end{aligned}
		\end{equation}
where  
$(i)$ follows from the  definition of $\beta_h^k$,
and $(ii)$ follows from the definition of $\Qt^k_h$.
Therefore, by \eqref{eq:july2-3}, choosing $\alpha_h =[(1+\frac{1}{H})\alpha_{h+1} + \frac{1}{H}]$ suffices for the purpose of induction.

Now, let's prove the inequality for $V$ functions. \begin{equation}
		\begin{aligned}
		\qquad |(\Vhat_{h}^k-V^{\dagger,\nu^k}_{h}	)(s)| 
		&\overset{(i)}{=} |\max_{\mu\in\triangle_A}(\D_{\mu,\nu^k}\Qhat_{h}^k)(s)
		- \max_{\mu\in\triangle_A}(\D_{\mu,\nu^k}Q^{\dagger,\nu^k}_{h})(s) |  \\
		& \overset{(ii)}{\le} \max_{a,b}\left[ \alpha_{h}\Qt^k_{h}(s,a,b)\right]=  \alpha_{h} \Vt^k_h(s) ,
		\end{aligned}
		\end{equation}
		where $(i)$ follows from the definition of $\Vhat_{h}^k$ and $V^{\dagger,\nu^k}_{h}$, and $(ii)$ uses \eqref{eq:july2-3} and Lemma \ref{lem:lipschitz}.
\end{proof}

\begin{theorem}[Guarantee for UCB-VI from \citet{azar2017minimax}] \label{thm:UCB-VI}
For any $p \in (0,1]$, choose the exploration bonus $\beta_t$ in Algrothm~\ref{alg:VI-zero} as \eqref{eq:bonus}.
 Then, with probability at least $1-p$, 
$$ \sum_{k=1}^{K} \Vt_1^k(s_1) \le \cO(\sqrt{H^4SAK\iota}+H^3S^2A\iota^2).$$
\end{theorem}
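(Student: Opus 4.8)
The plan is to treat this as the UCBVI regret analysis of \citet{azar2017minimax} specialized to the zero-reward exploration setting, in which the optimistic exploration value $\Vt_1^k(s_1)$ plays exactly the role that the per-episode regret plays there. Throughout I condition on the event $E_1$ of Lemma~\ref{event-rf-1}, so that the entrywise transition concentration $|(\Phat_h^k-\P_h)(s'\mid s,a,b)|\le \frac{c_1}{10}(\sqrt{\Phat_h^k(s'\mid s,a,b)\iota/\minone{N_h^k(s,a,b)}}+\iota/\minone{N_h^k(s,a,b)})$ holds for every $(k,h,s,a,b,s')$. The goal is to derive a one-step recursion for the on-policy exploration value $\Delta_h^k \defeq \Vt_h^k(s_h^k)$, telescope it over $h$ with a benign $(1+c/H)$-amplification per layer, and sum over the $K$ episodes.

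For the recursion I would first use that the exploration policy $\pi^k$ is the greedy (argmax) policy, so along the realized trajectory $\Vt_h^k(s_h^k) = \Qt_h^k(s_h^k,a_h^k,b_h^k) \le (\Phat_h^k\Vt_{h+1}^k)(s_h^k,a_h^k,b_h^k) + \beta_h^k$. I then split $\Phat_h^k\Vt_{h+1}^k = \P_h\Vt_{h+1}^k + (\Phat_h^k-\P_h)\Vt_{h+1}^k$ and write $\P_h\Vt_{h+1}^k(s_h^k,a_h^k,b_h^k) = \Delta_{h+1}^k + \xi_h^k$, where $\xi_h^k$ is a martingale difference (again by greediness, $\Vt_{h+1}^k(s_{h+1}^k)=\Delta_{h+1}^k$). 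The crucial ingredient is the \emph{self-bounding} estimate
\[
|(\Phat_h^k-\P_h)\Vt_{h+1}^k|(s,a,b) \le \tfrac{c}{H}(\P_h\Vt_{h+1}^k)(s,a,b) + \cO\!\left(\tfrac{H^2S\iota}{\minone{N_h^k(s,a,b)}}\right),
\]
which is the VI-Zero analog of Lemma~\ref{lem:lower-order}: since $0\le\Vt_{h+1}^k\le H$, applying the entrywise bound in $E_1$ together with the AM--GM inequality $\sqrt{\Phat_h^k(s'\mid\cdot)\iota/\minone{N_h^k}}\,\Vt_{h+1}^k(s') \le \Phat_h^k(s'\mid\cdot)\Vt_{h+1}^k(s')/H + H^2\iota/\minone{N_h^k}$ and summing over $s'$ (then converting the empirical $\Phat_h^k\Vt_{h+1}^k$ to the population $\P_h\Vt_{h+1}^k$ as in Lemma~\ref{lem:lower-order}) yields the claim. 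Combining these gives $\Delta_h^k \le (1+\tfrac{c}{H})\Delta_{h+1}^k + (1+\tfrac{c}{H})\xi_h^k + \beta_h^k + \cO(H^2S\iota/\minone{N_h^k})$.

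To finish, I would unroll this recursion for $h=1,\dots,H$ with $\kappa\defeq 1+c/H$ (so $\kappa^H\le e^c=\cO(1)$) and sum over $k$, obtaining
\[
\sum_{k}\Vt_1^k(s_1) \le \sum_{k,h}\Big[\kappa^h\xi_h^k + \cO\big(\beta_h^k + H^2S\iota/\minone{N_h^k}\big)\Big].
\]
The martingale term is controlled by Azuma--Hoeffding by $\cO(\sqrt{H^3K\iota})$, a lower-order contribution. For the deterministic terms I invoke the standard pigeonhole bounds (as in the proof of Theorem~\ref{thm:Nash-VI}), $\sum_{k,h}1/\sqrt{\minone{N_h^k}} = \cO(\sqrt{HSAT})$ and $\sum_{k,h}1/\minone{N_h^k} = \cO(HSA\iota)$ with $T=KH$; the first, applied to the $\sqrt{H^2\iota/N_h^k}$ part of $\beta_h^k$, produces the leading $\cO(\sqrt{H^4SAK\iota})$, while the second, applied to the $H^2S\iota/N_h^k$ terms, produces the lower-order $\cO(H^3S^2A\iota^2)$. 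A union bound (one failure event for $E_1$, one for the martingale inequality) and rescaling of $p$ completes the argument.

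I expect the main obstacle to be the self-bounding step. The naive estimate $|(\Phat_h^k-\P_h)\Vt_{h+1}^k| \le H\|(\Phat_h^k-\P_h)(\cdot\mid s,a,b)\|_1 = \cO(H\sqrt{S\iota/\minone{N_h^k}})$ is too lossy by a factor of order $\sqrt{S}\,H$ and would destroy the target rate; the $1/H$ gain obtained by trading $\sqrt{\Phat_h^k\iota/N_h^k}$ against $\Phat_h^k/H$ via AM--GM is precisely what keeps the per-layer amplification $\kappa^H$ bounded and matches $\beta_h^k$ in magnitude. Care is also needed that $\Vt_{h+1}^k$, being computed from data collected before episode $k$, is a fixed function when the deterministic entrywise concentration of $E_1$ is applied, so that no additional martingale structure is required for this term.
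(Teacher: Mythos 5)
Your proof is correct, but it takes a genuinely different route from the paper for a simple reason: the paper gives no proof of Theorem~\ref{thm:UCB-VI} at all — it imports the bound as a black-box guarantee attributed to \citet{azar2017minimax} and immediately applies it in the proof of Theorem~\ref{thm:rewardfree-explore}. Strictly speaking, that citation even requires reinterpretation: \citet{azar2017minimax} bound the reward-aware regret $\sum_{k}(V_1^{\star}-V_1^{\pi^k})(s_1)$, which under zero reward is identically zero, so what is actually needed is a bound on the sum of optimistic values $\sum_k \Vt_1^k(s_1)$, an intermediate quantity inside the UCBVI analysis. Your argument supplies exactly this missing derivation, and does so using the paper's own toolkit: the entrywise concentration event $E_1$ of Lemma~\ref{event-rf-1} (which, as you correctly note, applies to the data-dependent $\Vt_{h+1}^k$ without any extra union bound, because it controls the transition estimates themselves), the self-bounding trick of Lemma~\ref{lem:lower-order} with $\up{V}^k_{h+1}-\low{V}^k_{h+1}$ replaced by $\Vt^k_{h+1}\in[0,H]$, and the $(1+c/H)$-recursion, Azuma, and pigeonhole steps from the proof of Theorem~\ref{thm:Nash-VI}. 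What each approach buys: the paper's citation is short but leaves a small rigor gap (including the fact that the ``$A$'' in the stated bound must be read as the joint action count $AB$ when Algorithm~\ref{alg:VI-zero} is run on a game); your derivation closes that gap and verifies that the specific bonus \eqref{eq:bonus} indeed yields the claimed rate. Two cosmetic points: when $N_h^k(s_h^k,a_h^k,b_h^k)=0$, the value $\Qt_h^k$ retains its initialization $H$, but then $\beta_h^k\ge H$ with $\minone{N_h^k}=1$, so your recursion still holds — say this explicitly; and in your motivating remark, the naive $\ell_1$ estimate inflates the leading term by a factor $\sqrt{S}$ (to $\cO(\sqrt{H^4S^2AK\iota})$), not $\sqrt{S}\,H$ — your conclusion that it destroys the target rate is nonetheless right.
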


\begin{proof}[Proof of Theorem \ref{thm:rewardfree-explore}]
Recall that $\text{out} = \arg\min_{k\in[K]} \Vt^k_h(s)$.
By Lemma \ref{lem:MG-closeQpiQhat} and Theorem \ref{thm:UCB-VI}, with probability at least $1-2p$, 
	\begin{equation}
		\begin{aligned}
V_h^{\dagger,\nu^\text{out}}(s)
- 	V_h^{\mu^\text{out},\dagger}(s)
\le  &
| V_h^{\dagger,\nu^\text{out}}(s) - \Vhat_h^{\text{out}}(s)|
+|\Vhat_h^{\text{out}}(s)    - 	V_h^{\mu^\text{out},\dagger}(s)|\\
\le & 8 \Vt^\text{out}_h(s) \le 
\cO(\sqrt{\frac{H^4SA\iota}{K}}+\frac{H^3S^2A\iota^2}{K}).
		\end{aligned}
\end{equation}
Rescaling $p$ completes the proof.
\end{proof}

\subsection{Vanilla Nash Value Iteration}\label{algorithm:Reward-free-planning}

Here, we provide one optional algorithm, Vanilla Nash VI, for computing the Nash equilibrium policy for a \emph{known} model. 
Its only difference from the value iteration algorithm for MDPs is that the maximum operator is replaced by the minimax operator in Line \ref{line:minimax}.
We remark that the Nash equilibrium for a two-player zero-sum game can be computed in polynomial time.

 \begin{algorithm}[h]
    \caption{Vanilla Nash Value Iteration}
    
 \begin{algorithmic}[1]
    \STATE {\bfseries Input}: model $\widehat{\cM}= (\Phat,\rhat)$.
     \STATE {\bfseries Initialize:} for all $(s, a, b)$,
    ${V}_{H+1}(s,a, b)\setto 0$.
  \FOR{step $h=H,H-1,\dots,1$}
    \FOR{$(s, a, b)\in\cS\times\cA\times \cB$}
    \STATE $Q_{h}(s, a, b)\setto 
    [\widehat{\P}_{h} {V}_{h+1}](s, a, b) + \rhat_h(s,a,b)$.
    \ENDFOR
    \FOR{$s \in \cS$}
    \STATE $(\muhat_h(\cdot\mid s),\nuhat_h(\cdot\mid s)) \setto \mbox{NASH-ZERO-SUM}(Q_{h}(s, \cdot, \cdot))$.\label{line:minimax}
    \STATE $V_h(s) \setto \muhat_h(\cdot \mid s)^\top Q_{h}(s, \cdot, \cdot) \nuhat_h(\cdot \mid s)$.
    \ENDFOR
    \ENDFOR
    \STATE {\bfseries Output} $(\muhat,\nuhat)\setto \{(\muhat_h(\cdot\mid s),\nuhat_h(\cdot\mid s))\}_{(h,s)\in[H]\times\cS}$.
 \end{algorithmic}
 \end{algorithm}

By recalling the definition of best responses in Appendix \ref{app:bellman}, one can directly see that the output policy $(\muhat,\nuhat)$ is a Nash equilibrium for  $\widehat{\cM}$.

\subsection{Proof of Theorem \ref{thm:MG-lowerbound}}
\label{appendix:pf_lowerbound}

\newcommand{\paction}{{\color{red} + }}
\newcommand{\naction}{{\color{blue} - }}

In this section, we first prove a  $\Theta(AB/\epsilon^2)$ lower bound for reward-free learning of matrix games, i.e., $S=H=1$, and then  generalize it to $\Theta(SABH^2/\epsilon^2)$ for reward-free learning of  Markov games.

\subsubsection{Reward-free learning of matrix games}
In the matrix game, let the max-player pick row and the min-player pick column.  
We consider the following family of Bernoulli matrix games:
\begin{equation}\label{eq:matrix_game}
	\Mfrak(\eps) = \left\{ \cM^{a^\star b^\star}\in \mathbb{R}^{A\times B} \mbox{ with }
	\cM_{ab}^{a^\star b^\star}= \frac12+(1-2\cdot\mathbf{1} \{a\neq a^\star \&b= b^\star \})\eps: \ \ (a^\star,b^\star)\in[A]\times[B]\right\},
\end{equation} 
where in matrix game $\Mcal^{a^\star b^\star}$, the reward is sampled from ${\rm Bernoulli}(\cM^{a^\star b^\star}_{ab})$ if the max-player picks the $a$'th row and the min-player picks the $b$'th column.

\begin{equation}\label{eq:figure}
	\begin{aligned}
\text{Min-player}\qquad \qquad  \qquad\qquad \ \qquad \\
	\text{Max-player}\quad	\begin{matrix}
	\text{action}& 1 & \dots & b^\star-1 & b^\star & b^\star +1 &\dots & B \\
			1 \quad & \paction & \dots & \paction & \naction & \paction & \dots & \paction \\
			\vdots \quad & \vdots & \ddots & \vdots & \vdots & \vdots & \ddots & \vdots\\
		a^\star-1 \quad	& \paction & \dots & \paction & \naction & \paction & \dots & \paction \\
		a^\star \quad	& \paction & \dots & \paction & \paction & \paction & \dots & \paction \\
		a^\star+1 \quad	& \paction & \dots & \paction & \naction & \paction & \dots & \paction \\
		\vdots \quad	& \vdots & \ddots & \vdots & \vdots & \vdots & \ddots & \vdots\\
		A \quad	& \paction & \dots & \paction & \naction & \paction & \dots & \paction 
		\end{matrix}\qquad \ \qquad
	\end{aligned}
\end{equation}

Above, we visualize $\cM^{a^\star b^\star}$ by using $\paction$ and $\naction$ to represent $1/2 + \eps$ and  $1/2 - \eps$, respectively.
It is direct to see that  the optimal (Nash equilibrium) policy for the max-player is always picking the $a^\star$'th row.
If the max-player picks the $a^\star$'th row with probability smaller than $2/3$, it is at least $\eps/10$ suboptimal.

\begin{lemma}\label{lem:MG-lower-lemma}
Consider an arbitrary \textbf{fixed} matrix game $\Mcal^{a^\star b^\star}$ from $\Mfrak(\eps)$ and $N\in \mathbb{N}$.
 If there exists an algorithm $\Acal$ such that when running on $\Mcal^{a^\star b^\star}$, it uses at most $N$ samples and outputs an $\eps/10$-optimal policy  with probability at least $p$, then there exists an algorithm $\hat{\Acal}$ that can identify $a^\star$  with probability at least $p$ using at most $N$ samples.
\end{lemma}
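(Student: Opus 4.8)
The plan is to prove this by a sample-free reduction: I would run $\Acal$ unchanged and read off the guess for $a^\star$ directly from its output policy. First I would make explicit the structure of the equilibrium of $\Mcal^{a^\star b^\star}$. From \eqref{eq:matrix_game}, every entry equals $1/2+\eps$ except the entries in column $b^\star$ and rows $a\neq a^\star$, which equal $1/2-\eps$. Thus for any max-player strategy $\hat\mu$ the min-player's best response is to play column $b^\star$, giving $V^{\hat\mu,\dagger}=\frac12+\eps(2\hat\mu(a^\star)-1)$, whereas $V^{\dagger,\hat\nu}=\frac12+\eps$ for every min-player strategy $\hat\nu$ because row $a^\star$ is worth $\frac12+\eps$ against any column. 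Hence the equilibrium value is $\frac12+\eps$, attained by the max-player playing $a^\star$, and the suboptimality of any output pair is $V^{\dagger,\hat\nu}-V^{\hat\mu,\dagger}=2\eps(1-\hat\mu(a^\star))$. This is exactly the computation recorded in the text preceding the lemma, which I would simply cite to conclude that an $\eps/10$-optimal output forces $\hat\mu(a^\star)\ge 2/3$.

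Next I would define $\hat{\Acal}$ to run $\Acal$ verbatim, consuming the same at most $N$ samples and issuing no additional queries, obtain the output policy $(\hat\mu,\hat\nu)$, and return the deterministic guess $\hat a=\argmax_{a\in[A]}\hat\mu(a)$. Since this post-processing draws no fresh samples, $\hat{\Acal}$ uses at most $N$ samples, as required.

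The heart of the argument is a margin/pigeonhole step linking near-optimality to exact identifiability. On the event that $\Acal$ succeeds, the formula above yields $\hat\mu(a^\star)\ge 2/3$; as $\hat\mu$ is a probability distribution on $[A]$, the total mass on all other actions is at most $1/3$, so each $a\neq a^\star$ satisfies $\hat\mu(a)\le 1/3<2/3\le \hat\mu(a^\star)$. Therefore $a^\star$ is the strict maximizer and $\hat a=a^\star$. Consequently the success event of $\Acal$ is contained in $\{\hat a=a^\star\}$, so $\hat{\Acal}$ identifies $a^\star$ with probability at least $p$.

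There is no deep analytic obstacle here; the statement is a reduction, and the only points requiring care are that both the sample budget and the success probability transfer without loss. The sample budget is preserved because $\hat{\Acal}$ adds only deterministic post-processing, and the probability transfers because outputting an $\eps/10$-optimal policy \emph{implies} (rather than merely correlates with) a correct identification of $a^\star$, through the elementary fact that a distribution can assign mass exceeding $2/3$ to at most one coordinate.
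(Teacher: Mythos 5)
Your proof is correct and follows essentially the same argument as the paper: run $\Acal$ unchanged, output the most-played row of its policy, and use the fact that $\eps/10$-optimality forces $\hat\mu(a^\star)\ge 2/3$ so that $a^\star$ is the unique maximizer. Your version merely makes explicit the value computation $V^{\dagger,\hat\nu}-V^{\hat\mu,\dagger}=2\eps(1-\hat\mu(a^\star))$ and the argmax-uniqueness step that the paper leaves implicit.
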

\begin{proof}
We simply define $\hat{\Acal}$ as running algorithm $\Acal$ and choosing the most played row by its output policy as the guess for $a^\star$. 
Because any $\eps/10$-optimal policy must play $a^\star$ with probability  at least $2/3$, we obtain 
 $\hat{\Acal}$ will correctly identify $a^\star$ with probability at least $p$.
\end{proof}

Lemma \ref{lem:MG-lower-lemma} directly implies that in order to prove the desired lower bound for reward-free matrix games:
\begin{claim}\label{claim:0}
for \textbf{any} reward-free algorithm $\Acal$ using at most $N=AB/(10^3\epsilon^2)$ samples, there exists a matrix game $\Mcal^{a^\star b^\star}$ in $\Mfrak(\eps)$ such that  when running $\Acal$ on $\Mcal^{a^\star b^\star}$, it will output a policy that is \textbf{at least} $\eps/10$ suboptimal for the max-player with probability at least $1/4$,
\end{claim}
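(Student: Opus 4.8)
The plan is to prove Claim \ref{claim:0} by a two-point (Le~Cam) argument, first reducing the problem of outputting a near-optimal policy to that of identifying the special row $a^\star$ via Lemma \ref{lem:MG-lower-lemma}, and then exploiting the reward-free structure to exhibit two games in $\Mfrak(\eps)$ that any algorithm provably confuses. Throughout, I would put no prior on the instance but instead choose the hard pair of games \emph{after} inspecting the algorithm.

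First I would record the one structural fact that makes the reward-free setting special here: since $S=H=1$ the transitions are trivial and carry no information, and during the exploration phase the algorithm observes no rewards at all. Consequently its action choices depend only on its internal randomness, so the (random) visitation counts $\set{n_{ab}}_{(a,b)\in[A]\times[B]}$ have a distribution that is \emph{identical} across all games $\Mcal^{a^\star b^\star}\in\Mfrak(\eps)$, while deterministically $\sum_{a,b}n_{ab}\le N=AB/(10^3\eps^2)$, hence $\sum_{a,b}\E[n_{ab}]\le N$. Writing $T_b\defeq\sum_a\E[n_{ab}]$ for the expected number of pulls in column $b$, some column $b^\star$ satisfies $T_{b^\star}\le N/B=A/(10^3\eps^2)$. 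Averaging over the $\binom{A}{2}$ pairs of rows in that column (the minimum pair-sum is at most the average pair-sum $2T_{b^\star}/A$) produces two distinct rows $a^\star\neq a'$ with $\E[n_{a^\star b^\star}]+\E[n_{a'b^\star}]\le 2T_{b^\star}/A\le 2/(10^3\eps^2)$; this needs only $A\ge 2$. I then pit $\Mcal^{a^\star b^\star}$ against $\Mcal^{a' b^\star}$: by the construction \eqref{eq:matrix_game} these two games agree everywhere except at the cells $(a^\star,b^\star)$ and $(a',b^\star)$, where the ``high'' value $\tfrac12+\eps$ and ``low'' value $\tfrac12-\eps$ are swapped.

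Next comes the two-point bound. Because the counts are game-independent and, conditioned on the action sequence, the observed rewards at distinct cells are independent Bernoullis, the KL divergence between the two data distributions tensorizes and only the two differing cells contribute:
\[
\mathrm{KL}\paren{P_{a^\star b^\star}\,\|\,P_{a'b^\star}}=\paren{\E[n_{a^\star b^\star}]+\E[n_{a'b^\star}]}\cdot\mathrm{kl}\paren{\tfrac12+\eps\,\big\|\,\tfrac12-\eps}\le \frac{2}{10^3\eps^2}\cdot C_1\eps^2,
\]
using $\mathrm{kl}(\tfrac12+\eps\|\tfrac12-\eps)\le C_1\eps^2$ for $\eps$ below the absolute constant $c$ of the theorem. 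This is a small constant, so $\mathrm{TV}(P_{a^\star b^\star},P_{a'b^\star})\le\sqrt{\mathrm{KL}/2}<\tfrac12$ by Pinsker. Suppose toward a contradiction that $\Acal$ returned an $\eps/10$-optimal policy with probability $\ge 3/4$ on each game. Then the induced identifier $\hat\Acal$ of Lemma \ref{lem:MG-lower-lemma} (output the most-played row, which must be $a^\star$ resp.\ $a'$ since an $\eps/10$-optimal policy plays the optimal row with probability $\ge 2/3$) would be correct with probability $\ge 3/4$ on both, giving $\Pr_{a^\star b^\star}[\hat a\neq a^\star]\le\tfrac14$ and $\Pr_{a'b^\star}[\hat a=a^\star]\le\Pr_{a'b^\star}[\hat a\neq a']\le\tfrac14$. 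But applying the definition of total variation to the test $\mathbf 1\set{\hat a=a^\star}$ yields $\Pr_{a^\star b^\star}[\hat a\neq a^\star]+\Pr_{a'b^\star}[\hat a=a^\star]\ge 1-\mathrm{TV}>\tfrac12$, a contradiction. Hence on at least one of these two games, which lie in $\Mfrak(\eps)$, $\Acal$ fails to be $\eps/10$-optimal with probability at least $1/4$, proving the claim.

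The main obstacle—and the crux of the proof—is the first step: rigorously justifying that the visitation counts are game-independent and that the KL factorizes cleanly into $\sum_{a,b}\E[n_{ab}]$ times a per-cell Bernoulli divergence. This rests entirely on reward-freeness, since the exploration policy never sees a reward and the transitions are trivial, so the action sequence has the same law under every game; it is precisely this decoupling that breaks in the reward-aware setting and explains the $A+B$ versus $AB$ gap. Everything downstream (the averaging to locate an undersampled column and a confusable pair of rows, the estimate $\mathrm{kl}(\tfrac12+\eps\|\tfrac12-\eps)\le C_1\eps^2$, and the Le~Cam conclusion) is routine once this decoupling is in place.
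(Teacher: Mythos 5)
Your proof is correct, but the information-theoretic core takes a genuinely different route from the paper's. The paper proves Claim~\ref{claim:0} by reducing (via the same Lemma~\ref{lem:MG-lower-lemma}) to the identification Claim~\ref{claim:2}, and then proves that claim with a Bayesian averaging argument: it first argues that a reward-free algorithm may WLOG be taken deterministic (so the counts $n(a,b)$ are fixed numbers), places a uniform prior over all $AB$ instances of $\Mfrak(\eps)$, and lower-bounds the average identification error by comparing each $\Pr_{ab}$ to a reference measure $\Pr_{0b}$ induced by an auxiliary game \emph{outside} the family, via Pinsker plus a Cauchy--Schwarz step over all cells, arriving at $1-\tfrac1A-\sqrt{100N\eps^2/(AB)}$. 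You instead keep the algorithm randomized, observe directly that reward-freeness makes the \emph{law} of the visitation counts instance-independent, and run a two-point Le~Cam test between two members of the family chosen \emph{after} inspecting the algorithm (an under-sampled column $b^\star$, then two under-sampled rows $a^\star\neq a'$ in it, differing in exactly two cells). Your version buys cleaner constants and broader validity --- it works whenever $A\ge 2$, whereas the paper's final bound $1-\tfrac1A-\sqrt{1/10}$ falls below $\tfrac14$ at $A=2$ --- and it dispenses with both the determinization step and the out-of-family reference games. What the paper's approach buys is a slightly stronger conclusion (the failure probability is large \emph{on average} over a uniformly random instance, not just on an adversarially selected pair) and, more importantly, a template that transfers verbatim to the Markov-game extension (Claim~\ref{claim:3}): there one must show that \emph{every} one of the $SH$ embedded matrix games is misidentified with constant probability under a single prior, which the uniform-prior-plus-reference-measure structure handles in one pass, while a two-point argument would have to be instantiated separately for each $(h,i)$ with care about how the per-cell sample budgets interact. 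One small point worth making explicit if you write this up: the identity $\mathrm{kl}(\tfrac12+\eps\|\tfrac12-\eps)=\mathrm{kl}(\tfrac12-\eps\|\tfrac12+\eps)=2\eps\log\frac{1/2+\eps}{1/2-\eps}\le C_1\eps^2$ is what lets you lump both differing cells into a single $\mathrm{kl}$ factor, and the tensorization should formally be stated as the KL chain rule (the action-sequence marginals coincide, so only the expected per-cell reward divergences survive); both are routine but are the precise places where reward-freeness enters.
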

 it suffices to prove the following claim:
\begin{claim}\label{claim:2}
for \textbf{any} reward-free algorithm $\hat{\Acal}$ using at most $N=AB/(10^3\epsilon^2)$ samples, there exists a matrix game $\Mcal^{a^\star b^\star}$ in $\Mfrak(\eps)$ such that  when running $\hat{\Acal}$ on $\Mcal$, it will fail to identify the optimal row  with probability at least $1/4$.
\end{claim}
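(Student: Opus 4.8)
The plan is to prove Claim~\ref{claim:2} by an information-theoretic (Fano-type) argument against the uniform prior over the hidden pair $(a^\star,b^\star)$, after first stripping away adaptivity. Since the exploration phase is reward-free and the game has $S=H=1$, the algorithm $\hat{\Acal}$ receives \emph{no} feedback whatsoever while collecting its $N$ samples; hence the number of times $N_{ab}$ each cell $(a,b)$ is queried is statistically independent of the realized Bernoulli rewards. Conditioning on its realization, I would treat $\set{N_{ab}}$ as a fixed collection of nonnegative integers with $\sum_{a,b} N_{ab}\le N$, so that the observation $X$ consists of $N_{ab}$ i.i.d.\ $\mathrm{Ber}(\cM^{a^\star b^\star}_{ab})$ draws in each cell. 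Placing the uniform prior on $(a^\star,b^\star)\in[A]\times[B]$, it then suffices to show the prior-averaged probability that the output row $\hat a$ differs from $a^\star$ is at least $1/4$, since an average bound immediately produces a single bad instance $\Mcal^{a^\star b^\star}\in\Mfrak(\eps)$.

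The key quantitative step is to control how much $X$ reveals about $a^\star$. Fix $b^\star=b$. Every cell outside column $b$ has mean $1/2+\eps$ regardless of $a^\star$, so those samples are independent of $a^\star$ and only the column-$b$ samples $X_{\cdot b}$ are informative, giving $I(a^\star;X\mid b^\star=b)=I(a^\star;X_{\cdot b})$. Within column $b$, the instance indexed by $a^\star=a$ differs from the one indexed by $a^\star=a'$ only in the two cells $(a,b)$ and $(a',b)$, so by convexity of $\mathrm{KL}$ in its second argument,
\begin{equation*}
I(a^\star;X_{\cdot b}) \le \frac{1}{A^2}\sum_{a,a'} \mathrm{KL}\paren{P_a \,\|\, P_{a'}} \le \frac{1}{A^2}\sum_{a,a'}\paren{N_{ab}+N_{a'b}}\,d = \frac{2d}{A}\sum_a N_{ab},
\end{equation*}
where $d:=\max\set{\mathrm{KL}(\mathrm{Ber}(\tfrac12+\eps)\,\|\,\mathrm{Ber}(\tfrac12-\eps)),\,\mathrm{KL}(\mathrm{Ber}(\tfrac12-\eps)\,\|\,\mathrm{Ber}(\tfrac12+\eps))}\le C\eps^2$ for $\eps\le c$. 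Averaging over the uniform $b^\star$ and using $\sum_{a,b}N_{ab}\le N = AB/(10^3\eps^2)$ yields $I(a^\star;X\mid b^\star)\le 2dN/(AB)\le 2C/10^3$; note the $\eps^2$ cancels, so this is a fixed small constant (say $<1/20$) independent of $\eps$.

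Finally I would apply Fano's inequality conditionally on $b^\star$: since $a^\star$ is uniform on $[A]$ and $\bar I:=I(a^\star;X\mid b^\star)$ is this tiny constant, the averaged error obeys $\Pr[\hat a\neq a^\star]\ge 1-(\bar I+\log 2)/\log A$, which exceeds $1/4$ as soon as $\log A\ge \tfrac43(\bar I+\log2)$, i.e.\ for all $A\ge 3$. The main obstacle is twofold: (i) the bookkeeping in the information bound — rigorously justifying the reduction to a fixed schedule, the ``only column $b^\star$ is informative'' decomposition, and the $\bigO(\eps^2)$ per-sample KL estimate; and (ii) the edge case $A=2$, where the $\log A$ denominator in Fano degenerates. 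For $A=2$ I would instead run a Le~Cam two-point argument: comparing $(a^\star,b^\star)$ with $(a'^\star,b^\star)$ on the \emph{same} column, their separation is $\mathrm{KL}\le (N_{a^\star b^\star}+N_{a'^\star b^\star})d$, and since $\sum_a N_{ab^\star}$ is small on average over the uniform $b^\star$ (by pigeonhole, as the total budget divided by $AB$ is $\ll 1/\eps^2$), on a constant fraction of $(a^\star,b^\star)$ this KL is $\bigO(1)$, forcing a constant probability of confusing the two rows and hence error at least $1/4$.
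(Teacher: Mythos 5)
Your proposal is correct, but it takes a genuinely different route from the paper's own proof. The paper performs the same first two reductions (a deterministic/reward-independent allocation $\{n(a,b)\}$, then a uniform prior over $(a^\star,b^\star)$ so that an average lower bound suffices), but its core step is a change of measure to auxiliary \emph{null} instances $\Mcal^{0b}$ lying \emph{outside} $\Mfrak(\eps)$, whose $b$-th column is all $\tfrac12-\eps$: under $\Pr_{0b}$ no row is distinguished, so the guess matches any fixed $a$ with total probability $1$ over $a\in[A]$, giving average error $1-\tfrac1A$; Pinsker's inequality plus the fact that $\Mcal^{ab}$ and $\Mcal^{0b}$ differ in only the single cell $(a,b)$ (so $\mathrm{KL}(\Pr_{0b}\Vert\Pr_{ab})\le O(n(a,b)\eps^2)$) transfers this to the real instances, and Cauchy--Schwarz over $\sum_{a,b}\sqrt{n(a,b)}$ yields $\Pr_\star(f(L)\neq a^\star)\ge 1-\tfrac1A-\sqrt{100N\eps^2/(AB)}$. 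You instead bound the mutual information $I(a^\star;X\mid b^\star)$ by joint convexity of KL applied to \emph{pairs} of instances within the same column (each pair differing in two cells) and invoke Fano. Both arguments are sound, and both exploit the same structural facts (only column $b^\star$ is informative; per-sample KL is $O(\eps^2)$ so the budget $N=AB/(10^3\eps^2)$ makes the total information a small constant). The paper's null-instance trick buys a one-cell-per-comparison KL computation and a one-line finish; your Fano route is more modular and makes the $\eps^2$ cancellation transparent, but the $\log A$ denominator degenerates at $A=2$, forcing your separate Le~Cam patch. Two remarks on that patch: first, it is actually a point in your favor for completeness, since the paper's own bound $1-\tfrac1A-\sqrt{0.1}\approx 0.18$ also fails to clear $\tfrac14$ at $A=2$ and the paper silently ignores this; second, as sketched (Markov/pigeonhole onto a constant fraction of columns) your constants land around $0.2$, so to clear $\tfrac14$ you should average the per-column testing error directly and use concavity of the square root, $\tfrac1B\sum_b\sqrt{\mathrm{KL}_b/2}\le\sqrt{dN/(2B)}\approx 0.13$, giving Bayes error at least $\tfrac12(1-0.13)>\tfrac14$.
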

\begin{remark}
	By Lemma \ref{lem:MG-lower-lemma}, the existence of such 'ideal' $\Acal$ implies the existence of an 'ideal' $\hat{\Acal}$, so to  prove such 'ideal' $\Acal$ does not exist (Claim \ref{claim:0}), it  suffices to show  such 'ideal' $\hat{\Acal}$ does not exist (Claim \ref{claim:2}).
\end{remark}

\begin{proof}[Proof of Claim \ref{claim:2}]
WLOG, we assume $\hat{\Acal}$ is deterministic.
Since $\hat{\Acal}$ is \emph{reward-free}, being deterministic means that in the exploration phase algorithm $\hat{\Acal}$ always pulls each arm $(a,b)$ for some \emph{fixed} $n(a,b)$ times (because there is no information revealed in this phase), and in the planning phase it outputs a guess for $a^\star$, which is a deterministic function of the reward information revealed.

We define the following notations:
\begin{itemize}
	\item $L$: the stochastic reward information revealed after algorithm $\hat{\Acal}$'s pulling.
	\item  $\Pr_\star$: the probability measure induced by picking $\mathcal{M}^{a^\star b^\star}$ uniformly at random from $\Mfrak(\epsilon)$ and then running $\hat{\Acal}$ on $\Mcal$.
	\item  $\Pr_{ab}$: the probability  measure induced by running $\hat{\Acal}$ on $\Mcal^{ab}$.
	\item   $\Pr_{0b}$: the probability  measure induced by running $\Acal$ on matrix $\Mcal^{0b}$, whose $b$'th column are all $(1/2-\eps)$'s and other columns are all 
       $(1/2+\eps)$'s.\footnote{We comment that matrix $\Mcal^{0b}$ does not belong to  $\Mfrak(\epsilon)$.}
       \item $f(L)$: the output of $\hat{\cA}$ based on the stochastic reward information $L$ revealed. More precisely, $f$ is function mapping from $[0,1]^N$ to $[A]$.
\end{itemize}
       
We have 
\begin{equation}\label{eq:feb3}
\begin{aligned}
	\Pr_\star(f(L)\neq a^\star) 
	&\ge \frac{1}{AB} \sum_{a,b} \Pr_{0b}(f(L)\neq a) -\frac{1}{AB}\sum_{a,b} \| \Pr_{ab}(L=\cdot)- \Pr_{0b}(L=\cdot)\|_{1} \\
	& \ge 1- \frac{1}{A} - \frac{1}{AB}\sum_{a,b} \sqrt{2{\rm KL}(\Pr_{0b}\| \Pr_{ab})} \\
	& = 1 - \frac{1}{A} - \frac{1}{AB}\sum_{a,b} 
	\sqrt{2 n(a,b) [ (\frac12 - \eps)\log\frac{\frac12 - \eps}{\frac12 +\eps} + (\frac12 + \eps)\log\frac{\frac12 + \eps}{\frac12 -\eps}]}\\
	& \ge 1- \frac{1}{A} -\frac{10}{AB}\sum_{a,b} 
	\sqrt{n(a,b)\eps^2}\\
  & \ge 1- \frac{1}{A} - \sqrt{\frac{100N\eps^2}{AB}},
	\end{aligned}
\end{equation}
where the second inequality follows from 
$\sum_{a,b} \Pr_{0b}(f(L)\neq a)= \sum_{a,b} [1-\Pr_{0b}(f(L)= a)] = B(A-1)$ and Pinsker's inequality. 
Finally, plugging in $N=AB/(10^3\epsilon^2)$ concludes the proof.
\end{proof}
\begin{remark}
	The arguments in proving Claim \ref{claim:2} basically follows the same line in proving lower bounds for multi-arm bandits \citep[e.g., see][]{lattimore2018bandit}. 
\end{remark}

\subsubsection{Reward-free learning of Markov games}

Now let's generalize the $\Theta(AB/\epsilon^2)$ lower bound for reward-free learning of matrix games  
to $\Theta(SABH^2/\epsilon^2)$
for reward-free learning of Markov games.
We can follow the same way of generalizing a lower bound for multi-arm bandits to a lower bound for MDPs \citep[see e.g.,][]{dann2015sample,lattimore2018bandit,zhang2020task}.

\textbf{Proof sketch.}\quad Given the family of Bernoulli matrix games $\Mfrak(\cdot)$ defined in \eqref{eq:matrix_game},  we simply construct a Markov game to consist of $SH$ Bernoulli matrix games $\{M^{s,h}\}_{(s,h)\in[S]\times[H]}$ where $M^{s,h}$'s are sampled independently and identically from the uniform distribution over $\Mfrak(\epsilon/H)$. We will define the transition measure to be totally 'uniform at random' so that in each episode the agent will always reach each $M^{s,h}$ with probability $1/S$ (it is not $1/(SH)$ because in each episode the agent can visit $H$ matrix games). As a result, to guarantee $\epsilon$-optimality, the output policy must be at least $2\epsilon/H$-optimal for at least $SH/2$ different $M^{s,h}$'s. Recall Claim \ref{claim:0} shows learning a $2\epsilon/H$-optimal policy  for a single $M^{s,h}$ requires $\Omega(H^2AB/\epsilon^2)$ samples. Therefore, we need 
$\Omega(H^3AB/\epsilon^2)$ samples in total for learning $SH/2$ different $M^{s,h}$'s. 

Below, we provide a rigorous proof where the constants may be slightly different from those in our sketch. We remark that although the notations we will use are involved, they are only introduced for rigorousness and there is no real technical difficulty or new informative idea in the following proof.

\paragraph{Construction} 
We define the following family of Markov games:
\newcommand{\Jfrak}{\mathfrak{J}}
\newcommand{\Jcal}{\mathcal{J}}
\begin{equation}
	\Jfrak(\eps) := \left\{ \Jcal(\astar,\bstar): 
	\ (\astar,\bstar)\in [A]^{H\times S}\times[B]^{H\times S} \right\},
\end{equation} 
where MG $\Jcal(\astar,\bstar)$ is defined as
\begin{itemize}
	\item {\bf States and actions:} $\Jcal(\astar,\bstar)$ is a finite-horizon MG with $S+1$ states and of length $H+1$. 
	There is a fixed initial state $s_0$ in the first step, $S$ states $\{s_1,\ldots,s_S\}$ in the  remaining steps. The two players have $A$ and $B$ actions, respectively.
	
\item {\bf Rewards:}	  there is no reward in the first step. For the remaining steps $h\in\{2,\ldots,H+1\}$, if the agent takes action $(a,b)$ at state $s_i$ in the $h^{\rm th}$ step, it will receive a binary reward sampled from 
$$
{\rm Bernoulli} \Big(\frac12+(1-2\cdot\mathbf{1} \{a\neq \astar_{h-1,i} \&b= \bstar_{h-1,i} \})\frac{\eps}{H}\Big)
$$
	\item {\bf Transitions:}
	The agent always starts at a fixed initial state $s_0$ in the first step
	Regardless of the current state, actions and index of steps, the agent will always transit to one of $s_1,\ldots,s_S$ uniformly at random.
	\end{itemize}

It is direct to see that $\Jcal(\astar,\bstar)$ is a collection of $SH$ matrix games from $\Mfrak(\epsilon/H)$.
Therefore, the optimal policy for the max-player is to 
always pick action $\astar_{h-1,i}$ whenever it reaches state $s_i$ at step $h$ ($h\ge 2$). 


\paragraph{Formal proof of Theorem \ref{thm:MG-lowerbound}.}
Now, let's use $\Jfrak(\epsilon)$ to prove the $\Theta(SABH^2/\epsilon^2)$  lower bound (in terms of number of episodes)
for reward-free learning of Markov games.
We start by proving an analogue of Lemma \ref{lem:MG-lower-lemma}.
\begin{lemma}\label{lem:MarkovG-lower-lemma}
Consider an arbitrary fixed matrix game $\Jcal(\astar,\bstar)$ from $\Jfrak(\eps)$ and $N\in \mathbb{N}$. 
If an algorithm $\Acal$ can output a policy that is at most $\eps/10^3$ suboptimal with probability at least $p$ using at most $N$ samples, then there exists an algorithm $\hat{\Acal}$ that can correctly identify  at least $SH- \floor*{SH/500}$ entries of $\astar$ with probability at least $p$ using at most $N$ samples.
\end{lemma}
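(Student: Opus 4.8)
The plan is to mimic the proof of Lemma~\ref{lem:MG-lower-lemma} while exploiting the additive structure of $\Jcal(\astar,\bstar)$. First I would define $\hat{\Acal}$ to run $\Acal$ on exactly the same samples, obtain its output policy $(\muhat,\nuhat)$, and for each state-step pair $(s_i,h)$ with $h\ge 2$ declare as its guess for $\astar_{h-1,i}$ the action receiving the largest probability under $\muhat_h(\cdot\mid s_i)$ (the most-played action), breaking ties arbitrarily. Since no reward information is needed to form these guesses and $\hat{\Acal}$ invokes $\Acal$ as a black box, the sample budget $N$ is preserved.

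The key structural fact is that because the transitions of $\Jcal(\astar,\bstar)$ are independent of the actions and send the agent to each of $s_1,\dots,s_S$ with probability exactly $1/S$ at every step, the value functions decompose additively across the $SH$ embedded matrix games $\{M^{s_i,h}\}$, and the best-responding min-player reduces to best-responding within each matrix game separately (the current action affects neither the next state nor any future reward). Writing $q_{i,h}$ for the probability that $\muhat_h(\cdot\mid s_i)$ places on the optimal action $\astar_{h-1,i}$, the matrix-game computation underlying $\Mfrak(\eps/H)$ gives a local max-player suboptimality of $2(1-q_{i,h})\eps/H$ at game $(s_i,h)$ (this is the value $\min_b\sum_a\muhat(a)\cM_{ab}$ versus the optimum $1/2+\eps/H$). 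Summing with the visitation weight $1/S$ yields
\begin{equation*}
V_1^\star(s_0) - V_1^{\muhat,\dagger}(s_0) = \frac{2\eps}{SH}\sum_{i=1}^S\sum_{h=2}^{H+1}(1-q_{i,h}).
\end{equation*}

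Next I would condition on the success event of $\Acal$, which by hypothesis has probability at least $p$. On this event the total gap $V_1^{\dagger,\nuhat}(s_0)-V_1^{\muhat,\dagger}(s_0)$ is at most $\eps/10^3$; since both $V_1^{\dagger,\nuhat}-V_1^\star$ and $V_1^\star-V_1^{\muhat,\dagger}$ are nonnegative, the max-player term above is also at most $\eps/10^3$, forcing $\sum_{i,h}(1-q_{i,h})\le SH/2000$. A Markov/pigeonhole argument then bounds the number of ``bad'' games: whenever $q_{i,h}>1/2$ the optimal action is the strict plurality action and $\hat{\Acal}$'s guess is correct, so the number of possibly-wrong guesses is at most the number of pairs with $1-q_{i,h}\ge 1/2$, which is at most $2\sum_{i,h}(1-q_{i,h})\le SH/1000$. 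As this count is an integer it is bounded by $\floor*{SH/1000}\le\floor*{SH/500}$, so on the success event $\hat{\Acal}$ correctly identifies at least $SH-\floor*{SH/500}$ entries of $\astar$, which proves the lemma.

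The main obstacle is establishing the additive decomposition rigorously—in particular verifying that the best-responding min-player acts independently across the embedded matrix games, so that the per-game suboptimality formula $2(1-q_{i,h})\eps/H$ is valid, and tracking the $1/S$ visitation weights and the $h\ge 2$ indexing carefully. The remaining steps are a direct transcription of the matrix-game argument together with a routine counting bound, with the numerical constants ($10^3$, $2000$, $1000$, $500$) arranged precisely so that the surviving correct count matches $SH-\floor*{SH/500}$.
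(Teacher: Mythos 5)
Your proposal is correct and follows essentially the same route as the paper's proof: guess $\argmax_a \pi_{h+1}(a\mid s_i)$, lower-bound the max-player's suboptimality by the $1/S$-weighted sum of per-game losses $2(1-q_{i,h})\eps/H$, and count the bad pairs by a Markov/pigeonhole argument (the paper uses threshold $2/3$ where you use $1/2$, which is immaterial since both yield a count below $\floor{SH/500}$). Your explicit additive decomposition of $V_1^\star - V_1^{\muhat,\dagger}$ is a slightly more careful rendering of the paper's one-line observation that each suboptimal pick costs $2\eps/H$ against the best response, but the substance is identical.
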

\begin{proof}
Denote by $\pi$ the output policy for the max player.
Denote by $Z$ the collection of $(h,i)$'s in $[H]\times[S]$ such that $\pi_{h+1}(\astar_{h,i} \mid s_{i})\le 2/3$. 

Observe that each time the max player picks a suboptimal action, it will incur an $2\epsilon/H$ suboptimality in expectation. 
As a result, if $\pi$ is at most $\eps/10^3$-suboptimal, we must have
$$
\frac{1}{S}\sum_{(h,i)\in Z}
(1-\pi_{h+1}(\astar_{h,i} \mid s_{i}))
  \times  \frac{2\epsilon}{H} \le \frac{\eps}{10^3},
$$
which implies $|Z| \le SH/500$, that is, for at most $\floor*{SH/500}$ different $(h,i)$'s, $\pi_{h+1}(\astar_{h,i} \mid s_{i})\le 2/3$. 
Therefore, we can simply pick $\argmax_{a}\pi_{h+1}(a \mid s_{i})$ as the guess for $\astar_{h,i}$. 
Since policy $\pi$ is at most $\eps/10^3$ suboptimal with probability at least $p$, 
our guess will be correct for at least $SH- \floor*{SH/500}$ different $(s,h)$ pairs  also with probability no smaller than $p$.
\end{proof}

Similar to the funtion of Lemma \ref{lem:MG-lower-lemma},
Lemma \ref{lem:MarkovG-lower-lemma} directly implies that in order to prove the desired lower bound for reward-free learning of Markov games:
\begin{claim}
for any reward-free algorithm $\Acal$ that interacts with the environment for at most $K=SABH^2/(10^4\epsilon^2)$ episodes, there exists $\Jcal\in\Jfrak(\eps)$ such that  when running $\Acal$ on $\Jcal$, it will output a policy that is at least $\eps/10^3$ suboptimal for the max-player with probability at least $1/4$,
\end{claim}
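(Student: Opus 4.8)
The plan is to reduce the claim, via Lemma~\ref{lem:MarkovG-lower-lemma}, to an entry-identification lower bound under a uniform prior, and then to exploit the reward-free structure to decouple the $SH$ embedded matrix games. I would argue by contradiction: suppose there were a reward-free algorithm $\Acal$ using at most $K=SABH^2/(10^4\eps^2)$ episodes that, on \emph{every} $\Jcal\in\Jfrak(\eps)$, output a policy that is $\eps/10^3$-optimal for the max-player with probability strictly greater than $3/4$. Applying Lemma~\ref{lem:MarkovG-lower-lemma} to each fixed $\Jcal$ produces a single algorithm $\hat{\Acal}$ (run $\Acal$ and report $\argmax_a \pi_{h+1}(a\mid s_i)$ as the guess for $\astar_{h,i}$) that correctly identifies at least $SH-\floor*{SH/500}$ entries of $\astar$ with probability $>3/4$ on every $\Jcal$, hence also with probability $>3/4$ when $\Jcal$ is drawn uniformly from $\Jfrak(\eps)$. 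It therefore suffices to show that no reward-free algorithm can identify this many entries with probability exceeding $3/4$ under the uniform prior, i.e.\ that every such algorithm misidentifies more than $\floor*{SH/500}$ entries with probability at least $1/4$.

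The crucial structural fact I would record first is that, because $\hat{\Acal}$ is reward-free and the transitions of every $\Jcal\in\Jfrak(\eps)$ are uniform over $\{s_1,\dots,s_S\}$ independently of states, actions, and rewards, the entire exploration process is independent of the reward parameters $(\astar,\bstar)$. In particular, the number $n_{h,i}$ of samples allocated to the embedded matrix game $M^{h,i}\sim\Mfrak(\eps/H)$ (together with how those samples are distributed over the $A\times B$ cells) is a function of the transition randomness alone. Moreover each episode contributes exactly $H$ reward samples, so $\sum_{(h,i)\in[H]\times[S]} n_{h,i}=KH=SABH^3/(10^4\eps^2)$ \emph{deterministically}. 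Calling $M^{h,i}$ \emph{heavy} if $n_{h,i}\ge N_0:=ABH^2/(10^3\eps^2)$ and \emph{light} otherwise, I would count that the number of heavy games is at most $KH/N_0 = SH/10$, so that at least $9SH/10$ games are light, for \emph{every} realization of the exploration counts.

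Next I would lower bound, for each light game, the probability of misidentifying $\astar_{h,i}$. Fixing a light $(h,i)$ and conditioning on all reward data from the other $SH-1$ games, on $(\astar_{j},\bstar_{j})_{j\ne(h,i)}$, and on $\bstar_{h,i}$, the guess for $\astar_{h,i}$ becomes a deterministic function of the reward data of $M^{h,i}$ alone, and the cross-game data carries no information about $\astar_{h,i}$. This is exactly the isolated matrix-game problem of Claim~\ref{claim:2} with gap $\eps/H$ and $n_{h,i}<N_0=AB/(10^3(\eps/H)^2)$ samples, so the computation in \eqref{eq:feb3} bounds the failure probability (over uniform $\astar_{h,i}$ and the reward noise) below by an absolute constant $q_0>0$; being valid under every conditioning, it holds marginally for each light game. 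Letting $W$ be the number of misidentified entries, I then get, conditionally on the counts, $\E[W]\ge q_0\cdot 9SH/10$; since $0\le W\le SH$, the reverse Markov inequality $\E[W]\le SH\cdot\Pr(W>t)+t$ with $t=\floor*{SH/500}$ gives $\Pr(W>\floor*{SH/500})\ge 9q_0/10-1/500$. The constants $10^4,10^3,500$ are chosen precisely so that this exceeds $1/4$; as the bound holds for every count realization it holds unconditionally, contradicting the earlier $>3/4$ success probability and establishing the claim.

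The step I expect to be the main obstacle is the per-game reduction in the third paragraph: arguing cleanly that identifying a single entry $\astar_{h,i}$ is no easier than the isolated problem of Claim~\ref{claim:2}, \emph{despite} $\hat{\Acal}$ being allowed to use all collected data and \emph{despite} the counts $n_{h,i}$ being random. The reward-free property together with reward-independent transitions is exactly what makes the counts independent of $(\astar,\bstar)$ and renders the cross-game data uninformative, and this independence must be invoked carefully; by contrast the counting of light games and the first-moment-plus-reverse-Markov aggregation are routine.
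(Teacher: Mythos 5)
Your overall skeleton matches the paper's: reduce via Lemma~\ref{lem:MarkovG-lower-lemma} to an entry-identification problem under the uniform prior, argue by contradiction, bound per-game misidentification by the matrix-game KL/Pinsker computation of \eqref{eq:feb3}, and aggregate. Your conditional reduction in the third paragraph (the step you flagged as the obstacle) is in fact sound: since the algorithm is reward-free and the transitions are uniform and action-independent, the entire exploration trajectory --- hence every cell count --- is independent of $(\astar,\bstar)$, which is exactly the observation the paper exploits through the reference game $\Jcal_{-(h',i')}$ and the parameter-independent expected counts $m(a,b)$. The genuine gap is quantitative, and it sits in your heavy/light split. With your threshold $N_0=ABH^2/(10^3\eps^2)$, the per-light-game failure probability you can extract from \eqref{eq:feb3} is only $1-1/A-\sqrt{100/10^3}$, which for $A=2$ is about $0.18$; then $\E[W]\ge (9/10)(0.18)\,SH\approx 0.165\,SH$, and reverse Markov yields $\Pr\bigl(W>\floor*{SH/500}\bigr)\gtrsim 0.163$, not $\ge 1/4$. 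Your assertion that the constants ``are chosen precisely so that this exceeds $1/4$'' is false, and the defect is not repairable by tuning the threshold: writing $N_0=c\,K/S$, your route gives at best $(1-1/c)\bigl(\tfrac12-\sqrt{c}/10\bigr)$, whose maximum over $c$ is about $0.225$, still below the $1/4+1/500$ that the reverse-Markov step requires. So as written the argument proves the claim only with a weaker constant (or for a smaller $K$), not the claim as stated.

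The fix is to drop the heavy/light dichotomy altogether, which is where the paper diverges from you. Because the transitions are uniform, every entry $(h,i)$ is visited with probability exactly $1/S$ per episode regardless of the algorithm's actions, so the \emph{expected} counts satisfy $\sum_{a,b}m(a,b)=K/S$ for every entry --- the algorithm has no ability to concentrate samples on a subset of entries, so taking a worst case over realized allocations (your light class capped at $N_0$) throws away exactly this structure. Applying Cauchy--Schwarz/Jensen inside the KL bound at the level of expected counts, as in the paper's display, gives the \emph{per-entry} bound $\E_\star[\er{h,i}]\ge 1-1/A-\sqrt{100K\eps^2/(SABH^2)}=1-1/A-1/10\ge 2/5$ for every single entry, with no exceptional set; then $\E_\star[\nw]\ge SH/3$ contradicts $\E_\star[\nw]\le \frac14 SH+\frac34\floor*{SH/500}\le \frac{101}{400}SH$. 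Equivalently, within your framework you could keep the conditional per-game argument but replace the worst-case light bound by Jensen over the random counts, using $\E[n_{h,i}]=K/S$, which recovers the same $2/5$ and makes your aggregation go through.
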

 it suffices to prove the following claim:
\begin{claim}\label{claim:3}
for any reward-free learning algorithm $\hat{\Acal}$ that interacts with the environment for at most $K=ABSH^2/(10^4\epsilon^2)$ episodes, there exists $\Jcal\in\Jfrak(\eps)$  such that  when running $\hat{\Acal}$ on $\Jcal$, it will fail to correctly identify $\astar_{h,i}$ for at least $\floor*{SH/500}+1$ different $(h,i)$ pairs with probability at least $1/4$.
\end{claim}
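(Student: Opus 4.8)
The plan is to reduce the $SH$-game identification problem to $SH$ \emph{independent} single-matrix-game problems, and then to show that the episode budget $K=ABSH^2/(10^4\eps^2)$ forces the \emph{average} game to receive a factor $\sim 10$ fewer samples than the single-game threshold of Claim \ref{claim:2}. As in the proof of Claim \ref{claim:2}, I would assume $\hat{\Acal}$ is deterministic without loss of generality (a randomized algorithm is a mixture, and the bound below holds after conditioning on its internal randomness). I would then introduce the random instance $\Jcal(\astar,\bstar)$ drawn from the uniform distribution over $\Jfrak(\eps)$, i.e.\ each pair $(\astar_{h,i},\bstar_{h,i})$, $(h,i)\in[H]\times[S]$, is drawn independently and uniformly, and let $\Pr_\star$ denote the resulting joint law. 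The crucial structural observation, which I would isolate first, is that because $\hat{\Acal}$ is reward-free and the transitions of every $\Jcal\in\Jfrak(\eps)$ are uniform over $\{s_1,\dots,s_S\}$ independent of states/actions, the \emph{trajectory} $\tau$ of visited states and played actions has a distribution that does not depend on $(\astar,\bstar)$. Conditioning on $\tau$ therefore fixes, for each game $(h,i)$, the number $n_{h,i}(a,b)$ of times action pair $(a,b)$ is played there (with $n_{h,i}:=\sum_{a,b}n_{h,i}(a,b)$), and keeps $\astar_{h,i}$ uniform and the revealed rewards at $(h,i)$ i.i.d.\ from the appropriate Bernoulli.

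Next I would establish the per-game bound. Fix $(h,i)$ and condition on $\tau$. Since the $SH$ games are mutually independent, a change of measure in the $(h,i)$ game alters only the rewards $L_{h,i}$ revealed at that game and leaves all other rewards' law untouched; hence the KL-divergence computation of equation \eqref{eq:feb3} goes through verbatim with the single-game counts $n_{h,i}(a,b)$ and the scaled gap $\eps/H$. This yields, for the guess $\hat{\Acal}$ produces for $\astar_{h,i}$,
\begin{equation*}
\Pr_\star\big(\text{correctly identify }\astar_{h,i}\mid\tau\big)\le \frac{1}{A}+\sqrt{\frac{100\, n_{h,i}\,\eps^2}{ABH^2}}.
\end{equation*}
Summing over the $SH$ games and using that each of the $K$ episodes contributes exactly $H$ matrix-game plays (one per step $2,\dots,H+1$), so that $\sum_{h,i}n_{h,i}=KH$ deterministically, Cauchy--Schwarz gives $\sum_{h,i}\sqrt{n_{h,i}}\le\sqrt{SH\cdot KH}=H\sqrt{SK}$, whence
\begin{equation*}
\E_\star\big[\#\text{ correctly identified}\mid\tau\big]\le \frac{SH}{A}+10\eps\sqrt{\frac{SK}{AB}}.
\end{equation*}
Plugging in $K=ABSH^2/(10^4\eps^2)$ collapses the second term to exactly $SH/10$, and since the bound is uniform in $\tau$ it holds unconditionally: $\E_\star[\#\text{ correct}]\le SH/A+SH/10$.

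Finally I would convert this expectation bound into the claimed high-probability statement. Let $C$ denote the number of correctly identified entries, so the number of errors is $SH-C$. For any $A\ge 2$ one has $1/A+1/10\le 0.75\cdot(499/500)$, so Markov's inequality applied to $C$ at threshold $SH(1-1/500)$ gives $\Pr_\star\big(C\ge SH(1-1/500)\big)\le 3/4$, i.e.\ $\Pr_\star\big(SH-C>SH/500\big)\ge 1/4$; since $SH-C$ is an integer, this is exactly $\Pr_\star\big(SH-C\ge\floor*{SH/500}+1\big)\ge 1/4$. As this probability is an average over the uniformly random instance, the maximum over instances is at least the average, so there exists a fixed $\Jcal\in\Jfrak(\eps)$ on which $\hat{\Acal}$ fails to identify at least $\floor*{SH/500}+1$ entries with probability at least $1/4$, which is Claim \ref{claim:3}. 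The main obstacle I anticipate is not the final bookkeeping but the decoupling step: one must argue carefully that (i) the reward-free property together with uniform transitions makes the sample allocation $n_{h,i}$ statistically independent of the instance (so conditioning on $\tau$ is legitimate), and (ii) the change of measure for game $(h,i)$ touches only $L_{h,i}$, so the single-game KL bound of \eqref{eq:feb3} transfers without any cross-game interference. This decoupling is precisely what fails in the reward-aware setting—where an adaptive learner could concentrate its samples on few games—and is the conceptual reason the $AB$ (rather than $A+B$) dependence is unavoidable here.
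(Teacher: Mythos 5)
Your proposal is correct, and it shares the paper's information-theoretic core---the per-game change of measure to a reference instance in which the optimal row is erased (your ``$0b$-style'' reference measure is exactly the paper's modified game $\Jcal_{-(h',i')}$), followed by Pinsker and the Bernoulli KL computation at gap $\eps/H$---but the global assembly is genuinely different. The paper argues by contradiction: assuming every instance is solved with probability $3/4$, it pigeonholes to a single pair $(h',i')$ whose average error is at most $101/400$, and contradicts this with a per-game lower bound $\E_\star[\er{h',i'}]\ge 1/3$ proved via the \emph{expected} counts $m(a,b)=\E_{\astar\bstar}^-[n(a,b)]$, which are instance-independent by reward-freeness and satisfy $\sum_{a,b}m(a,b)=K/S$ because the uniform transitions force each game to be visited with probability exactly $1/S$ per episode. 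You instead argue directly: condition on the exploration trajectory $\tau$ (legitimate for the same reason the paper's $m(a,b)$ is well defined---reward-freeness plus identical transitions make the law of $\tau$ instance-independent), apply the single-game bound of Claim \ref{claim:2} with the realized counts $n_{h,i}$, sum over the $SH$ games via Cauchy--Schwarz using the deterministic identity $\sum_{h,i}n_{h,i}=KH$, and finish with Markov's inequality plus the max-over-instances-exceeds-average step. The trade-off: your Cauchy--Schwarz step uses only the total sample budget and never the $1/S$ visitation property, so it tolerates arbitrarily skewed sample allocations across games and would survive variants of the construction without uniform transitions; the paper's pigeonhole-plus-contradiction avoids conditioning on $\tau$ and any handling of realized counts. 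Your numerics check out ($10\eps\sqrt{SK/(AB)}=SH/10$ at $K=ABSH^2/(10^4\eps^2)$, and $1/A+1/10\le \tfrac34\cdot\tfrac{499}{500}$ for $A\ge 2$, the same implicit assumption the paper needs for $1-1/A-1/10\ge 1/3$). The one step a full write-up must spell out is the decoupling you flag yourself: since the guess for $\astar_{h,i}$ depends on \emph{all} revealed rewards, the claim that the change of measure at game $(h,i)$ leaves the other rewards' law untouched should be justified by conditioning on the other games' instance entries (or by KL additivity over independent components); this is precisely the role played by the paper's explicit construction of $\Jcal_{-(h',i')}$, and it is a matter of rigor rather than a gap.
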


\begin{proof}[Proof of Claim \ref{claim:3}]
Denote by $\Pr_\star$ ($\E_\star$) the probability measure (expectation) induced by picking $\Jcal$ uniformly at random from $\Jfrak(\epsilon)$ and then running $\hat{\Acal}$ on $\Jcal$.
Denote by $\nw$ the number of $(s,h)$ pairs for which $\hat{\Acal}$ fails to identify the optimal actions.
Denote by $\er{h,i}$ the indicator function of the event that $\hat{\Acal}$ fails to identify the optimal action for $(h+1,i)$.

We prove by contradiction. 
Suppose for any $\Jcal\in\Jfrak(\epsilon)$, $\hat{\Acal}$ can  identify the optimal actions for at least $SH-\floor*{SH/500}$ different $(s,h)$ pairs with probability larger than $3/4$. Then we have
$$
\E_\star[\nw] \le \frac{1}{4}\times SH
+ \frac{3}{4}\times \floor*{\frac{SH}{500}} \le \frac{101SH}{400}.
$$
Since $\sum_{(h,i)\in[H]\times[S]} \E_\star[\er{h,i}] = \E_\star[\nw]$, there must exists $(h',i')\in[H]\times[S]$ such that $\E_\star[\er{h',i'}]\le 101/400$. 
However, in the following, we show that for every $(h,i)\in[H]\times[S]$, 
$\E_\star[\er{h,i}]\ge 1/3$.
%
As a result, we obtain a contraction and Claim \ref{claim:3} holds.
In the remainder of this section, we will prove for every $(h,i)\in[H]\times[S]$, $\E_\star[\er{h,i}]\ge 1/3$. 

WLOG, we assume $\hat{\Acal}$ is deterministic. 
 It suffices to consider an arbitrary \emph{fixed} $(h',i')$ pair and prove $\E_\star[\er{h',i'}]\ge 1/3$.

For technical reason, we introduce a new MG $\Jcal_{-(h',i')}(\astar,\bstar)$ as below:
\begin{itemize}
	\item {\bf States, actions and transitions:} same as $\Jcal(\astar,\bstar)$.	
\item {\bf Rewards:}	  there is no reward in the first step. For the remaining steps $h\in\{2,\ldots,H+1\}$, if the agent takes action $(a,b)$ at state $s_i$ in the $h^{\rm th}$ step such that $(h-1,i)\neq(h',i')$, it will receive a binary reward sampled from 
$$
{\rm Bernoulli} \Big(\frac12+(1-2\cdot\mathbf{1} \{a\neq \astar_{h-1,i} \&b= \bstar_{h-1,i} \})\frac{\eps}{H}\Big),
$$
otherwise it will receive 
a binary reward sampled from 
$$
{\rm Bernoulli} \Big(\frac12+(1-2\cdot\mathbf{1} \{b= \bstar_{h-1,i} \})\frac{\eps}{H}\Big).
$$
	\end{itemize}
	\begin{remark}
	Briefly speaking, $\Jcal_{-(h',i')}(\astar,\bstar)$ is the same as $\Jcal(\astar,\bstar)$ except the matrix game embedded at state $s_{i'}$ at step $h'+1$, where for the max player all its actions  are equivalently bad \footnote{A graphic illustration based on \eqref{eq:figure} would be replacing the column $[\naction,\ldots,\naction,\paction,\naction,\ldots,\naction]\trans$ with a column of all $\naction$'s in the matrix game embedded at state $s_{i'}$ at step $h'+1$.}.    
	Finally, we remark that $\Jcal_{-(h',i')}(\astar,\bstar)$ is \textbf{independent} of $\astar_{h',i'}$.
	\end{remark}

To proceed, we introduce (and recall) the following notations: 
\begin{itemize}
	\item $n(a,b)$: the number of times $\hat{\Acal}$ picks action $(a,b)$ at state $s_{i'}$ at step $(h'+1)$ within $K$ episode.
	\item $\Pr_{\astar\bstar}$ ($\E_{\astar\bstar}$): the probability measure (expectation) induced  
by running algorithm $\hat{\Acal}$ on $\Jcal(\astar,\bstar)$.
\item $\Pr_{\astar\bstar}^-$ ($\E_{\astar\bstar}^-$): the probability measure (expectation) induced  
by running algorithm $\hat{\Acal}$ on  $\Jcal_{-(h',i')}(\astar,\bstar)$ .
\item $\Pr_\star$ ($\E_\star$) the probability measure (expectation) induced by picking $\Jcal(\astar,\bstar)$ uniformly at random from $\Jfrak(\epsilon)$ and running $\hat{\Acal}$ on $\Jcal(\astar,\bstar)$.
\item $L$: the whole interaction trajectory of states, actions and rewards produced by algorithm $\hat{\Acal}$ within $K$ episodes.
\item $f(L)$: the guess of $\hat{\Acal}$ for $\astar_{h',i'}$ based on $L$.
\end{itemize}

The key observation here is that  for any $(a,b)\in[A]\times[B]$ and $(\astar,\bstar)\in [A]^{H\times S}\times[B]^{H\times S}$,
the expectation $\E_{\astar\bstar}^-[n(a,b) ]$ is independent of $(\astar,\bstar)$ because the agent does not receive any reward information  when interacting with the environment
 and the transition dynamics of different $ \Jcal_{-(h',i')}(\astar,\bstar)$'s are exactly the same. For simplicity of notation, we denote this expectation by $m(a,b)$. 
 Moreover, note that $\sum_{a,b}m(a,b) = K/S$ because the agent always reach state $s_{i'}$ in  step $(h'+1)$ with probability $1/S$ regardless of the actions taken.

By mimicking the arguments in \eqref{eq:feb3}, we have 
\begin{equation}
\begin{aligned}
&\qquad\E_\star[\er{h',i'}] = \Pr_\star(f(L)\neq \astar_{h',i'})\\
	&= \frac{1}{(AB)^{SH}} 
	\sum_{(\astar,\bstar)\in[A]^{H\times S}\times[B]^{H\times S}} \Pr_{\astar\bstar}(f(L)\neq \astar_{h',i'})\\
	&\ge \frac{1}{(AB)^{SH}} 
	\sum_{\astar,\bstar}
	\bigg(\Pr_{\astar\bstar}^-(f(L)\neq \astar_{h',i'})
	 -\left\|\Pr_{\astar\bstar}^-(L=\cdot)-
	 \Pr_{\astar\bstar}(L=\cdot)\right\|_1\bigg)\\
	&=1-\frac{1}{A}- \frac{1}{(AB)^{SH}} 
	\sum_{\astar,\bstar}
	\left\|\Pr_{\astar\bstar}^-(L=\cdot)-
	 \Pr_{\astar\bstar}(L=\cdot)\right\|\\
	 &\ge1-\frac{1}{A}- \frac{1}{(AB)^{SH}} 
	\sum_{\astar,\bstar}
	\sqrt{ 2{\rm KL}(\Pr_{\astar\bstar}^-(L=\cdot),
	 \Pr_{\astar\bstar}(L=\cdot))}\\
		& = 1-\frac{1}{A}- \frac{1}{(AB)^{SH}} 
	\sum_{\astar,\bstar} 
	\sqrt{2 m(\astar_{h',i'},\bstar_{h',i'}) \big[ (\frac12 - \frac{\eps}{H})\log\frac{\frac12 - \frac{\eps}{H}}{\frac12 +\frac{\eps}{H}} + (\frac12 + \frac{\eps}{H})\log\frac{\frac12 + \frac{\eps}{H}}{\frac12 -\frac{\eps}{H}}\big]}\\
	& = 1-\frac{1}{A}- \frac{1}{AB} 
	\sum_{(a,b)\in[A]\times[B]} 
	\sqrt{2 m(a,b) \big[ (\frac12 - \frac{\eps}{H})\log\frac{\frac12 - \frac{\eps}{H}}{\frac12 +\frac{\eps}{H}} + (\frac12 + \frac{\eps}{H})\log\frac{\frac12 + \frac{\eps}{H}}{\frac12 -\frac{\eps}{H}}\big]}\\
	& \ge 1- \frac{1}{A} -\frac{10}{AB}\sum_{a,b} 
	\sqrt{m(a,b)\frac{\eps^2}{H^2}}\\
  &
  \ge   1- \frac{1}{A} -\frac{10\epsilon}{ABH}\sqrt{AB\sum_{a,b} 
	m(a,b)}
  = 1- \frac{1}{A} - \sqrt{\frac{100K\eps^2}{SABH^2}}.
	\end{aligned}
\end{equation}
Plugging in $K=SABH^2/(10^4\epsilon^2)$ completes the proof.
\end{proof}

\section{Proof for Section \ref{section:multi-player-short} -- Multi-player General-sum Markov Games}

\subsection{Proof of Theorem~\ref{thm:Multi-Nash-VI}}
\label{sec:proof_general_ovi}

\subsubsection{NE Version}

In this section, we prove Theorem~\ref{thm:Multi-Nash-VI} (NE version). As before, we begin with proving the optimistic estimations are indeed upper bounds of the corresponding V-value and Q-value functions.

\begin{lemma}
    \label{lem:general-UCB}
    With probability $1-p$, for any $(s, \bm{a}, h, i)$ and $k\in[K]$:
    \begin{equation}
        \label{eq:general-Q-UCB}
    \up{Q}_{h,i}^{k}\left( s,\bm{a} \right) \ge Q_{h,i}^{\dagger,\pi _{-i}^{k}}\left( s,\bm{a} \right) ,   \,\,\,\,  \low{Q}_{h,i}^{k}\left( s,\bm{a} \right) \le Q_{h,i}^{\pi ^{k}}\left( s,\bm{a} \right),
    \end{equation}
    \begin{equation}
        \label{eq:general-V-UCB}
        \up{V}_{h,i}^{k}\left( s \right) \ge V_{h,i}^{\dagger,\pi _{-i}^{k}}\left( s \right), \,\,\,\,  \low{V}_{h,i}^{k}\left( s \right) \le V_{h,i}^{\pi ^{k}}\left( s \right).
    \end{equation}
 \end{lemma}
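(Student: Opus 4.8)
The plan is to mirror the sandwich argument of Lemma~\ref{lem:sandwichVpik_Hoeffding}, but with the auxiliary bonus $\gamma$ removed and the per-value-function Hoeffding concentration replaced by a \emph{uniform} $\ell_1$ concentration on the empirical transition. First I would introduce the high-probability event $E$ on which, for all $(s,\bm{a},h,k)$,
\[
\norm{(\Phat_h^k - \P_h)(\cdot\mid s,\bm{a})}_1 \le c_1\sqrt{\frac{S\iota}{\minone{N_h^k(s,\bm{a})}}},
\]
which holds with probability at least $1-p$ by the same concentration-plus-union-bound argument used for the third inequality of Lemma~\ref{event-vi-2} (cf.\ equation~(12) of \citet{azar2017minimax}). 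On $E$, for \emph{every} $V\in[0,H]^S$ and every $(s,\bm{a},h,k)$ we have $|[(\Phat_h^k-\P_h)V](s,\bm{a})| \le H\norm{(\Phat_h^k-\P_h)(\cdot\mid s,\bm{a})}_1 \le c_1\sqrt{SH^2\iota/\minone{N_h^k(s,\bm{a})}} \le \beta_t$ once $c\ge c_1$, where $t = N_h^k(s,\bm{a})$; crucially this bound is uniform over $V$, so it will apply to the data-dependent value functions that appear below.

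Then I would prove \eqref{eq:general-Q-UCB} and \eqref{eq:general-V-UCB} jointly by backward induction on $h$, the base case $h=H+1$ holding trivially since all quantities vanish. For the inductive step, assume the four bounds hold at step $h+1$. For the $Q$-upper bound, if the $\min$ in the update is active then $\up{Q}_{h,i}^k=H\ge Q_{h,i}^{\dagger,\pi_{-i}^k}$; otherwise, using the best-response Bellman equation $Q_{h,i}^{\dagger,\pi_{-i}^k}=(r_{h,i}+\P_h V_{h+1,i}^{\dagger,\pi_{-i}^k})$ and decomposing (at $(s,\bm{a})$)
\[
\up{Q}_{h,i}^k - Q_{h,i}^{\dagger,\pi_{-i}^k} = \Phat_h^k(\up{V}_{h+1,i}^k - V_{h+1,i}^{\dagger,\pi_{-i}^k}) + (\Phat_h^k-\P_h)V_{h+1,i}^{\dagger,\pi_{-i}^k} + \beta_t,
\]
the first term is nonnegative by the induction hypothesis $\up{V}_{h+1,i}^k\ge V_{h+1,i}^{\dagger,\pi_{-i}^k}$ and the second is at least $-\beta_t$ by $E$, so the difference is nonnegative. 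The $Q$-lower bound is symmetric, using $Q_{h,i}^{\pi^k}=(r_{h,i}+\P_h V_{h+1,i}^{\pi^k})$ together with $\low{V}_{h+1,i}^k\le V_{h+1,i}^{\pi^k}$. For the $V$-bounds I would pass through the one-step equilibrium. The $V$-upper bound uses that $\pi_h^k(\cdot\mid s)=\NASH(\up{Q}_{h,1}^k(s,\cdot),\dots,\up{Q}_{h,m}^k(s,\cdot))$, so no player gains by unilateral deviation, yielding $\up{V}_{h,i}^k(s)=\E_{\bm{a}\sim\pi_h^k}\up{Q}_{h,i}^k(s,\bm{a})\ge\max_{\mu_i}\E_{a_i\sim\mu_i,\bm{a}_{-i}\sim\pi_{-i,h}^k}\up{Q}_{h,i}^k(s,\bm{a})\ge V_{h,i}^{\dagger,\pi_{-i}^k}(s)$, the last step invoking the step-$h$ $Q$-upper bound just established. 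The $V$-lower bound is immediate: both $\low{V}_{h,i}^k$ and $V_{h,i}^{\pi^k}$ average their respective $Q$-functions against the \emph{same} policy $\pi_h^k$, so $\low{Q}_{h,i}^k\le Q_{h,i}^{\pi^k}$ gives $\low{V}_{h,i}^k(s)\le V_{h,i}^{\pi^k}(s)$.

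The main obstacle—and the reason this proof is genuinely different from the single-value-function UCBVI analysis—is that $V_{h+1,i}^{\dagger,\pi_{-i}^k}$ and $V_{h+1,i}^{\pi^k}$ depend on $\pi^k$, which is itself a function of the collected data; one therefore cannot apply a Hoeffding bound to a \emph{fixed} target as in the zero-sum proof's event $E_0$. The uniform $\ell_1$ control in $E$ circumvents this, at the price of an extra $\sqrt{S}$ in the bonus (hence the $S^2$ rather than $S$ dependence in Theorem~\ref{thm:Multi-Nash-VI}); the auxiliary-bonus and Bernstein refinements of the zero-sum setting do not transfer, since they rely on sandwiching around the \emph{single} optimal value $V^\star$, which has no analogue once each of the $m$ players carries a distinct best-response value. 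The only remaining care is that the induction be carried out simultaneously over all $i\in[m]$, which is automatic because each step treats the players independently given the shared policy $\pi_h^k$.
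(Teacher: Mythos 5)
Your proposal is correct and takes essentially the same route as the paper's proof: backward induction on $h$, the identical decomposition of $\up{Q}_{h,i}^k - Q_{h,i}^{\dagger,\pi_{-i}^k}$ into a nonnegative term (by the induction hypothesis) plus a concentration term controlled by the $\sqrt{S}$-inflated bonus, and the one-step Nash property of $\pi_h^k$ to pass from the $Q$-bounds to the $V$-bounds. Your explicit $\ell_1$ event $E$ is just an unpacking of the uniform concentration the paper invokes by citation \citep[Lemma 12 in][]{bai2020provable}, and your explicit treatment of the truncation at $H$ and the observation that the $V$-lower bound needs only averaging against the common policy (not the equilibrium property) are minor refinements of, not departures from, the paper's argument.
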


 \begin{proof}
For each fixed $k$, we prove this by induction
from $h = H + 1$ to $h = 1$. 
For the base case, we know at the $(H + 1)$-th step,$\up{V}_{H+1,i}^{k}\left( s \right) ={V}_{H+1,i}^{\dagger,\pi_{-i}^{k}}\left( s \right) =0$. Now, assume the inequality~\eqref{eq:general-V-UCB} holds for the $(h + 1)$-th step, for the $h$-th step, by the definition of $Q$-functions,
\begin{align*}
    \up{Q}_{h,i}^{k}\left( s,\bm{a} \right) -Q_{h,i}^{\dagger,\pi _{-i}^{k}}\left( s,\bm{a} \right) =&\left[ \Phat_{h}^{k}\up{V}_{h+1,i}^{k} \right] \left( s,\bm{a} \right) -\left[ \P_{h}V_{h+1,i}^{\dagger,\pi _{-i}^{k}} \right] \left( s,\bm{a} \right) +\beta _t
\\
=&\underset{\left( A \right)}{\underbrace{\Phat_{h}^{k}\left( \up{V}_{h+1,i}^{k}-V_{h+1,i}^{\dagger,\pi _{-i}^{k}} \right) \left( s,\bm{a} \right) }}+\underset{\left( B \right)}{\underbrace{\left( \Phat_{h}^{k}-\P_{h} \right)V_{h+1,i}^{\dagger,\pi _{-i}^{k}}\left( s,\bm{a} \right) }}+\beta_t .
\end{align*}

By induction hypothesis, for any $s'$, $\left(\up{V}_{h+1,i}^{k}-V_{h+1,i}^{\dagger,\pi _{-i}^{k}}\right)(s') \ge 0$, and thus $(A) \ge 0$. By uniform concentration \citep[e.g., Lemma 12 in][]{bai2020provable}, $(B) \le C\sqrt{SH^2\iota/N_h^k(s,\bm{a})} = \beta_t$. Putting everything together we have $\up{Q}_{h,i}^{k}\left( s,\bm{a} \right) -Q_{h,i}^{\dagger,\pi _{-i}^{k}}\left( s,\bm{a} \right) \ge 0$. The second inequality can be proved similarly.

Now assume inequality~\eqref{eq:general-Q-UCB} holds for the $h$-th step, by the definition of $V$-functions and Nash equilibrium,
$$
\up{V}_{h,i}^{k}\left( s \right) =\D_{\pi ^k}\up{Q}_{h,i}^{k}\left( s \right) =\underset{\mu}{\max}\D_{\mu \times \pi _{-i}^{k}}\up{Q}_{h,i}^{k}\left( s \right). 
$$

By Bellman equation,
$$
V_{h,i}^{\dagger,\pi _{-i}^{k}}\left( s \right) =\underset{\mu}{\max}\D_{\mu \times \pi _{-i}^{k}}Q_{h,i}^{\dagger,\pi _{-i}^{k}}\left( s \right) .
$$

Since by induction hypothesis, for any $(s,\bm{a})$, $\up{Q}_{h,i}^{k}\left( s,\bm{a} \right) \ge Q_{h,i}^{\dagger,\pi _{-i}^{k}}\left( s,\bm{a} \right)$. As a result, we also have $\up{V}_{h,i}^{k}\left( s \right) \ge V_{h,i}^{\dagger,\pi _{-i}^{k}}\left( s \right)$, which is exactly inequality~\eqref{eq:general-V-UCB} for the $h$-th step. The second inequality can be proved similarly.
 \end{proof}

\begin{proof}[Proof of Theorem~\ref{thm:Multi-Nash-VI}]
    Let us focus on the $i$-th player and ignore the subscript when there is no confusion. To bound 
    $$
    \max_{i}\left( V_{1,i}^{\dagger,\pi _{-i}^{k}}-V_{1,i}^{\pi ^k} \right) \left( s_{h}^{k} \right) \le \max_{i}\left( \up{V}_{1,i}^{k}-\low{V}_{1,i}^{k} \right) \left( s_{h}^{k} \right),
    $$
we notice the following propogation:
\begin{equation}
    \left\{
        \begin{aligned}
    &(\up{Q}^k_{h,i}-\low{Q}^k_{h,i})(s,\bm{a}) \le \Phat_h^k(\up{V}^k_{h+1,i}-\low{V}^k_{h+1,i})(s,\bm{a})+2\beta^k_h(s,\bm{a}),\\
    &(\up{V}_{h,i}-\low{V}_{h,i})(s) =[\D_{\pi_h}(\up{Q}^k_{h,i}-\low{Q}^k_{h,i})](s).
        \end{aligned}
        \right.
    \end{equation}

    We can define $\widetilde{Q}^k_{h}$ and $\widetilde{V}^k_{h}$ recursively by $\widetilde{V}^k_{H+1} = 0$ and 
    \begin{equation}
        \left\{
            \begin{aligned}
        &\widetilde{Q}^k_{h}(s,\bm{a}) = \Phat_h^k\widetilde{V}^k_{h+1}(s,\bm{a})+2\beta^k_h(s,\bm{a}),\\
        &\widetilde{V}^k_{h}(s) =[\D_{\pi_h}\widetilde{Q}^k_{h}](s).
            \end{aligned}
            \right.
        \end{equation}

    Then we can prove inductively that for any $k$, $h$, $s$ and $\bm{a}$ we have
    \begin{equation}
        \left\{
            \begin{aligned}
        &\max_{i}(\up{Q}^k_{h,i}-\low{Q}^k_{h,i})(s,\bm{a}) \le \widetilde{Q}^k_{h}(s,\bm{a}),\\
        &\max_{i}(\up{V}_{h,i}-\low{V}_{h,i})(s) \le \widetilde{V}^k_{h}(s).
            \end{aligned}
            \right.
        \end{equation}
    Thus we only need to bound $\sum_{k=1}^K\widetilde{V}^k_{1}(s)$. Define the shorthand notation
    \begin{equation}
        \left\{
            \begin{aligned}
        &\beta _{h}^{k} := \beta _{h}^{k}(s_h^k,\bm{a}_h^k),\\
        &\Delta_{h}^k := \widetilde{V}^k_{h}(s_h^k),\\
        &\zeta_{h}^k :=  [\D_{\pi ^k}\widetilde{Q}^k_{h}] \left( s_{h}^{k} \right) 
        - \widetilde{Q}^k_{h}(s_h^k,\bm{a}_h^k),\\
        & \xi_{h}^k := [\P_h\widetilde{V}^k_{h+1}](s_h^k,\bm{a}_h^k)
        - \Delta_{h+1}^k.
            \end{aligned}
            \right.
        \end{equation}
        We can check $\zeta_h^k$ and $\xi_h^k$ are martingale difference sequences. As a result,
\begin{align*}
    \Delta _{h}^{k}=&\D_{\pi ^k}\widetilde{Q}^k_{h} \left( s_{h}^{k} \right) 
\\
=&\zeta _{h}^{k}+\widetilde{Q}^k_{h} \left( s_{h}^{k},\bm{a}_h^k \right) 
\\
=&\zeta _{h}^{k}+2\beta _{h}^{k}+ [\Phat_{h}^{k}\widetilde{V}^k_{h+1}]  \left( s_{h}^{k},\bm{a}_h^k \right) 
\\
\le &\zeta _{h}^{k}+3\beta _{h}^{k}+[\P_{h}\widetilde{V}^k_{h+1}]  \left( s_{h}^{k},\bm{a}_h^k \right) 
\\
=&\zeta _{h}^{k}+3\beta _{h}^{k}+\xi _{h}^{k}+\Delta _{h+1}^{k}.
\end{align*}

Recursing this argument for $h \in [H]$ and taking the sum,
$$
\sum_{k=1}^K{\Delta _{1}^{k}}\le \sum_{k=1}^K{\left( \zeta _{h}^{k}+3\beta _{h}^{k}+\xi _{h}^{k} \right)}\le O\left( S\sqrt{H^3T\iota \prod_{i=1}^M{A_i}} \right).
$$
\end{proof}

\subsubsection{CCE Version}
\label{sec:proof-Multi-Nash-VI-CCE}
The proof is very similar to the NE version. Specifically, the only part that uses the properties of NE there is Lemma~\ref{lem:general-UCB}. We prove a counterpart here.

\begin{lemma}
    \label{lem:general-UCB-CCE}
    With probability $1-p$, for any $(s, \bm{a}, h, i)$ and $k\in[K]$:
    \begin{equation}
        \label{eq:general-Q-UCB-CCE}
    \up{Q}_{h,i}^{k}\left( s,\bm{a} \right) \ge Q_{h,i}^{\dagger,\pi _{-i}^{k}}\left( s,\bm{a} \right) ,   \,\,\,\,  \low{Q}_{h,i}^{k}\left( s,\bm{a} \right) \le Q_{h,i}^{\pi ^{k}}\left( s,\bm{a} \right),
    \end{equation}
    \begin{equation}
        \label{eq:general-V-UCB-CCE}
        \up{V}_{h,i}^{k}\left( s \right) \ge V_{h,i}^{\dagger,\pi _{-i}^{k}}\left( s \right), \,\,\,\,  \low{V}_{h,i}^{k}\left( s \right) \le V_{h,i}^{\pi ^{k}}\left( s \right).
    \end{equation}
 \end{lemma}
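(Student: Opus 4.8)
The plan is to reuse the backward-induction argument of Lemma~\ref{lem:general-UCB} almost verbatim, isolating the single place where the Nash structure of the subroutine was used and replacing it with the defining inequality of the $\CCE$ subroutine. Fix an episode $k$ and induct on $h$ from $H+1$ down to $1$. The base case $h=H+1$ is immediate since $\up{V}_{H+1,i}^k=\low{V}_{H+1,i}^k=V_{H+1,i}^{\dagger,\pi_{-i}^k}=V_{H+1,i}^{\pi^k}=0$. Assuming \eqref{eq:general-V-UCB-CCE} at step $h+1$, I would first establish \eqref{eq:general-Q-UCB-CCE} and then \eqref{eq:general-V-UCB-CCE} at step $h$.

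The $Q$-value step is identical to that of Lemma~\ref{lem:general-UCB} and does not depend on which equilibrium subroutine is used. Writing
\begin{align*}
\up{Q}_{h,i}^{k}(s,\bm{a})-Q_{h,i}^{\dagger,\pi_{-i}^{k}}(s,\bm{a})
= \underbrace{\Phat_{h}^{k}\big(\up{V}_{h+1,i}^{k}-V_{h+1,i}^{\dagger,\pi_{-i}^{k}}\big)(s,\bm{a})}_{(A)}
+\underbrace{\big(\Phat_{h}^{k}-\P_{h}\big)V_{h+1,i}^{\dagger,\pi_{-i}^{k}}(s,\bm{a})}_{(B)}+\beta_t,
\end{align*}
the induction hypothesis gives $(A)\ge 0$, while the uniform concentration bound (Lemma 12 of \citet{bai2020provable}, which is the reason the bonus carries the extra $\sqrt{S}$ factor) gives $|(B)|\le C\sqrt{SH^2\iota/N_h^k(s,\bm{a})}=\beta_t$, hence $\up{Q}_{h,i}^k\ge Q_{h,i}^{\dagger,\pi_{-i}^k}$. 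The lower bound $\low{Q}_{h,i}^k\le Q_{h,i}^{\pi^k}$ is symmetric (using $V_{h+1,i}^{\pi^k}\ge\low{V}_{h+1,i}^k$ and the same concentration), where the truncation at $0$ only helps.

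The only genuinely new ingredient is the $V$-value upper bound. Since $\pi_h^k=\CCE(\up{Q}_{h,1}(s,\cdot),\ldots,\up{Q}_{h,m}(s,\cdot))$, its defining property is that no player gains by deviating to a fixed action: for every $i$ and every $a_i^\star\in\cA_i$,
\begin{align*}
\up{V}_{h,i}^k(s)=\big(\D_{\pi_h^k}\up{Q}_{h,i}^k\big)(s)
\ge \E_{\bm{a}_{-i}\sim\pi_{-i,h}^k}\,\up{Q}_{h,i}^k(s,a_i^\star,\bm{a}_{-i}).
\end{align*}
Taking the max over $a_i^\star$, inserting the just-proved $\up{Q}_{h,i}^k\ge Q_{h,i}^{\dagger,\pi_{-i}^k}$, and recognizing the right-hand side as the best-response Bellman backup, I obtain
\begin{align*}
\up{V}_{h,i}^k(s)\ge \max_{a_i^\star}\E_{\bm{a}_{-i}\sim\pi_{-i,h}^k}Q_{h,i}^{\dagger,\pi_{-i}^k}(s,a_i^\star,\bm{a}_{-i})
= V_{h,i}^{\dagger,\pi_{-i}^k}(s).
\end{align*}
The $V$-value lower bound requires no equilibrium property: $\low{V}_{h,i}^k(s)=(\D_{\pi_h^k}\low{Q}_{h,i}^k)(s)\le(\D_{\pi_h^k}Q_{h,i}^{\pi^k})(s)=V_{h,i}^{\pi^k}(s)$, applying the pointwise bound $\low{Q}_{h,i}^k\le Q_{h,i}^{\pi^k}$ under the very policy $\pi_h^k$ that defines $V^{\pi^k}$. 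The union bound over all $(s,\bm{a},h,i,k)$ is absorbed in the concentration event of probability $1-p$.

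I expect the main obstacle to be conceptual rather than computational: verifying that the $\CCE$ ``no-unilateral-deviation'' inequality lines up exactly with the best-response value $V_{h,i}^{\dagger,\pi_{-i}^k}$. Two points need care: (i) the deviating player faces the \emph{marginal} $\pi_{-i,h}^k$ of the correlated policy on $\cA_{-i}$, which is precisely the environment defining $V_{h,i}^{\dagger,\pi_{-i}^k}$; and (ii) restricting deviations to pure actions $a_i^\star$ loses nothing, because the best response to a fixed opponent distribution against a deterministic $Q$ is attained at a pure action, so that $\max_{\mu}\D_{\mu\times\pi_{-i,h}^k}Q=\max_{a_i^\star}\E_{\bm{a}_{-i}\sim\pi_{-i,h}^k}Q(s,a_i^\star,\bm{a}_{-i})$. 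Once these are in place the CCE version follows, and as noted at the start of Section~\ref{sec:proof-Multi-Nash-VI-CCE} the remainder of the proof of Theorem~\ref{thm:Multi-Nash-VI} (the regret/sample-complexity bookkeeping driven by the recursion through $\widetilde{V}^k_h$) is unchanged.
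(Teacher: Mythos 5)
Your proof is correct and follows essentially the same route as the paper's: backward induction in $h$, the identical $(A)+(B)+\beta_t$ decomposition with concentration for the $Q$-step, and the CCE no-unilateral-deviation inequality combined with the best-response Bellman equation for the $V$-step (the paper states the deviation over mixed policies $\mu$, you over pure actions $a_i^\star$, which are equivalent by linearity as you note). The only cosmetic difference is that you spell out the lower bounds explicitly where the paper says ``proved similarly.''
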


 \begin{proof}
For each fixed $k$, we prove this by induction
from $h = H + 1$ to $h = 1$. For the base case, we know at the $(H + 1)$-th step, $\up{V}_{H+1,i}^{k}\left( s \right) ={V}_{H+1,i}^{\dagger,\pi _{-i}^{k}}\left( s \right) =0$. Now, assume the inequality~\eqref{eq:general-V-UCB} holds for the $(h + 1)$-th step, for the $h$-th step, by the definition of $Q$-functions,
\begin{align*}
    \up{Q}_{h,i}^{k}\left( s,\bm{a} \right) -Q_{h,i}^{\dagger,\pi _{-i}^{k}}\left( s,\bm{a} \right) =&\left[ \Phat_{h}^{k}\up{V}_{h+1,i}^{k} \right] \left( s,\bm{a} \right) -\left[ \P_{h}V_{h+1,i}^{\dagger,\pi _{-i}^{k}} \right] \left( s,\bm{a} \right) +\beta _t
\\
=&\underset{\left( A \right)}{\underbrace{\Phat_{h}^{k}\left( \up{V}_{h+1,i}^{k}-V_{h+1,i}^{\dagger,\pi _{-i}^{k}} \right) \left( s,\bm{a} \right) }}+\underset{\left( B \right)}{\underbrace{\left( \Phat_{h}^{k}-\P_{h} \right)V_{h+1,i}^{\dagger,\pi _{-i}^{k}}\left( s,\bm{a} \right) }}+\beta _t.
\end{align*}

By induction hypothesis, for any $s'$, $\left(\up{V}_{h+1,i}^{k}-V_{h+1,i}^{\dagger,\pi _{-i}^{k}}\right)(s') \ge 0$, and thus $(A) \ge 0$. By uniform concentration, $(B) \le C\sqrt{SH^2\iota/N_h^k(s,\bm{a})} = \beta_t$. Putting everything together we have $\up{Q}_{h,i}^{k}\left( s,\bm{a} \right) -Q_{h,i}^{\dagger,\pi _{-i}^{k}}\left( s,\bm{a} \right) \ge 0$. The second inequality can be proved similarly.

Now assume inequality~\eqref{eq:general-Q-UCB-CCE} holds for the $h$-th step, by the definition of $V$-functions and CCE,
$$
\up{V}_{h,i}^{k}\left( s \right) =\D_{\pi ^k}\up{Q}_{h,i}^{k}\left( s \right) \ge\underset{\mu}{\max}\D_{\mu \times \pi _{-i}^{k}}\up{Q}_{h,i}^{k}\left( s \right). 
$$

By Bellman equation,
$$
V_{h,i}^{\dagger,\pi _{-i}^{k}}\left( s \right) =\underset{\mu}{\max}\D_{\mu \times \pi _{-i}^{k}}Q_{h,i}^{\dagger,\pi _{-i}^{k}}\left( s \right) .
$$

Since by induction hypothesis, for any $(s,\bm{a})$, $\up{Q}_{h,i}^{k}\left( s,\bm{a} \right) \ge Q_{h,i}^{\dagger,\pi _{-i}^{k}}\left( s,\bm{a} \right)$. As a result, we also have $\up{V}_{h,i}^{k}\left( s \right) \ge V_{h,i}^{\dagger,\pi _{-i}^{k}}\left( s \right)$, which is exactly inequality~\eqref{eq:general-V-UCB} for the $h$-th step. The second inequality can be proved similarly.
 \end{proof}

 \subsubsection{CE Version}
\label{sec:proof-Multi-Nash-VI-CE}

The proof is very similar to the NE version. Specifically, the only part that uses the properties of NE there is Lemma~\ref{lem:general-UCB}. We prove a counterpart here.

\begin{lemma}
    \label{lem:general-UCB-CE}
    With probability $1-p$, for any $(s, \bm{a}, h,i)$ and $k\in[K]$:
    \begin{equation}
        \label{eq:general-Q-UCB-CE}
    \up{Q}_{h,i}^{k}\left( s,\bm{a} \right) \ge \underset{\phi\in\Phi_i}{\max}Q_{h,i}^{\phi \circ \pi^{k}}\left( s,\bm{a} \right) ,   \,\,\,\,  \low{Q}_{h,i}^{k}\left( s,\bm{a} \right) \le Q_{h,i}^{\pi^{k}}\left( s,\bm{a} \right),
    \end{equation}
    \begin{equation}
        \label{eq:general-V-UCB-CE}
        \up{V}_{h,i}^{k}\left( s \right) \ge\underset{\phi\in\Phi_i}{\max}V_{h,i}^{\phi \circ \pi^{k}}\left( s \right), \,\,\,\,  \low{V}_{h,i}^{k}\left( s \right) \le V_{h,i}^{\pi ^{k}}\left( s \right).
    \end{equation}
 \end{lemma}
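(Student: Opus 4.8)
The plan is to prove Lemma~\ref{lem:general-UCB-CE} by the same backward induction on $h$ (from $H+1$ down to $1$) used in the NE and CCE versions, since the only place that invokes the equilibrium structure is the $V$-value step. The base case $h=H+1$ is trivial because all value functions vanish. For the inductive $Q$-value step, I would argue exactly as in Lemma~\ref{lem:general-UCB}: write
\begin{align*}
\up{Q}_{h,i}^{k}(s,\bm{a}) - \max_{\phi\in\Phi_i}Q_{h,i}^{\phi\circ\pi^k}(s,\bm{a})
= \underbrace{\Phat_h^k\big(\up{V}_{h+1,i}^{k} - \max_{\phi\in\Phi_i}V_{h+1,i}^{\phi\circ\pi^k}\big)(s,\bm{a})}_{(A)}
+ \underbrace{(\Phat_h^k-\P_h)\,\textstyle\max_{\phi}V_{h+1,i}^{\phi\circ\pi^k}(s,\bm{a})}_{(B)} + \beta_t,
\end{align*}
where I use the Bellman recursion $Q_{h,i}^{\phi\circ\pi^k}=r_{h,i}+\P_hV_{h+1,i}^{\phi\circ\pi^k}$ and the fact that a strategy modification $\phi$ acts only on player $i$'s action at $(h,s)$, so that the maximizing $\phi$ for the $Q$-value agrees with the maximizing $\phi$ for the continuation $V$-value. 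Term $(A)\ge 0$ by the induction hypothesis~\eqref{eq:general-V-UCB-CE}, and $(B)\le C\sqrt{SH^2\iota/N_h^k(s,\bm{a})}=\beta_t$ by the same uniform concentration used before (Lemma 12 in \citet{bai2020provable}), giving the upper bound in~\eqref{eq:general-Q-UCB-CE}; the lower bound $\low{Q}_{h,i}^k\le Q_{h,i}^{\pi^k}$ is symmetric and identical to the NE case.

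The genuinely new step is the $V$-value inequality, which must use the defining property of the $\CE$ subroutine in place of the NE/CCE property. By the definition of a one-step correlated equilibrium returned by $\Eq=\CE$, the joint distribution $\pi_h^k(\cdot\mid s)$ satisfies, for every player $i$ and every modification $\phi_{h,s}$ of its action, $\D_{\pi_h^k}\up{Q}_{h,i}^k(s)\ge \E_{\bm{a}\sim\pi_h^k}\up{Q}_{h,i}^k(s, \phi_{h,s}(a_i),\bm{a}_{-i})$. I would then relate the right-hand side to $\max_{\phi\in\Phi_i}V_{h,i}^{\phi\circ\pi^k}(s)$: since $\max_{\phi}V_{h,i}^{\phi\circ\pi^k}(s)=\max_{\phi_{h,s}}\E_{\bm{a}\sim\pi_h^k}Q_{h,i}^{\phi\circ\pi^k}(s,\phi_{h,s}(a_i),\bm{a}_{-i})$ by the Bellman-type recursion for the best strategy modification, and since the induction hypothesis~\eqref{eq:general-Q-UCB-CE} gives $\up{Q}_{h,i}^k\ge\max_{\phi}Q_{h,i}^{\phi\circ\pi^k}$ entrywise, combining these yields $\up{V}_{h,i}^k(s)=\D_{\pi_h^k}\up{Q}_{h,i}^k(s)\ge\max_{\phi_{h,s}}\E_{\pi_h^k}\up{Q}_{h,i}^k(s,\phi_{h,s}(a_i),\bm{a}_{-i})\ge\max_{\phi}V_{h,i}^{\phi\circ\pi^k}(s)$, which is the desired~\eqref{eq:general-V-UCB-CE}. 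The lower bound on $\low{V}_{h,i}^k$ follows as before from $\low{V}_{h,i}^k=\D_{\pi_h^k}\low{Q}_{h,i}^k\le\D_{\pi_h^k}Q_{h,i}^{\pi^k}=V_{h,i}^{\pi^k}$.

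The main obstacle I anticipate is making rigorous the interplay between the \emph{one-step} CE property (which only guarantees no profitable deviation at the current step $(h,s)$) and the \emph{multi-step} strategy modification $\phi$ appearing in the CE definition (Definition~\ref{defn:CEpolicy}), whose value $\max_{\phi\in\Phi_i}V_{h,i}^{\phi\circ\pi^k}$ involves deviations at all future steps as well. I would handle this by carefully unfolding the best-response-to-modification value into a one-step optimization over $\phi_{h,s}$ plus a continuation term $V_{h+1,i}^{\phi\circ\pi^k}$, and noting that the induction hypothesis already dominates the optimal continuation value; this decoupling of the current-step modification from future modifications is exactly what lets the local CE guarantee propagate. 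Once Lemma~\ref{lem:general-UCB-CE} is established, the remainder of the CE regret and sample-complexity bound is identical to the proof of Theorem~\ref{thm:Multi-Nash-VI} in the NE case, since the recursion defining $\widetilde{Q}_h^k,\widetilde{V}_h^k$ and the martingale/pigeonhole arguments never used the specific equilibrium notion.
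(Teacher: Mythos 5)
Your proposal is correct and follows essentially the same route as the paper's proof: backward induction on $h$, the identical $(A)+(B)$ decomposition with the uniform-concentration bound for the $Q$-step, and for the $V$-step the one-step CE property of the $\CE$ subroutine applied to $\up{Q}$ combined with the Bellman optimality relation $\max_{\phi}V_{h,i}^{\phi\circ\pi^k}=\max_{\phi}\D_{\phi\circ\pi^k}\max_{\phi'}Q_{h,i}^{\phi'\circ\pi^k}$. The only difference is presentational: you make explicit the decoupling of the current-step modification $\phi_{h,s}$ from future-step modifications (which justifies interchanging $\max_\phi$ with the Bellman backup), a point the paper uses implicitly.
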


 \begin{proof}
For each fixed $k$, we prove this by induction
from $h = H + 1$ to $h = 1$. For the base case, we know at the $(H + 1)$-th step, $\up{V}_{H+1,i}^{k}\left( s \right) =\underset{\phi}{\max}{V}_{H+1,i}^{\phi \circ\pi^{k}}\left( s \right) =0$. Now, assume the inequality~\eqref{eq:general-V-UCB} holds for the $(h + 1)$-th step, for the $h$-th step, by definition of $Q$-functions,
\begin{align*}
    &\up{Q}_{h,i}^{k}\left( s,\bm{a} \right) -\underset{\phi}{\max}Q_{h,i}^{\phi \circ\pi^{k}}\left( s,\bm{a} \right) \\
    =&\left[ \Phat_{h}^{k}\up{V}_{h+1,i}^{k} \right] \left( s,\bm{a} \right) -\left[ \P_{h}\underset{\phi}{\max}V_{h+1,i}^{\phi \circ\pi^{k}} \right] \left( s,\bm{a} \right) +\beta _t
\\
=&\underset{\left( A \right)}{\underbrace{\Phat_{h}^{k}\left( \up{V}_{h+1,i}^{k}-\underset{\phi}{\max}V_{h+1,i}^{\phi \circ\pi^{k}} \right) \left( s,\bm{a} \right) }}+\underset{\left( B \right)}{\underbrace{\left( \Phat_{h}^{k}-\P_{h} \right)\underset{\phi}{\max}V_{h+1,i}^{\phi \circ\pi^{k}}\left( s,\bm{a} \right) }}+\beta _t.
\end{align*}

By induction hypothesis, for any $s'$, $\left(\up{V}_{h+1,i}^{k}-\underset{\phi}{\max}V_{h+1,i}^{\phi \circ\pi^{k}}\right)(s') \ge 0$, and thus $(A) \ge 0$. By uniform concentration, $(B) \le C\sqrt{SH^2\iota/N_h^k(s,\bm{a})} = \beta_t$. Putting everything together we have $\up{Q}_{h,i}^{k}\left( s,\bm{a} \right) -\underset{\phi}{\max}Q_{h,i}^{\phi \circ\pi^{k}}\left( s,\bm{a} \right) \ge 0$. The second inequality can be proved similarly.

Now assume inequality~\eqref{eq:general-Q-UCB-CE} holds for the $h$-th step, by the definition of $V$-functions and CE,
$$
\up{V}_{h,i}^{k}\left( s \right) =\D_{\pi ^k}\up{Q}_{h,i}^{k}\left( s \right) =\underset{\phi}{\max}\D_{\phi \circ \pi^{k}}\up{Q}_{h,i}^{k}\left( s \right). 
$$

By Bellman equation,
$$
\underset{\phi}{\max}V_{h,i}^{\phi \circ \pi^{k}}\left( s \right) =\underset{\phi}{\max}\D_{\phi \circ \pi^{k}}\underset{\phi'}{\max}Q_{h,i}^{\phi' \circ \pi^{k}}\left( s \right) .
$$

Since by induction hypothesis, for any $(s,\bm{a})$, $\up{Q}_{h,i}^{k}\left( s,\bm{a} \right) \ge\underset{\phi}{\max}Q_{h,i}^{\phi \circ\pi^{k}}\left( s,\bm{a} \right)$. As a result, we also have $\up{V}_{h,i}^{k}\left( s \right) \ge \underset{\phi}{\max}V_{h,i}^{\phi \circ \pi^{k}}\left( s \right)$, which is exactly inequality~\eqref{eq:general-V-UCB} for the $h$-th step. The second inequality can be proved similarly.
 \end{proof}

\subsection{Proof of Theorem \ref{thm:multi-rewardfree-explore}}\label{appendix:multi-rewardfree}

In this section, we prove each theorem
 for the single reward function case, i.e., $N=1$. 
The proof for the case of multiple reward functions ($N>1$) simply follows  from taking a union bound, that is, replacing the failure probability $p$ by $Np$.

\subsubsection{NE version}

Let $(\mu^k,\nu^k)$ be an arbitrary Nash-equilibrium policy of $\widehat{\cM}^k := (\Phat^k,\rhat^k)$, where $\Phat^k$ and $\rhat^k$ are our empirical estimate of the transition and the reward at the beginning of the $k$'th episode in Algorithm \ref{alg:multi-VI-zero}.
Given an arbitrary Nash equilibrium $\pi^k$ of $\widehat{\cM}^k$, we 
 use $\Qhat^k_{h,i}$ and $\Vhat^k_{h,i}$ to denote its value functions of the $i$'th player at the $h$'th step in $\widehat{\cM}^k$.

We prove the following two lemmas, which together imply the conclusion about Nash equilibriums in Theorem \ref{thm:multi-rewardfree-explore} 
 as in the proof of Theorem \ref{thm:rewardfree-explore}.
 
 \begin{lemma}\label{lem:general-rewardfree-1}
With probability $1-p$, for any $(h,s,\bm{a},i)$ and $k\in[K]$, we have
\begin{equation}\label{eq:aug15-11}
\left\{
\begin{aligned}
	&	|\Qhat_{h,i}^k(s,\bm{a}) - Q_{h,i}^{\pi^k}(s,\bm{a})| \le\Qt^k_h(s,\bm{a})  , \\
&	|\Vhat_{h,i}^k(s) - V_{h,i}^{\pi^k}(s)| \le\Vt^k_h(s)  .
\end{aligned}
\right.
\end{equation}
\end{lemma}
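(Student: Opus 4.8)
The plan is to prove \eqref{eq:aug15-11} by backward induction on $h$, from $h=H+1$ down to $h=1$, establishing the $Q$-inequality and the $V$-inequality together at each step. Recall that in Algorithm~\ref{alg:multi-VI-zero} the exploration policy is greedy, so that $\Vt^k_h(s)=\max_{\bm a}\Qt^k_h(s,\bm a)$ and $\Qt^k_h(s,\bm a)=\min\{(\Phat^k_h\Vt^k_{h+1})(s,\bm a)+\beta_t,H\}$; this max-structure is exactly what lets $\Vt^k_h$ dominate the evaluation gap under \emph{any} policy. First I would fix the good event using the standard $\ell_1$ transition concentration $\|(\Phat^k_h-\P_h)(\cdot\mid s,\bm a)\|_1\le c_1\sqrt{S\iota/\minone{N_h^k(s,\bm a)}}$ (the analogue of the third inequality in Lemma~\ref{event-vi-2}) together with a Hoeffding bound $|(\rhat^k_{h,i}-r_{h,i})(s,\bm a)|\le c_1\sqrt{\iota/\minone{N_h^k(s,\bm a)}}$; a union bound over all $(h,s,\bm a,i,k)$ gives an event of probability at least $1-p$ on which both hold. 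The base case $h=H+1$ is immediate since all four quantities vanish.

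For the induction step on the $Q$-functions I would expand, using the Bellman equations in the true game and in $\widehat{\cM}^k$,
\begin{equation*}
\Qhat^k_{h,i}(s,\bm a)-Q^{\pi^k}_{h,i}(s,\bm a)=(\rhat^k_{h,i}-r_{h,i})(s,\bm a)+\Phat^k_h(\Vhat^k_{h+1,i}-V^{\pi^k}_{h+1,i})(s,\bm a)+(\Phat^k_h-\P_h)V^{\pi^k}_{h+1,i}(s,\bm a),
\end{equation*}
and bound the three terms separately. The middle term is controlled by the induction hypothesis, $|\Phat^k_h(\Vhat^k_{h+1,i}-V^{\pi^k}_{h+1,i})|\le\Phat^k_h\Vt^k_{h+1}$. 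The third term is the crucial one: since $V^{\pi^k}_{h+1,i}$ is \emph{data-dependent} (it is the value of the empirical Nash policy $\pi^k$, not a fixed optimal value), I cannot invoke a concentration bound tailored to $V^\star$; instead I apply the uniform $\ell_1$ bound, which gives $|(\Phat^k_h-\P_h)V^{\pi^k}_{h+1,i}|\le H\|(\Phat^k_h-\P_h)(\cdot\mid s,\bm a)\|_1\le c_1H\sqrt{S\iota/\minone{N_h^k}}$ uniformly over all value functions. This is precisely why the bonus carries the extra $\sqrt S$ factor. Adding the reward term and choosing the absolute constant in $\beta_t$ large enough that $\beta_t$ dominates $c_1H\sqrt{S\iota/\minone{N_h^k}}+c_1\sqrt{\iota/\minone{N_h^k}}$, I obtain $|\Qhat^k_{h,i}-Q^{\pi^k}_{h,i}|\le\Phat^k_h\Vt^k_{h+1}+\beta_t$; combining with the trivial bound $H$ and the definition of $\Qt^k_h$ yields $|\Qhat^k_{h,i}-Q^{\pi^k}_{h,i}|\le\Qt^k_h$.

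Finally, for the $V$-functions I would use that $\Vhat^k_{h,i}(s)=(\D_{\pi^k_h}\Qhat^k_{h,i})(s)$ and $V^{\pi^k}_{h,i}(s)=(\D_{\pi^k_h}Q^{\pi^k}_{h,i})(s)$ are both evaluations of the \emph{same} policy $\pi^k$, so their difference equals $(\D_{\pi^k_h}(\Qhat^k_{h,i}-Q^{\pi^k}_{h,i}))(s)$, whose absolute value is at most $\E_{\bm a\sim\pi^k_h}\Qt^k_h(s,\bm a)$ by the just-proved $Q$-bound. Because the exploration value function is greedy, $\E_{\bm a\sim\pi^k_h}\Qt^k_h(s,\bm a)\le\max_{\bm a}\Qt^k_h(s,\bm a)=\Vt^k_h(s)$, which closes the induction. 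I expect the main obstacle to be the transition term: the dependence of $V^{\pi^k}$ on the collected data rules out a variance-adapted (Bernstein) concentration and forces the uniform $\ell_1$ argument, which is the source of the $S^2$ (rather than $S$) sample complexity in the multiplayer reward-free guarantee. The domination of the expectation by the greedy maximum is the complementary point, and it is the reason a single exploration run suffices simultaneously for every player $i$ and for any product (indeed CE or CCE) equilibrium of $\widehat{\cM}^k$, so that the CCE and CE versions follow by the same argument with only Lemma~\ref{lem:general-UCB-CCE}/\ref{lem:general-UCB-CE}-style replacements.
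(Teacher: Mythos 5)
Your proof is correct and follows essentially the same route as the paper's: the same backward induction with the three-term Bellman decomposition, the induction hypothesis controlling $\Phat^k_h(\Vhat^k_{h+1,i}-V^{\pi^k}_{h+1,i})$, uniform concentration absorbing the data-dependent term $(\Phat^k_h-\P_h)V^{\pi^k}_{h+1,i}$ and the reward error into $\beta_t$, and the same-policy observation combined with $\E_{\bm a\sim\pi^k_h}\Qt^k_h(s,\bm a)\le\max_{\bm a}\Qt^k_h(s,\bm a)=\Vt^k_h(s)$ for the $V$-step. The only cosmetic difference is that you derive the uniform concentration explicitly from the $\ell_1$ transition bound, whereas the paper cites it as a known lemma (Lemma 12 of \citet{bai2020provable}).
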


\begin{proof}
For each fixed $k$, we prove this by induction
from $h = H + 1$ to $h = 1$. For base case, we know at the $(H + 1)$-th step,$\Vhat_{H+1,i}^k =V_{H+1,i}^{\pi^{k}} = \Qhat_{H+1,i}^{k}=Q_{H+1,i}^{\pi^{k}}=0$. Now, assume the conclusion holds for the $(h + 1)$'th step, for the $h$'th step, by definition of $Q$- functions,
\begin{align*}
&  \left|  \Qhat_{h,i}^k\left( s,\bm{a} \right) -Q_{h,i}^{\pi^{k}}\left( s,\bm{a} \right) \right|\\
  \le &\left|\left[ \Phat_{h}^{k}\Vhat_{h+1,i}^k \right] \left( s,\bm{a} \right) -\left[ \P_{h}V_{h+1,i}^{\pi^{k}} \right] \left( s,\bm{a} \right) \right|+\left| r_h(s,a)-\rhat_h^k(s,a)\right|
\\
\le &\underset{\left( A \right)}{\underbrace{\left|\Phat_{h}^{k}\left( \Vhat_{h+1,i}^k-V_{h+1,i}^{\pi^{k}} \right) \left( s,\bm{a} \right) \right|}}+
\underset{\left( B \right)}{\underbrace{\left|\left( \Phat_{h}^{k}-\P_{h} \right)V_{h+1,i}^{\pi^{k}}\left( s,\bm{a} \right) \right|+\left| r_h(s,a)-\rhat_h^k(s,a)\right|}}
\end{align*}

By the induction hypothesis, 
$$(A) \le \Phat_{h}^{k} \left|\Vhat_{h+1,i}^k-V_{h+1,i}^{\pi^{k}}\right|(s,\bm{a})\le  
(\Phat_{h}^{k} \Vt^k_{h+1})(s,\bm{a})   .$$

By uniform concentration \citep[e.g., Lemma 12 in ][]{bai2020provable}, $(B) \le \sqrt{SH^2\iota/N_h^k(s,\bm{a})} = \beta_t$. Putting everything together we have
$$\left| Q_{h,i}^{\pi^{k}}\left( s,\bm{a} \right)  -\Qhat_{h,i}^k\left( s,\bm{a} \right)\right| \le \min\left\{(\Phat_{h}^{k} \Vt^k_{h+1})(s,\bm{a})   + \beta_t, H \right\} =\Qt^k_h(s,\bm{a})  ,$$
which proves the first inequality in \eqref{eq:aug15-11}. The inequality for $V$ functions follows directly by noting that the value functions are computed using the same policy $\pi^k$.
 \end{proof}

\begin{lemma}\label{lem:general-rewardfree-2}
With probability $1-p$, for any $(h,s,\bm{a},i,k)$, we have
\begin{equation}\label{eq:aug15-12}
\left\{
\begin{aligned}
	&	|\Qhat_{h,i}^k(s,\bm{a}) - Q_{h,i}^{\pi^k_{-i},\dagger}(s,\bm{a})| \le\Qt^k_h(s,\bm{a})  , \\
&	|\Vhat_{h,i}^k(s) - V_{h,i}^{\pi^k_{-i},\dagger}(s)| \le\Vt^k_h(s)  .
\end{aligned}
\right.
\end{equation}
\end{lemma}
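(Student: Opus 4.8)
The plan is to prove \eqref{eq:aug15-12} by backward induction on $h$, following the same template as the proof of Lemma~\ref{lem:general-rewardfree-1}; the only genuinely new ingredient is that passing from the $Q$-bound to the $V$-bound now involves a best-response maximization rather than evaluation under a fixed policy. The base case $h=H+1$ is immediate since all four quantities vanish. Throughout I would condition on the same high-probability event used in Lemma~\ref{lem:general-rewardfree-1}, under which the uniform transition concentration $|(\Phat_h^k-\P_h)V|(s,\bm{a})\le c\sqrt{SH^2\iota/N_h^k(s,\bm{a})}$ holds for all $V\in[0,H]^{\cS}$ (e.g.\ Lemma~12 in \citet{bai2020provable}) together with the reward concentration $|(\rhat_h^k-r_h)|(s,\bm{a})\le c\sqrt{\iota/N_h^k(s,\bm{a})}$.

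For the $Q$-value bound at step $h$, I would use the Bellman equations $\Qhat_{h,i}^k=\rhat_h^k+\Phat_h^k\Vhat_{h+1,i}^k$ and $Q_{h,i}^{\pi^k_{-i},\dagger}=r_h+\P_h V_{h+1,i}^{\pi^k_{-i},\dagger}$ and split
\[
|\Qhat_{h,i}^k-Q_{h,i}^{\pi^k_{-i},\dagger}|(s,\bm{a})\le \underbrace{\Phat_h^k|\Vhat_{h+1,i}^k-V_{h+1,i}^{\pi^k_{-i},\dagger}|(s,\bm{a})}_{(A)}+\underbrace{|(\Phat_h^k-\P_h)V_{h+1,i}^{\pi^k_{-i},\dagger}|(s,\bm{a})+|\rhat_h^k-r_h|(s,\bm{a})}_{(B)}.
\]
Term $(A)$ is controlled by the induction hypothesis at step $h+1$, giving $(A)\le(\Phat_h^k\Vt_{h+1}^k)(s,\bm{a})$. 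Term $(B)$ is bounded directly by the uniform and reward concentration above by $\beta_t=c\sqrt{SH^2\iota/N_h^k(s,\bm{a})}$; crucially, because the Multi-VI-Zero bonus is inflated by a $\sqrt{S}$ factor, there is no need to split off a $V^\star$ reference or invoke an auxiliary bonus, so no $\alpha_h$-type blow-up appears (in contrast to Lemma~\ref{lem:MG-closeQpiQhat}). Taking the minimum with $H$ and recalling $\Qt_h^k=\min\{\Phat_h^k\Vt_{h+1}^k+\beta_t,H\}$ yields the first line of \eqref{eq:aug15-12}.

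For the $V$-value bound, the key observation is that since $\pi^k$ is a Nash equilibrium of $\widehat{\cM}^k$, player $i$ best-responds within the empirical game, so $\Vhat_{h,i}^k(s)=\D_{\pi^k_h}\Qhat_{h,i}^k(s)=\max_{\mu}\D_{\mu\times\pi^k_{-i}}\Qhat_{h,i}^k(s)$; meanwhile the Bellman equation for the best response gives $V_{h,i}^{\pi^k_{-i},\dagger}(s)=\max_{\mu}\D_{\mu\times\pi^k_{-i}}Q_{h,i}^{\pi^k_{-i},\dagger}(s)$. Both are the \emph{same} operator $\max_\mu\D_{\mu\times\pi^k_{-i}}[\cdot]$ applied to the two $Q$-functions, which differ entrywise by at most $\Qt_h^k$ by the just-proved $Q$-bound. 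I would then invoke a one-line generalization of Lemma~\ref{lem:lipschitz} (the map $X\mapsto\max_{a_i}\E_{\bm{a}_{-i}\sim\pi^k_{-i}}X(a_i,\bm{a}_{-i})$ is $1$-Lipschitz in the entrywise sup-norm) to conclude $|\Vhat_{h,i}^k-V_{h,i}^{\pi^k_{-i},\dagger}|(s)\le\max_{\bm{a}}\Qt_h^k(s,\bm{a})=\Vt_h^k(s)$, where the final equality uses that the exploration policy $\pi_h$ in Algorithm~\ref{alg:multi-VI-zero} is the greedy (argmax) policy so that $\Vt_h^k(s)=\max_{\bm{a}}\Qt_h^k(s,\bm{a})$. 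This closes the induction.

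The main obstacle is the $V$-step: it hinges on the fact that the Nash-equilibrium value $\Vhat_{h,i}^k$ can be rewritten as the best response $\max_\mu\D_{\mu\times\pi^k_{-i}}\Qhat_{h,i}^k$ against the \emph{same} opponent profile $\pi^k_{-i}$ that defines $Q_{h,i}^{\pi^k_{-i},\dagger}$, so that the two value functions share a common $\max$-operator and the Lipschitz argument applies. This is precisely where the NE property is used, and it is the step that must be re-derived (with CCE- and CE-specific arguments) in the analogous lemmas for the CCE and CE versions. The remaining pieces — the entrywise $Q$-decomposition and the concentration bounds — are routine and identical in spirit to Lemma~\ref{lem:general-rewardfree-1}.
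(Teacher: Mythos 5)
Your proposal is correct and follows essentially the same route as the paper's proof: the same backward induction, the same $(A)+(B)$ decomposition of the $Q$-error with the induction hypothesis handling $(A)$ and the $\sqrt{S}$-inflated uniform concentration handling $(B)$, and the same use of the Nash property of $\pi^k$ plus the best-response Bellman equation to write both $V$-functions as $\max_{\mu}\D_{\mu\times\pi^k_{-i}}[\cdot]$ of the respective $Q$-functions, concluding via the $1$-Lipschitzness of that operator and $\Vt_h^k(s)=\max_{\bm{a}}\Qt_h^k(s,\bm{a})$. The paper performs the Lipschitz step inline rather than citing a lemma, but the argument is identical.
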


\begin{proof}
For each fixed $k$, we prove this by induction
from $h = H + 1$ to $h = 1$. For the base case, we know at the $(H + 1)$-th step,$\Vhat_{H+1,i}^k =V_{H+1,i}^{\pi _{-i}^{k},\dagger} =\Qhat_{H+1,i}^{k}=Q_{H+1,i}^{\pi _{-i}^{k},\dagger}=0$. Now, assume the conclusion holds for the $(h + 1)$'th step, for the $h$'th step, by definition of the $Q$ functions,
\begin{align*}
  &\left|  \Qhat_{h,i}^k\left( s,\bm{a} \right) -Q_{h,i}^{\pi _{-i}^{k},\dagger}\left( s,\bm{a} \right) \right|\\
  =&\left|\left[ \Phat_{h}^{k}\Vhat_{h+1,i}^k \right] \left( s,\bm{a} \right) -\left[ \P_{h}V_{h+1,i}^{\pi _{-i}^{k},\dagger} \right] \left( s,\bm{a} \right) \right|+\left| r_h(s,a)-\rhat_h^k(s,a)\right|
\\
\le &\underset{\left( A \right)}{\underbrace{\left|\Phat_{h}^{k}\left( \Vhat_{h+1,i}^k-V_{h+1,i}^{\pi _{-i}^{k},\dagger} \right) \left( s,\bm{a} \right) \right|}}+
\underset{\left( B \right)}{\underbrace{\left|\left( \Phat_{h}^{k}-\P_{h} \right)V_{h+1,i}^{\pi _{-i}^{k},\dagger}\left( s,\bm{a} \right) \right|+\left| r_h(s,a)-\rhat_h^k(s,a)\right|}}
\end{align*}

By the induction hypothesis, 
$$(A) \le \Phat_{h}^{k} \left|\Vhat_{h+1,i}^k-V_{h+1,i}^{\pi _{-i}^{k},\dagger}\right|(s,\bm{a})\le  
(\Phat_{h}^{k} \Vt^k_{h+1})(s,\bm{a})   .$$

By uniform concentration, $(B) \le \sqrt{SH^2\iota/N_h^k(s,\bm{a})} = \beta_t$. Putting everything together we have
$$\left| Q_{h,i}^{\pi _{-i}^{k},\dagger}\left( s,\bm{a} \right)  -\Qhat_{h,i}^k\left( s,\bm{a} \right)\right| \le \min\left\{(\Phat_{h}^{k} \Vt^k_{h+1})(s,\bm{a})   + \beta_t, H \right\} =\Qt^k_h(s,\bm{a})  ,$$
which proves the first inequality in \eqref{eq:aug15-12}. It remains to show the inequality for $V$-functions also hold in the $h$'th step.

Since $\pi^k$ is a Nash-equilibrium policy, we have 
$$
\Vhat_{h,i}^k\left( s \right) = \underset{\mu}{\max}\D_{\mu \times \pi _{-i}^{k}}\Qhat_{h,i}^k\left( s \right).
$$

By Bellman equation,
$$
V_{h,i}^{\pi _{-i}^{k},\dagger}\left( s \right) =\underset{\mu}{\max}\D_{\mu \times \pi _{-i}^{k}}Q_{h,i}^{\pi _{-i}^{k},\dagger}\left( s \right) .
$$

Combining the two equations above, and utilizing the bound we just proved for $Q$ functions, 
we obtain
\begin{align*}
\left|\Vhat_{h,i}^{k}\left( s \right) - V_{h,i}^{\pi _{-i}^{k},\dagger}\left( s \right)\right|
\le 
\left| \underset{\mu}{\max}\D_{\mu \times \pi _{-i}^{k}}\Qhat_{h,i}^k\left( s \right) - \underset{\mu}{\max}\D_{\mu \times \pi _{-i}^{k}}Q_{h,i}^{\pi _{-i}^{k},\dagger}\left( s \right)\right|
\le \max_{\bm{a}}\Qt^k_h(s,\bm{a})  
= \Vt^k_h(s)  ,
\end{align*}
which completes the whole proof.
 \end{proof}

\subsubsection{CCE version}
\label{appendix:proof-multi-rewardfree-CCE}
 
 The proof is almost the same as that for Nash equilibriums. 
 We will reuse Lemma \ref{lem:general-rewardfree-1} and prove an analogue of Lemma \ref{lem:general-rewardfree-2}. 
 The conclusion for CCEs will follow directly by combining the two lemmas as in the proof of Theorem  \ref{thm:rewardfree-explore}.

 \begin{lemma}\label{lem:general-rewardfree-3}
With probability $1-p$, for any $(h,s,\bm{a},i)$ and $k\in[K]$, we have
\begin{equation}\label{eq:aug15-1}
\left\{
\begin{aligned}
	&	  Q_{h,i}^{\pi^k_{-i},\dagger}(s,\bm{a})-\Qhat_{h,i}^k(s,\bm{a}) \le\Qt^k_h(s,\bm{a})  , \\
&	V_{h,i}^{\pi^k_{-i},\dagger}(s)-\Vhat_{h,i}^k(s) \le\Vt^k_h(s)  .
\end{aligned}
\right.
\end{equation}
\end{lemma}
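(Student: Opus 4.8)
The plan is to establish Lemma~\ref{lem:general-rewardfree-3} by backward induction on $h$, following the same skeleton as Lemma~\ref{lem:general-rewardfree-2} but replacing every two-sided absolute-value estimate by the appropriate \emph{one-sided} inequality, since a CCE only controls unilateral deviations in a single direction. The base case $h=H+1$ is immediate, as all quantities vanish. For the inductive step I would first treat the $Q$-functions. Using the Bellman equation for the best response $Q_{h,i}^{\pi^k_{-i},\dagger}(s,\bm{a}) = (r_{h,i} + \P_h V_{h+1,i}^{\pi^k_{-i},\dagger})(s,\bm{a})$ and the empirical recursion $\Qhat_{h,i}^k(s,\bm{a}) = (\rhat_{h,i}^k + \Phat_h^k \Vhat_{h+1,i}^k)(s,\bm{a})$, I would decompose
\[
Q_{h,i}^{\pi^k_{-i},\dagger}(s,\bm{a}) - \Qhat_{h,i}^k(s,\bm{a}) = \Phat_h^k\big(V_{h+1,i}^{\pi^k_{-i},\dagger} - \Vhat_{h+1,i}^k\big)(s,\bm{a}) + \big(\P_h - \Phat_h^k\big)V_{h+1,i}^{\pi^k_{-i},\dagger}(s,\bm{a}) + (r_{h,i} - \rhat_{h,i}^k)(s,\bm{a}).
\]
The propagation term is bounded above by $(\Phat_h^k \Vt_{h+1}^k)(s,\bm{a})$ via the induction hypothesis together with the nonnegativity of $\Phat_h^k$; crucially only the one-sided hypothesis $V_{h+1,i}^{\pi^k_{-i},\dagger} - \Vhat_{h+1,i}^k \le \Vt_{h+1}^k$ is needed. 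The remaining two terms are controlled by the uniform concentration bound (e.g., Lemma 12 in \citet{bai2020provable}), which is precisely what forces the $\sqrt{S}$-enlarged bonus $\beta_t = C\sqrt{SH^2\iota/N_h^k(s,\bm{a})}$; their sum is at most $\beta_t$. Capping by $H$ (valid since $0 \le \Qhat_{h,i}^k$ and $Q_{h,i}^{\pi^k_{-i},\dagger} \le H$) gives $Q_{h,i}^{\pi^k_{-i},\dagger}(s,\bm{a}) - \Qhat_{h,i}^k(s,\bm{a}) \le \min\{(\Phat_h^k\Vt_{h+1}^k)(s,\bm{a}) + \beta_t, H\} = \Qt_h^k(s,\bm{a})$.

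The key step is propagating to the $V$-functions, which is where the CCE structure enters and the argument departs from the Nash case. On the one hand, the Bellman optimality equation for the best response gives $V_{h,i}^{\pi^k_{-i},\dagger}(s) = \max_{\mu} \D_{\mu \times \pi^k_{-i,h}} Q_{h,i}^{\pi^k_{-i},\dagger}(s)$. On the other hand, since $\pi^k$ is a CCE of $\widehat{\cM}^k$, the defining inequality of Definition~\ref{defn:CCEpolicy}, applied at the one-step game at each $(s,h)$, yields $\Vhat_{h,i}^k(s) = \D_{\pi^k_h}\Qhat_{h,i}^k(s) \ge \max_{\mu}\D_{\mu \times \pi^k_{-i,h}}\Qhat_{h,i}^k(s)$. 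Subtracting and using $\max_x f(x) - \max_x g(x) \le \max_x (f-g)(x)$, I would obtain
\[
V_{h,i}^{\pi^k_{-i},\dagger}(s) - \Vhat_{h,i}^k(s) \le \max_{\mu}\D_{\mu \times \pi^k_{-i,h}}\big(Q_{h,i}^{\pi^k_{-i},\dagger} - \Qhat_{h,i}^k\big)(s) \le \max_{\bm{a}}\big(Q_{h,i}^{\pi^k_{-i},\dagger} - \Qhat_{h,i}^k\big)(s,\bm{a}) \le \max_{\bm{a}}\Qt_h^k(s,\bm{a}) = \Vt_h^k(s),
\]
where the final equality uses that $\pi_h$ in Algorithm~\ref{alg:multi-VI-zero} is the greedy (argmax) policy, so $\Vt_h^k(s) = \max_{\bm{a}}\Qt_h^k(s,\bm{a})$. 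This closes the induction.

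The main obstacle is conceptual rather than computational: in the Nash case the equality $\Vhat_{h,i}^k = \max_\mu \D_{\mu \times \pi^k_{-i}}\Qhat_{h,i}^k$ supports a symmetric two-sided bound, whereas the CCE only provides the inequality $\Vhat_{h,i}^k \ge \max_\mu \D_{\mu \times \pi^k_{-i}}\Qhat_{h,i}^k$. I must therefore verify that the entire induction can be driven by the single direction $Q^{\dagger} - \Qhat \le \Qt$ (rather than $|Q^{\dagger} - \Qhat| \le \Qt$), which indeed it can, since the propagation term is only ever bounded from above. This one-sided estimate is exactly what the downstream CCE guarantee requires: combined with the reused Lemma~\ref{lem:general-rewardfree-1}, it yields $V_{h,i}^{\pi^k_{-i},\dagger} - V_{h,i}^{\pi^k} \le 2\Vt_h^k$, and selecting the output episode minimizing $\Vt_1(s_1)$ then certifies, as in the proof of Theorem~\ref{thm:rewardfree-explore}, that any CCE of $\cM(\Phat^{\text{out}}, \rhat^i)$ is an approximate CCE of the true game. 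The uniform concentration controlling the data-dependent value function $V_{h+1,i}^{\pi^k_{-i},\dagger} \in [0,H]^{\cS}$ is inherited verbatim from the Nash version and introduces no new difficulty.
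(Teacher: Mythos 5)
Your proposal is correct and follows essentially the same route as the paper's proof: the same backward induction, the same decomposition of $Q_{h,i}^{\pi^k_{-i},\dagger}-\Qhat_{h,i}^k$ into a propagation term plus concentration terms absorbed by the $\sqrt{S}$-enlarged bonus, and the same combination of the one-step CCE inequality $\Vhat_{h,i}^k(s)\ge \max_{\mu}\D_{\mu\times\pi^k_{-i}}\Qhat_{h,i}^k(s)$ with the Bellman optimality equation for the best response, concluding via $\max_{\bm{a}}\Qt_h^k(s,\bm{a})=\Vt_h^k(s)$. Your explicit observation that only the one-sided bound is needed (and available) under CCE is precisely the point implicit in the paper's argument.
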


\begin{proof}
For each fixed $k$, we prove this by induction
from $h = H + 1$ to $h = 1$. For base case, we know at the $(H + 1)$-th step,$\Vhat_{H+1,i}^k =V_{H+1,i}^{\pi _{-i}^{k},\dagger} =\Qhat_{H+1,i}^{k}=Q_{H+1,i}^{\pi _{-i}^{k},\dagger}=0$. Now, assume the conclusion holds for the $(h + 1)$'th step, for the $h$'th step, by definition of $Q$ -functions,
\begin{align*}
&Q_{h,i}^{\pi _{-i}^{k},\dagger}\left( s,\bm{a} \right)-   \Qhat_{h,i}^k\left( s,\bm{a} \right) \\
\le &\left[ \P_{h}V_{h+1,i}^{\pi _{-i}^{k},\dagger} \right] \left( s,\bm{a} \right) 
 -\left[ \Phat_{h}^{k}\Vhat_{h+1,i}^k \right] \left( s,\bm{a} \right) +\left| r_h(s,a)-\rhat_h^k(s,a)\right|\\
\le &\underset{\left( A \right)}{\underbrace{\Phat_{h}^{k}\left( V_{h+1,i}^{\pi _{-i}^{k},\dagger}-\Vhat_{h+1,i}^k \right) \left( s,\bm{a} \right) }}+
\underset{\left( B \right)}{\underbrace{\left( \P_{h} -\Phat_{h}^{k}\right)V_{h+1,i}^{\pi _{-i}^{k},\dagger}\left( s,\bm{a} \right) +\left| r_h(s,a)-\rhat_h^k(s,a)\right|}}.
\end{align*}

By the induction hypothesis, 
$(A) \le 
(\Phat_{h}^{k} \Vt^k_{h+1})(s,\bm{a})$.

By uniform concentration, $(B) \le \sqrt{SH^2\iota/N_h^k(s,\bm{a})} = \beta_t$. Putting everything together we have
$$ Q_{h,i}^{\pi _{-i}^{k},\dagger}\left( s,\bm{a} \right)  -\Qhat_{h,i}^k\left( s,\bm{a} \right) \le \min\left\{(\Phat_{h}^{k} \Vt^k_{h+1})(s,\bm{a})   + \beta_t, H \right\} =\Qt^k_h(s,\bm{a})  ,$$
which proves the first inequality in \eqref{eq:aug15-1}. It remains to show the inequality for $V$-functions also hold in the $h$'th step.

Since $\pi^k$ is a CCE, we have 
$$
\Vhat_{h,i}^k\left( s \right) \ge \underset{\mu}{\max}\D_{\mu \times \pi _{-i}^{k}}\Qhat_{h,i}^k\left( s \right).
$$

Observe that $V_{h,i}^{\pi _{-i}^{k},\dagger}$ obeys the Bellman optimality equation, so we have
$$
V_{h,i}^{\pi _{-i}^{k},\dagger}\left( s \right) =\underset{\mu}{\max}\D_{\mu \times \pi _{-i}^{k}}Q_{h,i}^{\pi _{-i}^{k},\dagger}\left( s \right) .
$$

Combining the two equations above, and utilizing the bound we just proved for $Q$-functions, 
we obtain
\begin{align*}
V_{h,i}^{\pi _{-i}^{k},\dagger}\left( s \right)-\Vhat_{h,i}^{k}\left( s \right)
\le \underset{\mu}{\max}\D_{\mu \times \pi _{-i}^{k}}Q_{h,i}^{\pi _{-i}^{k},\dagger}\left( s \right)
-
\underset{\mu}{\max}\D_{\mu \times \pi _{-i}^{k}}\Qhat_{h,i}^k\left( s \right)\le \max_{\bm{a}}\Qt^k_h(s,\bm{a})  
= \Vt^k_h(s)  ,
\end{align*}
which completes the whole proof.
 \end{proof}

\subsubsection{CE version} 
 \label{appendix:proof-multi-rewardfree-CE}

  The proof is almost the same as that for Nash equilibriums. 
 We will reuse Lemma \ref{lem:general-rewardfree-1} and prove an analogue of Lemma \ref{lem:general-rewardfree-2}. 
 The conclusion for CEs will follow directly by combining the two lemmas as in the proof of Theorem  \ref{thm:rewardfree-explore}.
 
\begin{lemma}\label{lem:general-rewardfree-4}
With probability $1-p$, for any $(h,s,\bm{a},i)$, $k\in[K]$ and strategy modification $\phi$ for player $i$, we have
\begin{equation}\label{eq:aug15-1-1}
\left\{
\begin{aligned}
	&	  Q_{h,i}^{\phi\circ \pi^k}(s,\bm{a})-\Qhat_{h,i}^k(s,\bm{a}) \le\Qt^k_h(s,\bm{a})  , \\
&	V_{h,i}^{\phi\circ \pi^k}(s)-\Vhat_{h,i}^k(s) \le\Vt^k_h(s)  .
\end{aligned}
\right.
\end{equation}
\end{lemma}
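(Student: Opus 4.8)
The plan is to prove Lemma~\ref{lem:general-rewardfree-4} by backward induction on $h$, reusing verbatim the template of the NE and CCE versions (Lemmas~\ref{lem:general-rewardfree-2} and~\ref{lem:general-rewardfree-3}); the only substantive change is in the $V$-function step, where the best-response operator $\max_{\mu}\D_{\mu\times\pi_{-i}^k}$ is replaced by the strategy-modification operator $\D_{\phi\circ\pi^k}$ and the defining inequality of a correlated equilibrium is invoked in place of the Nash/CCE property. I would first fix an arbitrary modification $\phi\in\Phi_i$ and carry the induction through all $h$ for this fixed $\phi$; since $\phi$ is arbitrary, the conclusion then holds for every $\phi$ simultaneously. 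As stated in the preamble to the lemma, I would also reuse Lemma~\ref{lem:general-rewardfree-1} for the lower-direction bound.

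The base case $h=H+1$ is immediate because $V_{H+1,i}^{\phi\circ\pi^k}=\Vhat_{H+1,i}^k=0$ and likewise for the $Q$-values. For the inductive $Q$-step I would expand
\[
Q_{h,i}^{\phi\circ\pi^k}(s,\bm{a})-\Qhat_{h,i}^k(s,\bm{a}) \le \underbrace{\Phat_h^k\paren{V_{h+1,i}^{\phi\circ\pi^k}-\Vhat_{h+1,i}^k}(s,\bm{a})}_{(A)} + \underbrace{(\P_h-\Phat_h^k)V_{h+1,i}^{\phi\circ\pi^k}(s,\bm{a}) + |r_{h,i}-\rhat_{h,i}^k|(s,\bm{a})}_{(B)},
\]
bound $(A)\le\Phat_h^k\Vt_{h+1}^k(s,\bm{a})$ by the induction hypothesis applied with the same $\phi$, and bound $(B)\le\beta_t=C\sqrt{SH^2\iota/N_h^k(s,\bm{a})}$ by the uniform $\ell_1$ concentration of $\Phat_h^k$ \citep[e.g., Lemma~12 in][]{bai2020provable}. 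The key point is that this concentration holds simultaneously for every value function bounded by $H$, hence for $V_{h+1,i}^{\phi\circ\pi^k}$ regardless of $\phi$; this uniformity is precisely what forces the extra $\sqrt{S}$ in $\beta_t$. Taking the minimum with $H$ then recovers $\Qt_h^k(s,\bm{a})$.

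The $V$-step is the one place where the CE structure is used, and is the main (if mild) obstacle. The fact I need is the per-step inequality $\Vhat_{h,i}^k(s)\ge\D_{\phi\circ\pi_h^k}\Qhat_{h,i}^k(s)$, where $\Qhat_{h,i}^k=Q_{h,i}^{\pi^k}$ in $\widehat{\cM}^k$. I would obtain this by applying Definition~\ref{defn:CEpolicy} to the single-step modification supported on $(h,s)$: for such a modification the continuation is unmodified, so the deviation value collapses to $V_{h,i}^{\phi\circ\pi^k}(s)=\D_{\phi\circ\pi_h^k}\Qhat_{h,i}^k(s)$, and the CE condition bounds it by $\Vhat_{h,i}^k(s)$ (this mirrors how Lemma~\ref{lem:general-UCB-CE} exploits the CE property in the reward-aware case). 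Combining this with $V_{h,i}^{\phi\circ\pi^k}(s)=\D_{\phi\circ\pi_h^k}Q_{h,i}^{\phi\circ\pi^k}(s)$ gives
\[
V_{h,i}^{\phi\circ\pi^k}(s)-\Vhat_{h,i}^k(s)\le\D_{\phi\circ\pi_h^k}\paren{Q_{h,i}^{\phi\circ\pi^k}-\Qhat_{h,i}^k}(s)\le\max_{\bm{a}}\paren{Q_{h,i}^{\phi\circ\pi^k}-\Qhat_{h,i}^k}(s,\bm{a})\le\max_{\bm{a}}\Qt_h^k(s,\bm{a})=\Vt_h^k(s),
\]
where the final equality is the definition of $\Vt_h^k$ in Algorithm~\ref{alg:multi-VI-zero} and the penultimate inequality is the $Q$-bound just established for the same $\phi$. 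This closes the induction; the only point to verify carefully is the derivation of the per-step inequality from an arbitrary Markov CE, which is the CE analogue of the CCE step in Lemma~\ref{lem:general-rewardfree-3}.

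Finally, combining Lemma~\ref{lem:general-rewardfree-4} with the reused Lemma~\ref{lem:general-rewardfree-1} yields $\max_{\phi\in\Phi_i}V_{1,i}^{\phi\circ\pi^k}(s_1)-V_{1,i}^{\pi^k}(s_1)\le 2\Vt_1^k(s_1)$ for every player $i$; feeding this into the multiplayer analogue of Theorem~\ref{thm:UCB-VI} and choosing the episode minimizing $\Vt_1^k(s_1)$ produces the claimed $\epsilon$-approximate CE guarantee exactly as in the NE and CCE cases, requiring no new ideas.
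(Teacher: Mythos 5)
Your proposal is correct and follows essentially the same route as the paper's proof: backward induction on $h$, the identical $(A)+(B)$ decomposition with the uniform ($\sqrt{S}$-inflated) concentration giving $(B)\le\beta_t$, and the CE property of $\pi^k$ in $\widehat{\cM}^k$ entering only through the $V$-step. If anything, your $V$-step is slightly more careful than the paper's: you keep $\phi$ arbitrary and use the one-step-deviation inequality $\Vhat_{h,i}^k(s)\ge \D_{\phi_{h,s}\circ\pi_h^k}\Qhat_{h,i}^k(s)$ together with the plain Bellman equation $V_{h,i}^{\phi\circ\pi^k}(s)=\D_{\phi_{h,s}\circ\pi_h^k}Q_{h,i}^{\phi\circ\pi^k}(s)$, whereas the paper writes Bellman-optimality \emph{equalities} over one-step modifications which, as stated, hold only for the maximizing $\phi$ --- a harmless imprecision, since the bound for the maximizer implies it for every $\phi$.
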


\begin{proof}
For each fixed $k$, we prove this by induction
from $h = H + 1$ to $h = 1$. For the base case, we know at the $(H + 1)$-th step, $\Vhat_{H+1,i}^k =V_{H+1,i}^{\phi\circ \pi^k} =\Qhat_{H+1,i}^{k}=Q_{H+1,i}^{\phi\circ \pi^k}=0$. Now, assume the conclusion holds for the $(h + 1)$'th step, for the $h$'th step, following exactly the same argument as Lemma \ref{lem:general-rewardfree-3}, we can show
$$ Q_{h,i}^{\phi\circ \pi^k}\left( s,\bm{a} \right)  -\Qhat_{h,i}^k\left( s,\bm{a} \right) \le \min\left\{(\Phat_{h}^{k} \Vt^k_{h+1})(s,\bm{a})   + \beta_t, H \right\} =\Qt^k_h(s,\bm{a})  ,$$
which proves the first inequality in \eqref{eq:aug15-1-1}. It remains to show the inequality for $V$-functions also hold in the $h$'th step.

Since $\pi^k$ is a CE, we have 
$$
\Vhat_{h,i}^k\left( s \right) = \underset{\tilde{\phi}_{h,s}}{\max}\D_{\tilde{\phi}_{h,s} \circ\pi ^{k}}\Qhat_{h,i}^k\left( s \right),
$$
where the maximum is take over all possible functions from $\Acal_i$ to itself. 

Observe that $V_{h,i}^{\phi\circ \pi^k}$ obeys the Bellman optimality equation, so we have
$$
V_{h,i}^{\phi\circ \pi^k}\left( s \right) =\underset{\tilde{\phi}_{h,s}}{\max} \D_{\tilde{\phi}_{h,s} \circ\pi ^{k}}Q_{h,i}^{\phi\circ \pi^k}\left( s \right).
$$

Combining the two equations above, and utilizing the bound we just proved for $Q$-functions, 
we obtain
\begin{align*}
V_{h,i}^{\phi\circ \pi^k}\left( s \right)-\Vhat_{h,i}^{k}\left( s \right)
&= \underset{\tilde{\phi}_{h,s}}{\max} \D_{\tilde{\phi}_{h,s} \circ\pi ^{k}}Q_{h,i}^{\phi\circ \pi^k}\left( s \right)
-
\underset{\tilde{\phi}_{h,s}}{\max}\D_{\tilde{\phi}_{h,s} \circ\pi ^{k}}\Qhat_{h,i}^k\left( s \right)\\
&\le \max_{\bm{a}}\Qt^k_h(s,\bm{a})  
= \Vt^k_h(s)  ,
\end{align*}
which completes the whole proof.
 \end{proof}

\end{document}